\newcommand{\eg}{\emph{e.g., }}
\newcommand{\ie}{\emph{i.e., }}
\newcommand{\eat}[1]{}
\newcommand{\TODO}[1]{{\color{red}TODO:{#1}}}
\newcommand\beftext[1]{{\color[rgb]{0.5,0.5,0.5}{BEFORE:#1}}}
\newcommand{\TODO}[1]{}
\newcommand{\beftext}[1]{}
\begin{document}

\title{Physics-Informed Teleconnection-Aware Transformer for Global Subseasonal-to-Seasonal Forecasting}

\author{Tengfei Lyu}
\affiliation{%
  \institution{The Hong Kong University of Science and Technology (Guangzhou)}
  \city{Guangzhou}
  \country{China}}
\email{tlyu077@connect.hkust-gz.edu.cn}

\author{Weijia Zhang}
\affiliation{%
  \institution{The Hong Kong University of Science and Technology (Guangzhou)}
  \city{Guangzhou}
  \country{China}}
\email{wzhang411@connect.hkust-gz.edu.cn}

\author{Hao Liu}
\authornote{Corresponding author.}
\affiliation{%
  \institution{The Hong Kong University of Science and Technology (Guangzhou)}
  \institution{The Hong Kong University of Science and Technology}
  \city{Guangzhou \& Hong Kong}
  \country{China}}
\email{liuh@ust.hk}

\renewcommand{\shortauthors}{Tengfei Lyu, Jindong Han, and Hao Liu.}

\begin{abstract}
Subseasonal-to-seasonal (S2S) forecasting, which predicts climate conditions from several weeks to months in advance, represents a critical frontier for agricultural planning, energy management, and disaster preparedness. However, it remains one of the most challenging problems in atmospheric science, due to the chaotic dynamics of atmospheric systems and complex interactions across multiple scales. Current approaches often fail to explicitly model underlying physical processes and teleconnections that are crucial at S2S timescales. We introduce \textbf{TelePiT}, a novel deep learning architecture that enhances global S2S forecasting through integrated multi-scale physics and teleconnection awareness. Our approach consists of three key components: (1) Spherical Harmonic Embedding, which accurately encodes global atmospheric variables onto spherical geometry; (2) Multi-Scale Physics-Informed Neural ODE, which explicitly captures atmospheric physical processes across multiple learnable frequency bands; (3) Teleconnection-Aware Transformer, which models critical global climate interactions through explicitly modeling teleconnection patterns into the self-attention. 
Extensive experiments demonstrate that \textbf{TelePiT} significantly outperforms state-of-the-art data-driven baselines and operational numerical weather prediction systems across all forecast horizons, marking a significant advance toward reliable S2S forecasting. 
\end{abstract}



\keywords{Global Subseasonal-to-Seasonal Forecasting, Climate Prediction}

\maketitle

\section{Introduction}
\label{Introduction}

Subseasonal-to-seasonal (S2S) forecasting, predicting climate conditions from several weeks to months, occupies a critical frontier in atmospheric science~\cite{robertson2015improving,white2017potential,zhang2025unleashing}. 
By bridging the gap between short-range weather forecasts and seasonal climate projections, S2S outlooks are indispensable for proactive climate resilience and the efficient management of resources in agriculture, energy, and disaster preparedness~\cite{white2017potential,lyu2024autostf,zhang2024irregular,lyu2024nrformer}.
Compared to medium-range forecasting (up to 2 weeks)~\cite{bi2023accurate,lam2023learning}, S2S forecasting encounters significantly greater challenges due to the inherent chaotic dynamics of atmospheric systems and intricate interactions across various scales of Earth's climate system. 
Beyond initial atmospheric conditions, accurate S2S forecasting requires consideration of additional complex factors, such as more complicated and highly coupled physical processes, teleconnections, and boundary interactions. 
These complexities often render existing medium-range forecasting models inadequate for S2S forecasting, leading to a significant decline in predictive skill at extended lead times~\cite{nathaniel2024chaosbench}.

Recent models have delivered promising improvements in both regional and global S2S forecasting tasks. While regional models~\cite{mouatadid2023subseasonalclimateusa,he2022learning,hwang2019improving} have significantly advanced localized forecasting capabilities, they often overlook the crucial influence of long-range correlations in atmospheric conditions, which become increasingly significant at S2S timescales. To achieve global forecasting, Chen et al.\cite{chen2024machine} present FuXi-S2S, an encoder-decoder framework trained on global ERA5 reanalysis data~\cite{hersbach2020era5}, notably enhancing global precipitation forecast. Furthermore, Liu et al.~\cite{liucirt} introduce CirT, a geometry-inspired transformer that effectively models Earth’s spherical structure for S2S forecasting.

However, current methods still exhibit critical research gaps. Firstly, existing works overlook explicit modeling of the underlying physical processes that dictate multi-scale climate dynamics. This oversight stems from reliance on fixed-frequency modeling and insufficient integration of physics-informed mechanisms, which are essential for accurate modeling of fundamental atmospheric processes such as advection, diffusion, and external forcing~\cite{raissi2019physics,karniadakis2021physics,beucler2021enforcing}.
Secondly, many models fail to explicitly represent essential teleconnections (\ie~significant long-range correlations between distant regions), exemplified by climate modes such as North Atlantic Oscillation and Madden–Julian Oscillation. These teleconnections substantially influence global atmospheric conditions and play pivotal roles in determining regional and global climate variability on S2S scales~\cite{trenberth1998atmospheric,ham2019deep}.

To address these limitations, we propose \textbf{TelePiT}, a novel deep learning architecture specifically designed to enhance global S2S forecasting through integrated multi-scale physics and teleconnection awareness. Our approach includes three major components: (1) Spherical Harmonic Embedding, which accurately encodes global atmospheric variables onto spherical geometry, preserving the inherent spatial continuity and enabling the model to learn location-dependent atmospheric processes; (2) Multi-Scale Physics-Informed Ordinary Differential Equation (ODE), explicitly capturing atmospheric physical processes across multiple learnable atmospheric frequency bands by integrating physics-driven constraints within a neural ODE; (3) Teleconnection-Aware Transformer, which leverages a specially designed teleconnection-aware self-attention to model teleconnection patterns explicitly, effectively capturing critical global climate interactions.

Our main contributions are as follows: 
Firstly, we pioneer the explicit integration of physical constraints across multi-scale climate dynamics via Multi-Scale Physics-Informed ODE, simultaneously capturing atmospheric processes operating at different frequencies while preserving fundamental physical laws.
Secondly, we introduce the first explicit modeling of teleconnections through a novel Teleconnection-Aware Transformer, which captures the complex, non-local correlations between distant geographic regions that critically influence forecasting in S2S scales.
Finally, extensive experiments demonstrate that \textbf{TelePiT} significantly outperforms both state-of-the-art data-driven baselines and operational numerical weather prediction systems, consistently delivering enhanced forecasting accuracy and reliability at global scales.

\section{Preliminaries}
\label{sec:Preliminaries}
In this work, we represent the global weather condition on the day $t$ as $\mathbf{X}_t\in \mathbb{R}^{C\times H\times W}$, where $H$ and $W$ correspond to the spatial dimensions along latitude and longitude respectively, and $C$ denotes the number of weather variables (\eg temperature, geopotential height). The latitude-longitude grid is defined as $\mathcal{G}\in \mathbb{R}^{H\times W}$, where each grid point $\mathcal{G}_{h,w}=(\theta_h,\phi_w) \in [-90^{\circ},90^{\circ}]\times[-180^{\circ},180^{\circ}]$.

\textbf{S2S Forecasting.}
Following previous works \cite{chen2024machine,liucirt}, given the initial condition $\mathbf{X}_{t_1}$, the goal is to predict the mean fields of these variables over two future multi-week windows: weeks 3–4 (days 15–28 ) and weeks 5–6 (days 29–42) of the forecast period, as shown in the following:
\begin{equation}
\mathcal{F}_{\Theta}(\mathbf{X}_{t_1})
\rightarrow
(\hat{\mathbf{Y}}_{t_{15}:t_{28}}, \hat{\mathbf{Y}}_{t_{29}:t_{42}})   
\end{equation}
where  $\hat{\mathbf{Y}}_{t_{15}:t_{28}} \in \mathbb{R}^{C\times H\times W}$, and $\hat{\mathbf{Y}}_{t_{29}:t_{42}} \in \mathbb{R}^{C\times H\times W}$ denote the average value over weeks 3-4 and weeks 5-6 respectively. $\mathcal{F}_{\Theta}$ is forecasting model.
S2S forecasting transcends the deterministic chaos limit of weather forecasting (2 weeks), thereby requiring advanced neural architecture to bridge the fundamental gap between chaotic weather dynamics and more predictable climate signals.

\begin{figure*}[t]
    \centering
    \includegraphics[width=1\linewidth]{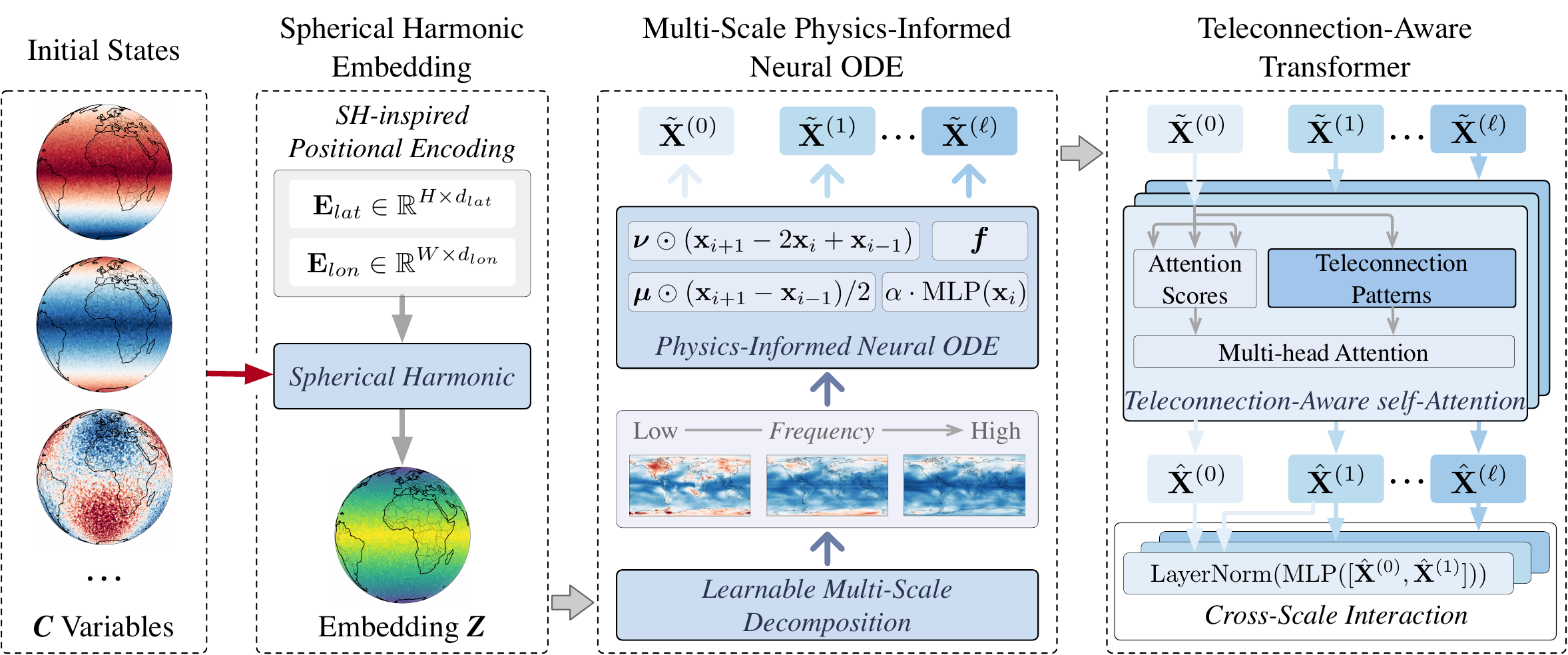}
    \vspace{-0.4cm}
    \caption{The framework of TelePiT for global subseasonal-to-seasonal forecasting.}
    \label{fig:framework}
    \vspace{-0.2cm}
\end{figure*}

\section{Methods}
\label{sec:methods}
Figure \ref{fig:framework} presents the overall architecture of \textbf{TelePiT}, our physics-informed teleconnection-aware transformer for global S2S forecasting.
The model consists of three key components designed to address the fundamental challenges of extending predictability beyond the deterministic chaos limit. 

\subsection{Spherical Harmonic Embedding}
\label{sec:embedding}

To accurately represent global atmospheric variables on the sphere, we introduce a learnable spherical harmonic-inspired positional embedding. 
For a grid point at latitude $\theta_i \in [-\frac{\pi}{2}, \frac{\pi}{2}]$ and longitude $\phi_j \in [-\pi, \pi]$, we construct positional encodings using frequency-scaled sinusoidal functions:
\begin{equation}
\mathbf{E}_{lat}(i,2k) = \sin\big((k+1)\theta_i\big), \quad
\mathbf{E}_{lat}(i,2k+1) = \cos\big((k+1)\theta_i\big),
\label{eq:lat-embed}
\end{equation}
\begin{equation}
\mathbf{E}_{lon}(j,2q) = \sin\big((q+1)\phi_j\big), \quad
\mathbf{E}_{lon}(j,2q+1) = \cos\big((q+1)\phi_j\big),
\label{eq:lon-embed}
\end{equation}
where $k \in \{0,1,...,\frac{d_{lat}}{2}-1\}$ and $q \in \{0,1,...,\frac{d_{lon}}{2}-1\}$. These encodings are initialized as learnable parameters $\mathbf{E}_{lat} \in \mathbb{R}^{H \times d_{lat}}$ and $\mathbf{E}_{lon} \in \mathbb{R}^{W \times d_{lon}}$ where $d_{lat} + d_{lon} = D_{emb}$. 
The sinusoidal functions provide a natural way to represent the periodic structure of spherical coordinates, particularly for longitude which wraps around the globe \cite{atkinson2012spherical,cohen2018spherical}.

We process the latitude dimension as a sequence of tokens while aggregating information along the longitude dimension. For each latitude $i$, we compute a zonal average:
$\mathbf{u}_i = \frac{1}{W}\sum_{j=1}^W X_{i,j} \in \mathbb{R}^{C}$.
This zonal averaging operation is physically justified as atmospheric patterns often exhibit stronger correlations along latitude bands, particularly for large-scale circulation features \cite{baldwin2003stratospheric, kidston2015stratospheric}. The features are then projected to a higher-dimensional latent space through a learnable transformation:
$\mathbf{h}_i = \mathbf{W}\mathbf{u}_i + \mathbf{b} \in \mathbb{R}^{D_{emb}}$,
where $\mathbf{W} \in \mathbb{R}^{D_{emb} \times C}$ and $\mathbf{b} \in \mathbb{R}^{D_{emb}}$ are learnable parameters.

To incorporate spherical harmonic-inspired positional embedding, we combine the latitude embedding with the global average of longitude embeddings:
\begin{equation}
\mathbf{p}_i^{lat} = \mathbf{E}_{lat}(i,:) \in \mathbb{R}^{d_{lat}}, \enspace
\mathbf{p}^{lon} = \frac{1}{W}\sum_{j=1}^W \mathbf{E}_{lon}(j,:) \in \mathbb{R}^{d_{lon}}, 
\end{equation}
\begin{equation}
\mathbf{p}_i = [\mathbf{p}_i^{lat}, \mathbf{p}^{lon}] \in \mathbb{R}^{D_{emb}}.
\end{equation}
The final embedding is computed with the complete embedded sequence obtained by stacking:
\begin{equation}
\mathbf{z}_i = \mathbf{W}_{proj}(\mathbf{h}_i + \mathbf{p}_i) + \mathbf{b}_{proj} \in \mathbb{R}^{D_{emb}},
\end{equation}
\begin{equation}
\mathbf{Z} = [\mathbf{z}_1, \mathbf{z}_2, \ldots, \mathbf{z}_H] \in \mathbb{R}^{H \times D_{emb}}.
\end{equation}
The theoretical foundation of this connects to spherical function representation theory. Mathematically, our zonal averaging and embedding method implements learnable approximation to spherical harmonic decomposition. Traditionally, a function $f(\theta,\phi)$ on sphere can be decomposed as:
\begin{equation}
f(\theta,\phi) = \sum_{\ell=0}^{\infty}\sum_{m=-\ell}^{\ell} a_{\ell m} Y_{\ell}^m(\theta,\phi),
\end{equation}
where $Y_{\ell}^m(\theta,\phi)$ are the spherical harmonic functions and $a_{\ell m}$ are expansion coefficients \cite{atkinson2012spherical}. In particular, when $m=0$, spherical harmonics reduce to zonal harmonics that depend solely on latitude $\theta$, which precisely corresponds to our zonal averaging operation. This approach extracts a sequence of length $H$ as input tokens for subsequent modules, preserving the inductive bias of spherical geometry while enabling the model to adaptively learn optimal representations for atmospheric variables.

\subsection{Multi-Scale Physics-Informed Neural ODE}
\label{sec:wavelet_ode}

\subsubsection{Learnable Multi-Scale Decomposition}
\label{sec:wavelet}
Atmospheric processes span multiple scales that interact in complex ways, creating a fundamental challenge for S2S forecasting. 
For example, when predicting Northern Hemisphere winter circulation, stratospheric polar vortex dynamics (planetary-scale phenomena) exhibit significant downward influence on tropospheric weather patterns (regional-scale outcomes) through scale-bridging mechanisms operating over 2-3 week \cite{baldwin2001stratospheric, kidston2015stratospheric}.
These cross-scale interactions, also documented in observational studies \cite{butler2019predictability,vitart2014evolution}, embody critical physical processes that must be explicitly resolved to overcome current limitations.

To explicitly capture this inherent multi-scale structure, we implement an $L$-level learnable wavelet decomposition $\mathcal{W}$ operating along the latitude dimension of $\mathbf{Z}$. At each level, the current signal is split into a coarse approximation and a detail component, mirroring the discrete wavelet transform but with learnable filters. Formally, let $\mathbf{A}_0 = \mathbf{Z} \in \mathbb{R}^{H\times D_{emb}}$ be the input at the finest scale. For each level $\ell=1,2,\cdots,L$, we apply a neural transform:
\begin{equation}
[\mathbf{A}_\ell, \mathbf{D}_\ell] = \textit{split}\Big(\text{MLP}_\ell(\mathbf{A}_{\ell-1}), \text{dim}=2\Big),
\end{equation}
where $\text{MLP}_\ell: \mathbb{R}^{D_{emb}} \rightarrow \mathbb{R}^{2D_{emb}}$ is a two-layer perceptron with GELU activation, the $\textit{split}(\cdot, \text{dim}=2)$ operation divides the output features into two equal halves. This yields an \emph{approximation} $\mathbf{A}_\ell \in \mathbb{R}^{H\times D_{emb}}$ and a \emph{detail} $\mathbf{D}_\ell \in \mathbb{R}^{H\times D_{emb}}$. The $\mathbf{A}_\ell$ carries the smoother (low-frequency) information, while $\mathbf{D}_\ell$ captures the variations (high-frequency details) removed at level $\ell$.
After $L$ levels, we obtain a final coarse approximation $\mathbf{A}_L$ and detail components $\{\mathbf{D}_\ell\}_{\ell=1}^L$. For convenience, we denote the set of multi-scale features as:
\begin{equation}
\mathcal{X}=\{\mathbf{X}^{(0)}, \mathbf{X}^{(1)}, \dots, \mathbf{X}^{(L)}\},
\end{equation}
where $\mathbf{X}^{(0)} := \mathbf{A}_L$ represents the lowest-frequency band, and $\mathbf{X}^{(\ell)} := \mathbf{D}_{L+1-\ell}$ for $\ell=1,\cdots,L$ (ordering from low to high frequency). Each $\mathbf{X}^{(\ell)} \in \mathbb{R}^{H\times D_{emb}}$ maintains the same dimensions as the original embedding.

This multi-scale decomposition aligns with the established understanding in atmospheric science that different physical processes dominate at different scales \cite{hoskins1985use}, enabling our learnable approach to adaptively determine the optimal decomposition basis for S2S forecasting.

\subsubsection{Physics-Informed Neural ODE}
\label{sec:ode}
To incorporate physical prior knowledge of atmospheric dynamics \cite{kashinath2021physics}, we evolve each frequency band using a physics-informed neural ODE. Drawing from fluid dynamics principles, we formulate an ODE in the latent space \cite{vermaclimode} that combines fundamental atmospheric processes with data-driven corrections to handle complex, unresolved dynamics.

For each multi-scale component $\mathbf{X}^{(\ell)} = [\mathbf{x}_1, \mathbf{x}_2, \cdots, \mathbf{x}_H] \in \mathbb{R}^{H\times D_{emb}}$, we define an ODE that evolves the latent state through time:
\begin{equation}\label{eq:ode}
\frac{d \mathbf{x}_i}{dt} = \gamma \cdot \tanh(\mathbf{R}_i)
\end{equation}
\begin{equation}\label{eq:composite}
{\small
\mathbf{R}_i = \underbrace{\boldsymbol{\nu}\odot(\mathbf{x}_{i+1} - 2\mathbf{x}_i + \mathbf{x}_{i-1})}_{\text{diffusion}} + 
\underbrace{\boldsymbol{\mu}\odot\frac{\mathbf{x}_{i+1} - \mathbf{x}_{i-1}}{2}}_{\text{advection}} + 
\underbrace{\boldsymbol{f}}_{\text{forcing}} + 
\underbrace{\alpha \cdot \text{MLP}(\mathbf{x}_i)}_{\text{neural correction}}
}
\end{equation}
This equation incorporates four physically meaningful components:
\begin{itemize}
    \item {\textbf{Diffusion term}}: $\boldsymbol{\nu}\odot(\mathbf{x}_{i+1} - 2\mathbf{x}_i + \mathbf{x}_{i-1})$ approximates the second-order diffusion operator along latitude, representing the meridional diffusion of atmospheric properties. The learnable parameter $\boldsymbol{\nu} \in \mathbb{R}^{D_{emb}}$ modulates diffusion strength for each feature dimension.
    \item \textbf{Advection term}: $\boldsymbol{\mu}\odot\frac{\mathbf{x}_{i+1} - \mathbf{x}_{i-1}}{2}$ approximates the first-order spatial derivative, representing meridional transport by mean flow. The parameter $\boldsymbol{\mu} \in \mathbb{R}^{D_{emb}}$ controls advection intensity.
    \item \textbf{Forcing term}: $\boldsymbol{f} \in \mathbb{R}^{D_{emb}}$ represents external forcing, analogous to radiative and diabatic processes in the climate system.
    \item \textbf{Neural correction}: $\alpha \cdot \text{MLP}(\mathbf{x}_i)$ learns complex, nonlinear dynamics not captured by simplified physical terms. Parameter $\alpha$ balances the physical terms with data-driven corrections.
\end{itemize}
For numerical stability and physical realism, we apply boundary conditions $\mathbf{x}_0 = \mathbf{x}_{H+1} = 0$ (zero-padding at the poles) and constrain the rate of change using a hyperbolic tangent function. $\gamma$ is a small scaling factor that moderates the rate of change to prevent numerical instabilities.
We employ Euler with a fixed step size $\Delta t$:
\begin{equation}
\mathbf{x}_i(t+\Delta t) = \mathbf{x}_i(t) + \Delta t \cdot \frac{d\mathbf{x}_i}{dt}\Big|_t.
\end{equation}
We evolve each frequency band $\mathbf{X}^{(\ell)}$ independently from time $t=0$ to $t=T$ (where $T$ represents one forecast step in latent space), yielding evolved representations $\tilde{\mathbf{X}}^{(\ell)}$. This scale-dependent evolution reflects physical reality that atmospheric processes operate differently across scales, diffusion dominates at smaller scales while advection plays a more significant role at larger scales \cite{vallis2017atmospheric}.

\subsection{Teleconnection-Aware Transformer}
\label{sec:transformer}

\subsubsection{Teleconnection-Aware Self-Attention}
Teleconnections, statistically significant correlations between weather and climate events at distant locations, such as the North Atlantic Oscillation (NAO) and Madden-Julian Oscillation (MJO), are fundamental for accurate S2S forecasting \cite{wallace1981teleconnections}. 
We encode these global patterns through a novel teleconnection-aware self-attention mechanism.

For each frequency band $\tilde{\mathbf{X}}^{(\ell)} \in \mathbb{R}^{H\times D_{emb}}$, we first compute standard queries, keys, and values: $\mathbf{Q} = \tilde{\mathbf{X}}^{(\ell)}\mathbf{W}^Q$,  $\mathbf{K} = \tilde{\mathbf{X}}^{(\ell)}\mathbf{W}^K$, $\mathbf{V} = \tilde{\mathbf{X}}^{(\ell)}\mathbf{W}^V$, where $\mathbf{W}^Q, \mathbf{W}^K, \mathbf{W}^V \in \mathbb{R}^{D_{emb} \times d_k}$ are learnable projections and $d_k = D_{emb}/N_h$ for $N_h$ heads.

Our key innovation is introducing the teleconnection patterns to explicitly modulate attention mechanisms. We compute a global teleconnection vector that captures dominant climate modes:
\begin{equation}
\mathbf{c} = \sum_{j=1}^{n_p} \omega_j \mathbf{P}_j, \quad \text{with} \quad \boldsymbol{\omega} = \text{softmax}(\mathbf{\bar{x}}\mathbf{W}^p) \in \mathbb{R}^{n_p},
\label{eq:global-pattern}
\end{equation}
where $\mathbf{\bar{x}} = \frac{1}{H}\sum_{i=1}^H \tilde{\mathbf{x}}_i$ is the global atmosphere state, $\mathbf{W}^p \in \mathbb{R}^{D_{emb}\times n_p}$ projects this state onto $n_p$ learnable coefficients, and $\mathbf{P}_1,\ldots,\mathbf{P}_{n_p} \in \mathbb{R}^{D_{emb}}$ are vectors that represent the teleconnection patterns, which typically can be obtained by manual extraction or automatic learning from data.
We project $\mathbf{c}$ into query space to create a teleconnection query $\mathbf{q}^{tel} = \mathbf{c}\mathbf{W}^Q \in \mathbb{R}^{d_k}$ and compute attention scores $b_j$ between this query and all latitude keys. These scores are added as a bias to the standard attention logits:
\begin{equation}
b_j = \frac{1}{\sqrt{d_k}}(\mathbf{q}^{tel} \cdot \mathbf{K}_j), \enspace \tilde{A}_{ij} = \frac{1}{\sqrt{d_k}}(\mathbf{Q}_i \cdot \mathbf{K}_j) + \lambda \cdot b_j,
\label{eq:tel-attn}
\end{equation}
where $\lambda$ is a hyperparameter controlling teleconnection influence.
This mechanism biases attention toward active global teleconnection patterns. For instance, during an MJO event, the model can emphasize connections between the tropical Indian Ocean/western Pacific and remote affected regions \cite{zhang2005madden}, empowering teleconnection-aware forecasting.

\subsubsection{Cross-Scale Interaction}
Atmospheric processes operate across multiple scales with complex interactions \cite{majda2007multiscale}. We enable cross-scale information flow through a hierarchical fusion mechanism. First, each frequency band is processed by a teleconnection-aware Transformer: $\hat{\mathbf{X}}^{(\ell)} = \textit{TA-Transformer}_\ell(\tilde{\mathbf{X}}^{(\ell)})$. 
We then implement cross-scale fusion from larger to smaller scales: 
\begin{equation}
\hat{\mathbf{X}}^{(\ell+1)} = \text{LayerNorm}(\text{MLP}([\hat{\mathbf{X}}^{(\ell)}, \hat{\mathbf{X}}^{(\ell+1)}])), 
\end{equation}
where $[\cdot,\cdot]$ denotes concatenation. 
Finally, we integrate information across all scales: 
\begin{equation}
    \mathbf{Z}_{final} = \frac{1}{L+1}\sum_{\ell=0}^{L} \hat{\mathbf{X}}^{(\ell)},
\end{equation}
yielding a representation that captures atmospheric dynamics across all scales.

\subsection{S2S Forecasting}
\label{sec:prediction}

For each latitude $i$, we process its feature vector  as follows:
\begin{equation}
\mathbf{z}^{final}_i = \mathbf{Z}_{final}(i,:) \in \mathbb{R}^{D_{emb}},
\end{equation}
\begin{equation}
\mathbf{y}_i = \mathbf{W}^{(2)}\text{GELU}(\mathbf{W}^{(1)}\mathbf{z}^{final}_i + \mathbf{b}^{(1)}) + \mathbf{b}^{(2)}, 
\end{equation}
where $\mathbf{W}^{(1)} \in \mathbb{R}^{D_{emb}\times 2D_{emb}}$ and $\mathbf{W}^{(2)} \in \mathbb{R}^{2D_{emb}\times (2CW)}$.
The output $\mathbf{y}_i \in \mathbb{R}^{2CW}$ is reshaped into a tensor representing predictions for both horizons at all longitudes for this latitude. Stacking these across all latitudes yields the final forecast fields $\hat{\mathbf{Y}}^{(1)}_{t_{15}:t_{28}}, \hat{\mathbf{Y}}^{(2)}_{t_{29}:t_{42}} \in \mathbb{R}^{C \times H \times W}$ for the two target periods.
We train the model using the loss function:
\begin{equation}
\mathcal{L}
= \frac{1}{2 C H W}
\Bigl(
    \left\lVert \hat{\mathbf{Y}}^{(1)}_{t_{15}:t_{28}} - \mathbf{Y}^{(1)}_{t_{15}:t_{28}} \right\rVert_2^{2}
    + 
    \left\lVert \hat{\mathbf{Y}}^{(2)}_{t_{29}:t_{42}} - \mathbf{Y}^{(2)}_{t_{29}:t_{42}} \right\rVert_2^{2}
\Bigr).
\end{equation}

To establish the mathematical foundations of our \textbf{TelePiT} and ensure its theoretical rigor and consistency, we provide a comprehensive theoretical analysis in Appendix \ref{appendix_sec:Theoretical Analysis}.

\begin{table*}[t] 
\let\oldeverydisplay\everydisplay
\let\oldeverymath\everymath
\everydisplay{}
\everymath{}
\definecolor{lightblue}{RGB}{255, 255, 255}  
\definecolor{lightred}{RGB}{255, 255, 255}   
\setlength{\tabcolsep}{5.3pt} 
\renewcommand{\arraystretch}{0.65} 
\caption{Performance comparison of \textbf{TelePiT} against data-driven models. 
Lower RMSE ($\downarrow$) indicates better accuracy, while higher ACC ($\uparrow$) indicates better correlation with ground truth. 
}
\vspace{-0.3cm}
\label{tab:main}
  \begin{tabular}{cc|cccccc|cccccc}
    \toprule
    \multirow{2}{*}{\textbf{}} &\multirow{2}{*}{\textbf{Variable}} & \multicolumn{6}{c|}{\textbf{RMSE ($\downarrow$)}} & \multicolumn{6}{c}{\textbf{ACC ($\uparrow$)}}\\ 
    &  &  FCNV2 &  GC & Pangu & ClimaX & CirT & \textbf{TelePiT} &  FCN-V2 &  GC & Pangu & ClimaX & CirT & \textbf{TelePiT}\\
    
    \midrule
    \multirow{7}{*}{\rotatebox{90}{\textbf{Weeks 3-4}}} & \multicolumn{1}{>{\columncolor{lightblue}}c|}{\text{ z500 ($gpm$) }}  
    &  \multicolumn{1}{>{\columncolor{lightblue}}c}{62.454}   &  \multicolumn{1}{>{\columncolor{lightblue}}c}{63.119}   & \multicolumn{1}{>{\columncolor{lightblue}}c}{65.944}                  
    & \multicolumn{1}{>{\columncolor{lightblue}}c}{63.243}    & \multicolumn{1}{>{\columncolor{lightblue}}c}{53.807}    & \multicolumn{1}{>{\columncolor{lightblue}}c|}{\textbf{48.671}}        
    & \multicolumn{1}{>{\columncolor{lightblue}}c}{0.968}     & \multicolumn{1}{>{\columncolor{lightblue}}c}{0.973}     & \multicolumn{1}{>{\columncolor{lightblue}}c}{0.971}                   
    & \multicolumn{1}{>{\columncolor{lightblue}}c}{0.975}     & \multicolumn{1}{>{\columncolor{lightblue}}c}{0.981}     & \multicolumn{1}{>{\columncolor{lightblue}}c}{\textbf{0.985}}  \\
    
    \multicolumn{1}{c}{} & \multicolumn{1}{>{\columncolor{lightred}}c|}{\text{ z850 ($gpm$) }}   
    & \multicolumn{1}{>{\columncolor{lightred}}c}{41.555}     & \multicolumn{1}{>{\columncolor{lightred}}c}{42.706}     & \multicolumn{1}{>{\columncolor{lightred}}c}{43.007}                   
    & \multicolumn{1}{>{\columncolor{lightred}}c}{39.679}     & \multicolumn{1}{>{\columncolor{lightred}}c}{34.006}     & \multicolumn{1}{>{\columncolor{lightred}}c|}{{\textbf{31.082}}}       
    & \multicolumn{1}{>{\columncolor{lightred}}c}{0.923}      & \multicolumn{1}{>{\columncolor{lightred}}c}{0.929}      & \multicolumn{1}{>{\columncolor{lightred}}c}{0.928}                    
    & \multicolumn{1}{>{\columncolor{lightred}}c}{0.942}      & \multicolumn{1}{>{\columncolor{lightred}}c}{0.955}      & \multicolumn{1}{>{\columncolor{lightred}}c}{\textbf{0.963}}  \\
    
    & \multicolumn{1}{>{\columncolor{lightblue}}c|}{\text{ t500 ($K$) }}          
    & \multicolumn{1}{>{\columncolor{lightblue}}c}{2.065}     & \multicolumn{1}{>{\columncolor{lightblue}}c}{2.147}     & \multicolumn{1}{>{\columncolor{lightblue}}c}{2.254}    
    & \multicolumn{1}{>{\columncolor{lightblue}}c}{2.208}     & \multicolumn{1}{>{\columncolor{lightblue}}c}{1.797}     & \multicolumn{1}{>{\columncolor{lightblue}}c|}{\textbf{1.684}}       
    & \multicolumn{1}{>{\columncolor{lightblue}}c}{0.979}     & \multicolumn{1}{>{\columncolor{lightblue}}c}{0.980}     & \multicolumn{1}{>{\columncolor{lightblue}}c}{0.979}   
    & \multicolumn{1}{>{\columncolor{lightblue}}c}{0.980}     & \multicolumn{1}{>{\columncolor{lightblue}}c}{0.986}     & \multicolumn{1}{>{\columncolor{lightblue}}c}{\textbf{0.988}}  \\
    
    \multicolumn{1}{c}{} & \multicolumn{1}{>{\columncolor{lightred}}c|}{\text{ t850 ($K$) }}          
    & \multicolumn{1}{>{\columncolor{lightred}}c}{2.429}      & \multicolumn{1}{>{\columncolor{lightred}}c}{2.414}      & \multicolumn{1}{>{\columncolor{lightred}}c}{2.630}    
    & \multicolumn{1}{>{\columncolor{lightred}}c}{2.759}      & \multicolumn{1}{>{\columncolor{lightred}}c}{2.051}      & \multicolumn{1}{>{\columncolor{lightred}}c|}{\textbf{1.873}}       
    & \multicolumn{1}{>{\columncolor{lightred}}c}{0.978}      & \multicolumn{1}{>{\columncolor{lightred}}c}{0.982}      & \multicolumn{1}{>{\columncolor{lightred}}c}{0.980}   
    & \multicolumn{1}{>{\columncolor{lightred}}c}{0.978}      & \multicolumn{1}{>{\columncolor{lightred}}c}{0.986}      & \multicolumn{1}{>{\columncolor{lightred}}c}{\textbf{0.990}}  \\
    
    & \multicolumn{1}{>{\columncolor{lightblue}}c|}{\text{ t2m ($K$) }}           
    & \multicolumn{1}{>{\columncolor{lightblue}}c}{--}        & \multicolumn{1}{>{\columncolor{lightblue}}c}{--}        & \multicolumn{1}{>{\columncolor{lightblue}}c}{--}       
    & \multicolumn{1}{>{\columncolor{lightblue}}c}{60.894}    & \multicolumn{1}{>{\columncolor{lightblue}}c}{28.526}    & \multicolumn{1}{>{\columncolor{lightblue}}c|}{\textbf{12.057}}      
    & \multicolumn{1}{>{\columncolor{lightblue}}c}{--}        & \multicolumn{1}{>{\columncolor{lightblue}}c}{--}        & \multicolumn{1}{>{\columncolor{lightblue}}c}{--}      
    & \multicolumn{1}{>{\columncolor{lightblue}}c}{0.889}     & \multicolumn{1}{>{\columncolor{lightblue}}c}{0.977}     & \multicolumn{1}{>{\columncolor{lightblue}}c}{\textbf{0.996}}  \\
    
    \multicolumn{1}{c}{}& \multicolumn{1}{>{\columncolor{lightred}}c|}{\text{ u10 (m/s) }}           
    & \multicolumn{1}{>{\columncolor{lightred}}c}{4.217}      & \multicolumn{1}{>{\columncolor{lightred}}c}{--}         & \multicolumn{1}{>{\columncolor{lightred}}c}{4.000}    
    & \multicolumn{1}{>{\columncolor{lightred}}c}{0.829}      & \multicolumn{1}{>{\columncolor{lightred}}c}{0.567}      & \multicolumn{1}{>{\columncolor{lightred}}c|}{{\textbf{0.491}}}       
    & \multicolumn{1}{>{\columncolor{lightred}}c}{0.322}      & \multicolumn{1}{>{\columncolor{lightred}}c}{--}         & \multicolumn{1}{>{\columncolor{lightred}}c}{0.335}   
    & \multicolumn{1}{>{\columncolor{lightred}}c}{0.784}      & \multicolumn{1}{>{\columncolor{lightred}}c}{0.907}      & \multicolumn{1}{>{\columncolor{lightred}}c}{\textbf{0.928}}  \\
    
    & \multicolumn{1}{>{\columncolor{lightblue}}c|}{\text{ v10 (m/s) }}           
    & \multicolumn{1}{>{\columncolor{lightblue}}c}{2.403}     & \multicolumn{1}{>{\columncolor{lightblue}}c}{--}        & \multicolumn{1}{>{\columncolor{lightblue}}c}{2.322}    
    & \multicolumn{1}{>{\columncolor{lightblue}}c}{0.773}     & \multicolumn{1}{>{\columncolor{lightblue}}c}{0.531}     & \multicolumn{1}{>{\columncolor{lightblue}}c|}{\textbf{0.464}}       
    & \multicolumn{1}{>{\columncolor{lightblue}}c}{0.486}     & \multicolumn{1}{>{\columncolor{lightblue}}c}{--}        & \multicolumn{1}{>{\columncolor{lightblue}}c}{0.488}  
    & \multicolumn{1}{>{\columncolor{lightblue}}c}{0.791}     & \multicolumn{1}{>{\columncolor{lightblue}}c}{0.916}     & \multicolumn{1}{>{\columncolor{lightblue}}c}{\textbf{0.934}}  \\
    
    \midrule
    
    \multirow{7}{*}{\rotatebox{90}{\textbf{Weeks 5-6}}}
    & \multicolumn{1}{>{\columncolor{lightblue}}c|}{\text{ z500 ($gpm$) }}  
    & \multicolumn{1}{>{\columncolor{lightblue}}c}{65.604}    & \multicolumn{1}{>{\columncolor{lightblue}}c}{--}        & \multicolumn{1}{>{\columncolor{lightblue}}c}{75.838}   
    & \multicolumn{1}{>{\columncolor{lightblue}}c}{64.433}    & \multicolumn{1}{>{\columncolor{lightblue}}c}{52.969}    & \multicolumn{1}{>{\columncolor{lightblue}}c|}{\textbf{47.761}}      
    & \multicolumn{1}{>{\columncolor{lightblue}}c}{0.966}     & \multicolumn{1}{>{\columncolor{lightblue}}c}{--}        & \multicolumn{1}{>{\columncolor{lightblue}}c}{0.965}   
    & \multicolumn{1}{>{\columncolor{lightblue}}c}{0.974}     & \multicolumn{1}{>{\columncolor{lightblue}}c}{0.981}     & \multicolumn{1}{>{\columncolor{lightblue}}c}{\textbf{0.985}}  \\
    
    \multicolumn{1}{c}{}& \multicolumn{1}{>{\columncolor{lightred}}c|}{\text{ z850 ($gpm$) }}  
    & \multicolumn{1}{>{\columncolor{lightred}}c}{43.327}     & \multicolumn{1}{>{\columncolor{lightred}}c}{--}         & \multicolumn{1}{>{\columncolor{lightred}}c}{47.317}   
    & \multicolumn{1}{>{\columncolor{lightred}}c}{39.831}     & \multicolumn{1}{>{\columncolor{lightred}}c}{33.502}     & \multicolumn{1}{>{\columncolor{lightred}}c|}{{\textbf{30.405}}}      
    & \multicolumn{1}{>{\columncolor{lightred}}c}{0.919}      & \multicolumn{1}{>{\columncolor{lightred}}c}{--}         & \multicolumn{1}{>{\columncolor{lightred}}c}{0.915}   
    & \multicolumn{1}{>{\columncolor{lightred}}c}{0.942}      & \multicolumn{1}{>{\columncolor{lightred}}c}{0.956}      & \multicolumn{1}{>{\columncolor{lightred}}c}{\textbf{0.964}}  \\
    
    & \multicolumn{1}{>{\columncolor{lightblue}}c|}{\text{ t500 ($K$) }}          
    & \multicolumn{1}{>{\columncolor{lightblue}}c}{2.196}     & \multicolumn{1}{>{\columncolor{lightblue}}c}{--}        & \multicolumn{1}{>{\columncolor{lightblue}}c}{2.784}    
    & \multicolumn{1}{>{\columncolor{lightblue}}c}{2.281}     & \multicolumn{1}{>{\columncolor{lightblue}}c}{1.779}     & \multicolumn{1}{>{\columncolor{lightblue}}c|}{\textbf{1.698}}       
    & \multicolumn{1}{>{\columncolor{lightblue}}c}{0.977}     & \multicolumn{1}{>{\columncolor{lightblue}}c}{--}        & \multicolumn{1}{>{\columncolor{lightblue}}c}{0.971}   
    & \multicolumn{1}{>{\columncolor{lightblue}}c}{0.979}     & \multicolumn{1}{>{\columncolor{lightblue}}c}{0.985}     & \multicolumn{1}{>{\columncolor{lightblue}}c}{\textbf{0.988}}  \\
    
    \multicolumn{1}{c}{}& \multicolumn{1}{>{\columncolor{lightred}}c|}{\text{ t850 ($K$) }}          
    & \multicolumn{1}{>{\columncolor{lightred}}c}{2.566}      & \multicolumn{1}{>{\columncolor{lightred}}c}{--}         & \multicolumn{1}{>{\columncolor{lightred}}c}{3.013}    
    & \multicolumn{1}{>{\columncolor{lightred}}c}{2.819}      & \multicolumn{1}{>{\columncolor{lightred}}c}{2.056}      & \multicolumn{1}{>{\columncolor{lightred}}c|}{{\textbf{1.894}}}       
    & \multicolumn{1}{>{\columncolor{lightred}}c}{0.957}      & \multicolumn{1}{>{\columncolor{lightred}}c}{--}         & \multicolumn{1}{>{\columncolor{lightred}}c}{0.974}   
    & \multicolumn{1}{>{\columncolor{lightred}}c}{0.977}      & \multicolumn{1}{>{\columncolor{lightred}}c}{0.986}      & \multicolumn{1}{>{\columncolor{lightred}}c}{\textbf{0.990}}  \\
    
    & \multicolumn{1}{>{\columncolor{lightblue}}c|}{\text{ t2m ($K$) }}           
    & \multicolumn{1}{>{\columncolor{lightblue}}c}{--}        & \multicolumn{1}{>{\columncolor{lightblue}}c}{--}        & \multicolumn{1}{>{\columncolor{lightblue}}c}{--}       
    & \multicolumn{1}{>{\columncolor{lightblue}}c}{60.915}    & \multicolumn{1}{>{\columncolor{lightblue}}c}{28.556}    & \multicolumn{1}{>{\columncolor{lightblue}}c|}{\textbf{12.063}}      
    & \multicolumn{1}{>{\columncolor{lightblue}}c}{--}        & \multicolumn{1}{>{\columncolor{lightblue}}c}{--}        & \multicolumn{1}{>{\columncolor{lightblue}}c}{--}      
    & \multicolumn{1}{>{\columncolor{lightblue}}c}{0.889}     & \multicolumn{1}{>{\columncolor{lightblue}}c}{0.977}     & \multicolumn{1}{>{\columncolor{lightblue}}c}{\textbf{0.996}}  \\
    
    \multicolumn{1}{c}{}& \multicolumn{1}{>{\columncolor{lightred}}c|}{\text{ u10 (m/s) }}           
    & \multicolumn{1}{>{\columncolor{lightred}}c}{4.232}      & \multicolumn{1}{>{\columncolor{lightred}}c}{--}         & \multicolumn{1}{>{\columncolor{lightred}}c}{4.074}    
    & \multicolumn{1}{>{\columncolor{lightred}}c}{0.832}      & \multicolumn{1}{>{\columncolor{lightred}}c}{0.569}      & \multicolumn{1}{>{\columncolor{lightred}}c|}{{\textbf{0.490}}}       
    & \multicolumn{1}{>{\columncolor{lightred}}c}{0.321}      & \multicolumn{1}{>{\columncolor{lightred}}c}{--}         & \multicolumn{1}{>{\columncolor{lightred}}c}{0.321}   
    & \multicolumn{1}{>{\columncolor{lightred}}c}{0.784}      & \multicolumn{1}{>{\columncolor{lightred}}c}{0.906}      & \multicolumn{1}{>{\columncolor{lightred}}c}{\textbf{0.928}}  \\
    
    & \multicolumn{1}{>{\columncolor{lightblue}}c|}{\text{ v10 (m/s) }}           
    & \multicolumn{1}{>{\columncolor{lightblue}}c}{2.428}     & \multicolumn{1}{>{\columncolor{lightblue}}c}{--}        & \multicolumn{1}{>{\columncolor{lightblue}}c}{2.314}    
    & \multicolumn{1}{>{\columncolor{lightblue}}c}{0.781}     & \multicolumn{1}{>{\columncolor{lightblue}}c}{0.527}     & \multicolumn{1}{>{\columncolor{lightblue}}c|}{\textbf{0.466}}       
    & \multicolumn{1}{>{\columncolor{lightblue}}c}{0.479}     & \multicolumn{1}{>{\columncolor{lightblue}}c}{--}        & \multicolumn{1}{>{\columncolor{lightblue}}c}{0.485}   
    & \multicolumn{1}{>{\columncolor{lightblue}}c}{0.785}     & \multicolumn{1}{>{\columncolor{lightblue}}c}{0.916}     & \multicolumn{1}{>{\columncolor{lightblue}}c}{\textbf{0.934}}  \\
    
    \bottomrule
  \end{tabular}
\let\everydisplay\oldeverydisplay
\let\everymath\oldeverymath
\end{table*}

\begin{table*}[t] 
\let\oldeverydisplay\everydisplay
\let\oldeverymath\everymath
\everydisplay{}
\everymath{}
\definecolor{lightblue}{RGB}{255, 255, 255}  
\definecolor{lightred}{RGB}{255,255,255}   
\definecolor{orange}{RGB}{229, 152, 102}       

\setlength{\tabcolsep}{6pt} 
\renewcommand{\arraystretch}{0.65} 
\caption{Comparison of physical consistency using SpecDiv metric (lower is better $\downarrow$).}
\vspace{-0.3cm}
\label{tab:main_SpecDiv}
  \begin{tabular}{c|cccccc|cccccc}
    \toprule
    \multirow{2}{*}{\textbf{Variable}} & \multicolumn{6}{c|}{\textbf{Weeks 3-4}} & \multicolumn{6}{c}{\textbf{Weeks 5-6}}\\ 
    &  FCNV2 &  GC & Pangu & ClimaX & CirT & \textbf{TelePiT} &  FCN-V2 &  GC & Pangu & ClimaX & CirT & \textbf{TelePiT}  \\
    
    \midrule
    \multicolumn{1}{>{\columncolor{lightblue}}c|}{\text{ z500 ($gpm$) }}  & \multicolumn{1}{>{\columncolor{lightblue}}c}{0.1374}    & \multicolumn{1}{>{\columncolor{lightblue}}c}{0.0742}   & \multicolumn{1}{>{\columncolor{lightblue}}c}{0.6081}    & \multicolumn{1}{>{\columncolor{lightblue}}c}{0.3335}   & \multicolumn{1}{>{\columncolor{lightblue}}c}{0.0286}   & \multicolumn{1}{>{\columncolor{lightblue}}c|}{\textbf{0.0180}}      & \multicolumn{1}{>{\columncolor{lightblue}}c}{0.1451}   & \multicolumn{1}{>{\columncolor{lightblue}}c}{--}     & \multicolumn{1}{>{\columncolor{lightblue}}c}{0.6793}   & \multicolumn{1}{>{\columncolor{lightblue}}c}{0.3567}   & \multicolumn{1}{>{\columncolor{lightblue}}c}{0.0664}   & \multicolumn{1}{>{\columncolor{lightblue}}c}{\textbf{0.0161}}  \\
    \multicolumn{1}{>{\columncolor{lightred}}c|}{\text{ z850 ($gpm$) }}  & \multicolumn{1}{>{\columncolor{lightred}}c}{0.2537}    & \multicolumn{1}{>{\columncolor{lightred}}c}{0.3337}   & \multicolumn{1}{>{\columncolor{lightred}}c}{0.1448}    & \multicolumn{1}{>{\columncolor{lightred}}c}{0.0196}   & \multicolumn{1}{>{\columncolor{lightred}}c}{0.1286}   & \multicolumn{1}{>{\columncolor{lightred}}c|}{\textbf{0.0175}}      & \multicolumn{1}{>{\columncolor{lightred}}c}{0.2833}   & \multicolumn{1}{>{\columncolor{lightred}}c}{--}     & \multicolumn{1}{>{\columncolor{lightred}}c}{0.1502}   & \multicolumn{1}{>{\columncolor{lightred}}c}{0.0568}   & \multicolumn{1}{>{\columncolor{lightred}}c}{0.2144}   & \multicolumn{1}{>{\columncolor{lightred}}c}{\textbf{0.0168}}  \\
    \multicolumn{1}{>{\columncolor{lightblue}}c|}{\text{ t500 ($K$) }}          & \multicolumn{1}{>{\columncolor{lightblue}}c}{0.4010}    & \multicolumn{1}{>{\columncolor{lightblue}}c}{0.0613}   & \multicolumn{1}{>{\columncolor{lightblue}}c}{0.2040}    & \multicolumn{1}{>{\columncolor{lightblue}}c}{0.3391}   & \multicolumn{1}{>{\columncolor{lightblue}}c}{0.0948}   & \multicolumn{1}{>{\columncolor{lightblue}}c|}{\textbf{0.0150}}     & \multicolumn{1}{>{\columncolor{lightblue}}c}{0.3923}   & \multicolumn{1}{>{\columncolor{lightblue}}c}{--}     & \multicolumn{1}{>{\columncolor{lightblue}}c}{0.2160}   & \multicolumn{1}{>{\columncolor{lightblue}}c}{0.3104}   & \multicolumn{1}{>{\columncolor{lightblue}}c}{0.2310}   & \multicolumn{1}{>{\columncolor{lightblue}}c}{\textbf{0.0184}} \\
    \multicolumn{1}{>{\columncolor{lightred}}c|}{\text{ t850 ($K$) }}          & \multicolumn{1}{>{\columncolor{lightred}}c}{0.2919}    & \multicolumn{1}{>{\columncolor{lightred}}c}{0.0684}   & \multicolumn{1}{>{\columncolor{lightred}}c}{0.2107}    & \multicolumn{1}{>{\columncolor{lightred}}c}{0.1561}   & \multicolumn{1}{>{\columncolor{lightred}}c}{0.0256}   & \multicolumn{1}{>{\columncolor{lightred}}c|}{\textbf{0.0045}}      & \multicolumn{1}{>{\columncolor{lightred}}c}{0.2901}   & \multicolumn{1}{>{\columncolor{lightred}}c}{--}     & \multicolumn{1}{>{\columncolor{lightred}}c}{0.2194}   & \multicolumn{1}{>{\columncolor{lightred}}c}{0.1582}   & \multicolumn{1}{>{\columncolor{lightred}}c}{0.0173}   & \multicolumn{1}{>{\columncolor{lightred}}c}{\textbf{0.0042}}  \\
    \multicolumn{1}{>{\columncolor{lightblue}}c|}{\text{ t2m ($K$) }}           & \multicolumn{1}{>{\columncolor{lightblue}}c}{--}        & \multicolumn{1}{>{\columncolor{lightblue}}c}{--}       & \multicolumn{1}{>{\columncolor{lightblue}}c}{--}        & \multicolumn{1}{>{\columncolor{lightblue}}c}{0.4706}   & \multicolumn{1}{>{\columncolor{lightblue}}c}{0.0983}   & \multicolumn{1}{>{\columncolor{lightblue}}c|}{\textbf{0.0018}}       & \multicolumn{1}{>{\columncolor{lightblue}}c}{--}       & \multicolumn{1}{>{\columncolor{lightblue}}c}{--}     & \multicolumn{1}{>{\columncolor{lightblue}}c}{--}       & \multicolumn{1}{>{\columncolor{lightblue}}c}{0.4744}   & \multicolumn{1}{>{\columncolor{lightblue}}c}{0.0926}   & \multicolumn{1}{>{\columncolor{lightblue}}c}{\textbf{0.0018}} \\
    \multicolumn{1}{>{\columncolor{lightred}}c|}{\text{ u10 (m/s) }}           & \multicolumn{1}{>{\columncolor{lightred}}c}{0.2229}    & \multicolumn{1}{>{\columncolor{lightred}}c}{--}       & \multicolumn{1}{>{\columncolor{lightred}}c}{0.0576}    & \multicolumn{1}{>{\columncolor{lightred}}c}{0.2614}   & \multicolumn{1}{>{\columncolor{lightred}}c}{0.0482}   & \multicolumn{1}{>{\columncolor{lightred}}c|}{\textbf{0.0020}}      & \multicolumn{1}{>{\columncolor{lightred}}c}{0.2128}   & \multicolumn{1}{>{\columncolor{lightred}}c}{--}     & \multicolumn{1}{>{\columncolor{lightred}}c}{0.0559}   & \multicolumn{1}{>{\columncolor{lightred}}c}{0.2796}   & \multicolumn{1}{>{\columncolor{lightred}}c}{0.0460}   & \multicolumn{1}{>{\columncolor{lightred}}c}{\textbf{0.0026}} \\
    \multicolumn{1}{>{\columncolor{lightblue}}c|}{\text{ v10 (m/s) }}           & \multicolumn{1}{>{\columncolor{lightblue}}c}{0.4131}    & \multicolumn{1}{>{\columncolor{lightblue}}c}{--}       & \multicolumn{1}{>{\columncolor{lightblue}}c}{0.2156}    & \multicolumn{1}{>{\columncolor{lightblue}}c}{0.1256}   & \multicolumn{1}{>{\columncolor{lightblue}}c}{0.0577}   & \multicolumn{1}{>{\columncolor{lightblue}}c|}{\textbf{0.0061}}     & \multicolumn{1}{>{\columncolor{lightblue}}c}{0.4098}   & \multicolumn{1}{>{\columncolor{lightblue}}c}{--}     & \multicolumn{1}{>{\columncolor{lightblue}}c}{0.2133}   & \multicolumn{1}{>{\columncolor{lightblue}}c}{0.1443}   & \multicolumn{1}{>{\columncolor{lightblue}}c}{0.0572}   & \multicolumn{1}{>{\columncolor{lightblue}}c}{\textbf{0.0051}} \\
    
    \bottomrule
  \end{tabular}
\let\everydisplay\oldeverydisplay
\let\everymath\oldeverymath
\end{table*}

\begin{table*}[t] 
\let\oldeverydisplay\everydisplay
\let\oldeverymath\everymath
\everydisplay{}
\everymath{}
\definecolor{lightblue}{RGB}{255, 255, 255}  
\definecolor{lightred}{RGB}{255, 255, 255}   
\setlength{\tabcolsep}{10pt} 
\renewcommand{\arraystretch}{0.65} 
\caption{Performance comparison of \textbf{TelePiT} against operational forecasting systems on RMSE ($\downarrow$), ACC ($\uparrow$), SepcDiv ($\downarrow$) metrics.}
\vspace{-0.3cm}
\label{appendix_tab:main_2}
  \begin{tabular}{cc|cccc|cccc}
    \toprule
    \multirow{2}{*}{\textbf{Metric}} & \multirow{2}{*}{\textbf{Model}} & \multicolumn{4}{c|}{\textbf{Weeks 3-4}} & \multicolumn{4}{c}{\textbf{Weeks 5-6}}\\ 
    & & z500 & z850 & t500 & t850 & z500 & z850 & t500 & t850 \\
    
    \midrule
    \multirow{5}{*}{\rotatebox{0}{\textbf{RMSE ($\downarrow$)}}}
    &\multicolumn{1}{>{\columncolor{lightblue}}c|}{\text{ UKMO }}  
    &\multicolumn{1}{>{\columncolor{lightblue}}c}{64.2965} & \multicolumn{1}{>{\columncolor{lightblue}}c}{43.1836} & \multicolumn{1}{>{\columncolor{lightblue}}c}{{2.1440}}& \multicolumn{1}{>{\columncolor{lightblue}}c|}{{2.5763}}      
    &\multicolumn{1}{>{\columncolor{lightblue}}c}{63.0383} & \multicolumn{1}{>{\columncolor{lightblue}}c}{45.6263} & \multicolumn{1}{>{\columncolor{lightblue}}c}{{2.2743}} & \multicolumn{1}{>{\columncolor{lightblue}}c}{{2.3924}} \\
    
    &\multicolumn{1}{>{\columncolor{lightred}}c|}{\text{ NCEP }}  
    &\multicolumn{1}{>{\columncolor{lightred}}c}{68.4408} & \multicolumn{1}{>{\columncolor{lightred}}c}{44.7869} & \multicolumn{1}{>{\columncolor{lightred}}c}{2.2250} & \multicolumn{1}{>{\columncolor{lightred}}c|}{2.7422}   
    &\multicolumn{1}{>{\columncolor{lightred}}c}{71.3320} & \multicolumn{1}{>{\columncolor{lightred}}c}{46.8340} & \multicolumn{1}{>{\columncolor{lightred}}c}{2.3433} & \multicolumn{1}{>{\columncolor{lightred}}c}{2.8331} \\

    &\multicolumn{1}{>{\columncolor{lightblue}}c|}{\text{ ECMWF }}  
    &\multicolumn{1}{>{\columncolor{lightblue}}c}{63.6737} & \multicolumn{1}{>{\columncolor{lightblue}}c}{42.3491} & \multicolumn{1}{>{\columncolor{lightblue}}c}{{2.0667}}& \multicolumn{1}{>{\columncolor{lightblue}}c|}{{2.3910}}      
    &\multicolumn{1}{>{\columncolor{lightblue}}c}{68.1156} & \multicolumn{1}{>{\columncolor{lightblue}}c}{45.3303} & \multicolumn{1}{>{\columncolor{lightblue}}c}{{2.1772}} & \multicolumn{1}{>{\columncolor{lightblue}}c}{{2.5110}} \\
    
    &\multicolumn{1}{>{\columncolor{lightred}}c|}{\text{ CMA }}  
    &\multicolumn{1}{>{\columncolor{lightred}}c}{69.8326} & \multicolumn{1}{>{\columncolor{lightred}}c}{47.0562} & \multicolumn{1}{>{\columncolor{lightred}}c}{2.5975} & \multicolumn{1}{>{\columncolor{lightred}}c|}{2.9181}   
    &\multicolumn{1}{>{\columncolor{lightred}}c}{71.3996} & \multicolumn{1}{>{\columncolor{lightred}}c}{48.5830} & \multicolumn{1}{>{\columncolor{lightred}}c}{2.6388} & \multicolumn{1}{>{\columncolor{lightred}}c}{2.9334} \\

    &\multicolumn{1}{>{\columncolor{lightblue}}c|}{\textbf{ TelePiT }}  
    &\multicolumn{1}{>{\columncolor{lightblue}}c}{\textbf{48.6714}} & \multicolumn{1}{>{\columncolor{lightblue}}c}{\textbf{31.0823}} & \multicolumn{1}{>{\columncolor{lightblue}}c}{\textbf{1.6849}}& \multicolumn{1}{>{\columncolor{lightblue}}c|}{\textbf{1.8739}}      
    &\multicolumn{1}{>{\columncolor{lightblue}}c}{\textbf{47.7618}} & \multicolumn{1}{>{\columncolor{lightblue}}c}{\textbf{30.4056}} & \multicolumn{1}{>{\columncolor{lightblue}}c}{\textbf{1.6985}} & \multicolumn{1}{>{\columncolor{lightblue}}c}{\textbf{1.8949}} \\

    \midrule
    \multirow{5}{*}{\rotatebox{0}{\textbf{ACC ($\uparrow$)}}}
    &\multicolumn{1}{>{\columncolor{lightblue}}c|}{\text{ UKMO }}  
    &\multicolumn{1}{>{\columncolor{lightblue}}c}{0.9726} & \multicolumn{1}{>{\columncolor{lightblue}}c}{0.9294} & \multicolumn{1}{>{\columncolor{lightblue}}c}{{0.9804}}& \multicolumn{1}{>{\columncolor{lightblue}}c|}{{0.9768}}      
    &\multicolumn{1}{>{\columncolor{lightblue}}c}{0.9695} & \multicolumn{1}{>{\columncolor{lightblue}}c}{0.9225} & \multicolumn{1}{>{\columncolor{lightblue}}c}{{0.9781}} & \multicolumn{1}{>{\columncolor{lightblue}}c}{{0.9767}} \\
    
    &\multicolumn{1}{>{\columncolor{lightred}}c|}{\text{ NCEP }}  
    &\multicolumn{1}{>{\columncolor{lightred}}c}{0.9694} & \multicolumn{1}{>{\columncolor{lightred}}c}{0.9206} & \multicolumn{1}{>{\columncolor{lightred}}c}{0.9790} & \multicolumn{1}{>{\columncolor{lightred}}c|}{0.9774}   
    &\multicolumn{1}{>{\columncolor{lightred}}c}{0.9669} & \multicolumn{1}{>{\columncolor{lightred}}c}{0.9133} & \multicolumn{1}{>{\columncolor{lightred}}c}{0.9766} & \multicolumn{1}{>{\columncolor{lightred}}c}{0.9760} \\

    &\multicolumn{1}{>{\columncolor{lightblue}}c|}{\text{ ECMWF }}  
    &\multicolumn{1}{>{\columncolor{lightblue}}c}{0.9724} & \multicolumn{1}{>{\columncolor{lightblue}}c}{0.9289} & \multicolumn{1}{>{\columncolor{lightblue}}c}{{0.9815}}& \multicolumn{1}{>{\columncolor{lightblue}}c|}{{0.9828}}      
    &\multicolumn{1}{>{\columncolor{lightblue}}c}{0.9677} & \multicolumn{1}{>{\columncolor{lightblue}}c}{0.9157} & \multicolumn{1}{>{\columncolor{lightblue}}c}{{0.9794}} & \multicolumn{1}{>{\columncolor{lightblue}}c}{{0.9807}} \\
    
    &\multicolumn{1}{>{\columncolor{lightred}}c|}{\text{ CMA }}  
    &\multicolumn{1}{>{\columncolor{lightred}}c}{0.9679} & \multicolumn{1}{>{\columncolor{lightred}}c}{0.9188} & \multicolumn{1}{>{\columncolor{lightred}}c}{0.9746} & \multicolumn{1}{>{\columncolor{lightred}}c|}{0.9744}   
    &\multicolumn{1}{>{\columncolor{lightred}}c}{0.9660} & \multicolumn{1}{>{\columncolor{lightred}}c}{0.9121} & \multicolumn{1}{>{\columncolor{lightred}}c}{0.9739} & \multicolumn{1}{>{\columncolor{lightred}}c}{0.9738} \\

    &\multicolumn{1}{>{\columncolor{lightblue}}c|}{\textbf{ TelePiT }}  
    &\multicolumn{1}{>{\columncolor{lightblue}}c}{\textbf{0.9848}} & \multicolumn{1}{>{\columncolor{lightblue}}c}{\textbf{0.9626}} & \multicolumn{1}{>{\columncolor{lightblue}}c}{\textbf{0.9882}}& \multicolumn{1}{>{\columncolor{lightblue}}c|}{\textbf{0.9900}}      
    &\multicolumn{1}{>{\columncolor{lightblue}}c}{\textbf{0.9852}} & \multicolumn{1}{>{\columncolor{lightblue}}c}{\textbf{0.9643}} & \multicolumn{1}{>{\columncolor{lightblue}}c}{\textbf{0.9881}} & \multicolumn{1}{>{\columncolor{lightblue}}c}{\textbf{0.9897}} \\

    \midrule
    \multirow{5}{*}{\rotatebox{0}{\textbf{SpecDiv ($\downarrow$)}}}
    &\multicolumn{1}{>{\columncolor{lightblue}}c|}{\text{ UKMO }}  
    &\multicolumn{1}{>{\columncolor{lightblue}}c}{0.1173} & \multicolumn{1}{>{\columncolor{lightblue}}c}{0.1142} & \multicolumn{1}{>{\columncolor{lightblue}}c}{{0.2408}}& \multicolumn{1}{>{\columncolor{lightblue}}c|}{{0.0989}}      
    &\multicolumn{1}{>{\columncolor{lightblue}}c}{0.1330} & \multicolumn{1}{>{\columncolor{lightblue}}c}{0.1267} & \multicolumn{1}{>{\columncolor{lightblue}}c}{{0.2905}} & \multicolumn{1}{>{\columncolor{lightblue}}c}{{0.1182}} \\
    
    &\multicolumn{1}{>{\columncolor{lightred}}c|}{\text{ NCEP }}  
    &\multicolumn{1}{>{\columncolor{lightred}}c}{0.5567} & \multicolumn{1}{>{\columncolor{lightred}}c}{0.4875} & \multicolumn{1}{>{\columncolor{lightred}}c}{4.3344} & \multicolumn{1}{>{\columncolor{lightred}}c|}{0.6459}   
    &\multicolumn{1}{>{\columncolor{lightred}}c}{0.5598} & \multicolumn{1}{>{\columncolor{lightred}}c}{0.4826} & \multicolumn{1}{>{\columncolor{lightred}}c}{4.2610} & \multicolumn{1}{>{\columncolor{lightred}}c}{0.6135} \\

    &\multicolumn{1}{>{\columncolor{lightblue}}c|}{\text{ ECMWF }}  
    &\multicolumn{1}{>{\columncolor{lightblue}}c}{0.3976} & \multicolumn{1}{>{\columncolor{lightblue}}c}{0.0599} & \multicolumn{1}{>{\columncolor{lightblue}}c}{{0.1229}}& \multicolumn{1}{>{\columncolor{lightblue}}c|}{{0.2375}}      
    &\multicolumn{1}{>{\columncolor{lightblue}}c}{0.3093} & \multicolumn{1}{>{\columncolor{lightblue}}c}{0.0687} & \multicolumn{1}{>{\columncolor{lightblue}}c}{{0.0575}} & \multicolumn{1}{>{\columncolor{lightblue}}c}{{0.1187}} \\
    
    &\multicolumn{1}{>{\columncolor{lightred}}c|}{\text{ CMA }}  
    &\multicolumn{1}{>{\columncolor{lightred}}c}{0.1612} & \multicolumn{1}{>{\columncolor{lightred}}c}{0.1671} & \multicolumn{1}{>{\columncolor{lightred}}c}{0.2725} & \multicolumn{1}{>{\columncolor{lightred}}c|}{0.0318}   
    &\multicolumn{1}{>{\columncolor{lightred}}c}{0.1589} & \multicolumn{1}{>{\columncolor{lightred}}c}{0.1544} & \multicolumn{1}{>{\columncolor{lightred}}c}{0.1277} & \multicolumn{1}{>{\columncolor{lightred}}c}{0.0324} \\

    &\multicolumn{1}{>{\columncolor{lightblue}}c|}{\textbf{ TelePiT }}  
    &\multicolumn{1}{>{\columncolor{lightblue}}c}{\textbf{0.0180}} & \multicolumn{1}{>{\columncolor{lightblue}}c}{\textbf{0.0175}} & \multicolumn{1}{>{\columncolor{lightblue}}c}{\textbf{0.0150}}& \multicolumn{1}{>{\columncolor{lightblue}}c|}{\textbf{0.0045}}      
    &\multicolumn{1}{>{\columncolor{lightblue}}c}{\textbf{0.0161}} & \multicolumn{1}{>{\columncolor{lightblue}}c}{\textbf{0.0168}} & \multicolumn{1}{>{\columncolor{lightblue}}c}{\textbf{0.0184}} & \multicolumn{1}{>{\columncolor{lightblue}}c}{\textbf{0.0042}} \\

    \bottomrule
  \end{tabular}
\let\everydisplay\oldeverydisplay
\let\everymath\oldeverymath
\vspace{-0.4cm}
\end{table*}

\section{Experiments}
\label{sec:Experiments}

\textbf{Datasets.}
We evaluate \textbf{TelePiT} using the ERA5 reanalysis dataset \citep{hersbach2020era5}, as processed by the ChaosBench benchmark \citep{nathaniel2024chaosbench}. This dataset provides global atmospheric variables at multiple pressure levels and at the surface, with a uniform spatial resolution of $1.5^\circ$, corresponding to a grid of $121 \times 240$.
Inspired by the previous study \citep{liucirt,nguyen2024scaling,bi2023accurate,kurth2023fourcastnet,lam2023learning}, we include six pressure-level fields: geopotential ($z$), specific humidity ($q$), temperature ($t$), zonal wind ($u$), meridional wind ($v$), and vertical velocity ($w$), each provided at ten discrete pressure levels ($10$, $50$, $100$, $200$, $300$, $500$, $700$, $850$, $925$, and $1000 hPa$). Additionally, three single-level surface variables are included: 2-meter temperature ($t2m$), 10-meter u-wind ($10u$) and 10-meter v-wind ($10v$), yielding a total of 63 variables. We use data from 1979–2016 for training, 2017 for validation, and 2018 for testing.

\textbf{Evaluation Metric.}
Following existing works \cite{liucirt,nathaniel2024chaosbench,rasp2024weatherbench}, we adopt three metrics: 
(1) Root Mean Squared Error (RMSE) with a latitude-based weighting scheme that accounts for Earth's spherical geometry; 
(2) Anomaly Correlation Coefficient (ACC), which measures the pattern similarity between predicted and observed anomalies relative to climatology; 
and (3) Spectral Divergence (SpecDiv), a physics-based metric that quantifies the deviation between the power spectra of prediction and target in the spectral domain using Kullback-Leibler divergence principles \cite{kullback1951information}. 
For RMSE and SpecDiv, lower values indicate better performance, while higher ACC values signify better prediction skill. The detailed mathematical formulations of these metrics are provided in Appendix \ref{appendix_sec:Evaluation Metric}.

\textbf{Baselines.}
To rigorously evaluate \textbf{TelePiT}, we benchmark against state-of-the-art forecasting models in two categories: (1) data-driven baselines, including FourCastNetV2 \cite{kurth2023fourcastnet}, Pangu-Weather \cite{bi2023accurate}, GraphCast \cite{lam2023learning}, ClimaX \cite{nguyen2023climax}, and CirT \cite{liucirt}, following the standard S2S evaluation protocol of \cite{nathaniel2024chaosbench}; and (2) physics-based baselines of leading operational S2S systems: ECMWF \cite{molteni1996ecmwf}, UKMO \cite{williams2015met}, NCEP \cite{saha2014ncep}, CMA \cite{wu2019beijing}. We provide more details of baselines in Appendix \ref{appendix_sec:Baselines}.

\begin{figure*}[t]
    \centering
    \includegraphics[width=1\linewidth]{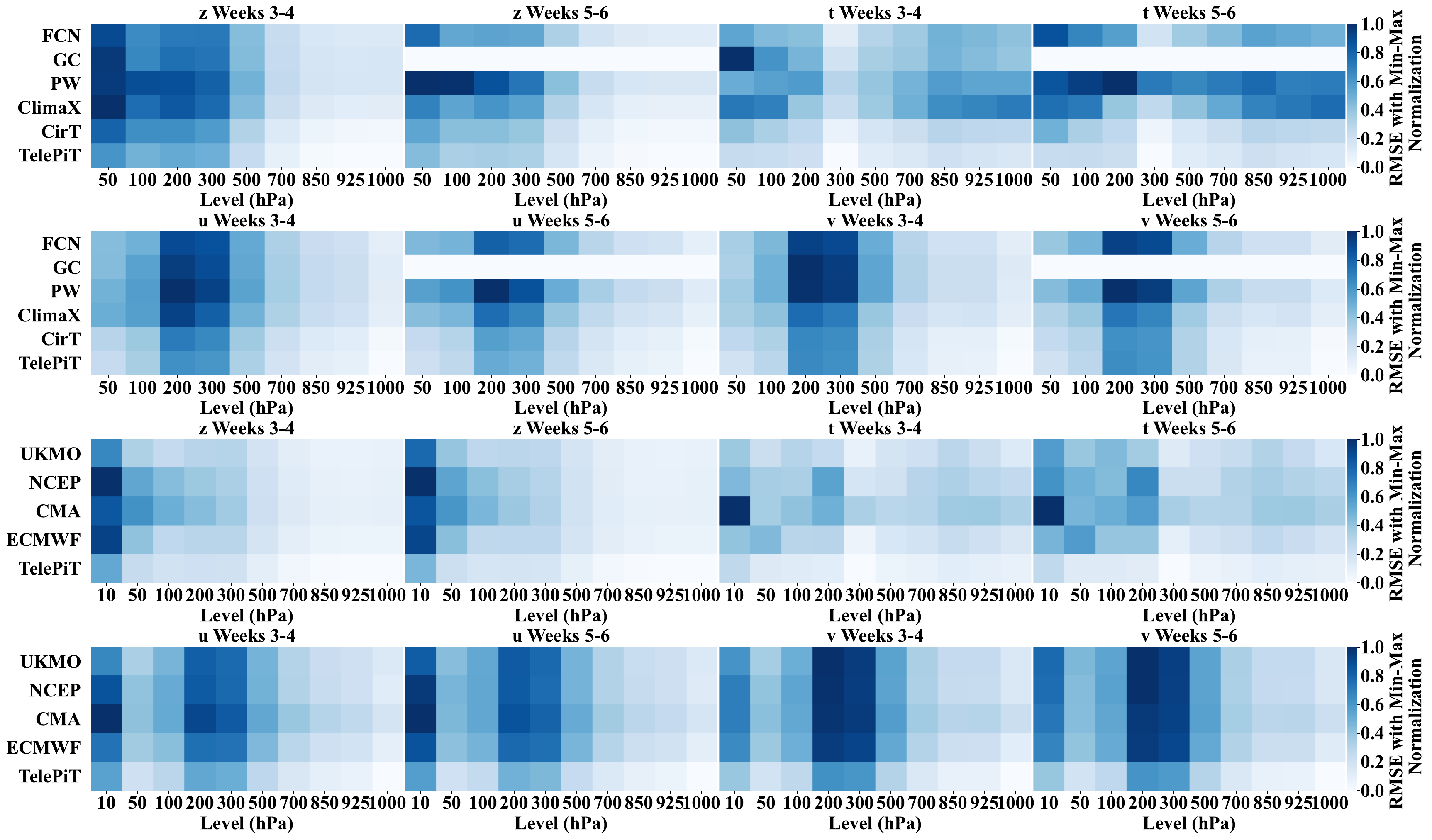}
    \caption{RMSE comparison on variable $z$, $t$, $u$, and $v$ of different pressure levels.}
    \label{fig:heatmap}
\end{figure*}

\textbf{Experimental setup.} 
We implement all models using PyTorch Lightning and train our direct training baselines with consistent hyperparameters: batch size 16, hidden dimension 256, learning rate 0.01, $\gamma = 0.1$, and $\lambda = 0.2$. For simplicity, we initialize $\mathbf{P}_j$ as a trainable vector that enables to learn the teleconnection patterns from data directly.
All training was conducted on 4 GeForce RTX A40 GPUs. 
ClimaX and CirT were trained using same configuration as \textbf{TelePiT}, while pre-trained versions of FourCastNetV2, Pangu-Weather, and GraphCast were accessed through ECMWF's API\footnote{\url{https://github.com/ecmwf-lab/ai-models}}. Inference for these pre-trained models was performed on NVIDIA A800 80G GPUs. It should be noted that although FourCastNetV2 and Pangu-Weather reportedly support $t2m$ predictions, this capability was not available in the ECMWF-provided implementations. Additionally, GraphCast encountered out-of-memory when attempting inference for Weeks 5-6 forecast period.

Due to space constraints, we present results for key variables and primary metrics in the main text. More additional experiments and model complexity analysis can be found in the Appendix \ref{appendix_sec:Additional Experimental Details}.

\begin{table*}[t]
\let\oldeverydisplay\everydisplay
\let\oldeverymath\everymath
\everydisplay{}
\everymath{}

\definecolor{lightblue}{RGB}{255, 255, 255}  
\definecolor{lightred}{RGB}{255,255,255}   

\small
\setlength{\tabcolsep}{6pt} 
\renewcommand{\arraystretch}{0.65} 

\caption{Ablation study of different \textbf{TelePiT} model variants.}
\label{tab:ablation}
  \begin{tabular}{cc|ccccccc|ccccccc}
    \toprule
    \multirow{2}{*}{\textbf{}} &\multirow{2}{*}{\textbf{Model}} & \multicolumn{7}{c|}{\textbf{RMSE ($\downarrow$)}} & \multicolumn{7}{c}{\textbf{ACC ($\uparrow$)}}\\ 
    &  &  z500 &  z850 & t500 & t850 & t2m & u10 &  v10 &  z500 & z850 & t500 & t850 & t2m & u10 & v10\\
    
    \midrule
    \multirow{5}{*}{\rotatebox{90}{\textbf{Weeks 3-4}}}
    & \multicolumn{1}{>{\columncolor{lightblue}}c|}{\text{ w/o SHE }}    & \multicolumn{1}{>{\columncolor{lightblue}}c}{51.217}   & \multicolumn{1}{>{\columncolor{lightblue}}c}{33.153}   & \multicolumn{1}{>{\columncolor{lightblue}}c}{1.712}   & \multicolumn{1}{>{\columncolor{lightblue}}c}{1.983}   & \multicolumn{1}{>{\columncolor{lightblue}}c}{27.206}   & \multicolumn{1}{>{\columncolor{lightblue}}c}{0.562}   & \multicolumn{1}{>{\columncolor{lightblue}}c|}{0.526}   & \multicolumn{1}{>{\columncolor{lightblue}}c}{0.983}   & \multicolumn{1}{>{\columncolor{lightblue}}c}{0.957}   & \multicolumn{1}{>{\columncolor{lightblue}}c}{0.987}   & \multicolumn{1}{>{\columncolor{lightblue}}c}{0.988}   & \multicolumn{1}{>{\columncolor{lightblue}}c}{0.980}   & \multicolumn{1}{>{\columncolor{lightblue}}c}{0.908}   & \multicolumn{1}{>{\columncolor{lightblue}}c}{0.914}   \\
    \multicolumn{1}{c}{}& \multicolumn{1}{>{\columncolor{lightred}}c|}{\text{ w/o WD }}     & \multicolumn{1}{>{\columncolor{lightred}}c}{51.949}   & \multicolumn{1}{>{\columncolor{lightred}}c}{33.635}   & \multicolumn{1}{>{\columncolor{lightred}}c}{1.724}   & \multicolumn{1}{>{\columncolor{lightred}}c}{2.018}   & \multicolumn{1}{>{\columncolor{lightred}}c}{22.901}   & \multicolumn{1}{>{\columncolor{lightred}}c}{0.542}   & \multicolumn{1}{>{\columncolor{lightred}}c|}{0.510}   & \multicolumn{1}{>{\columncolor{lightred}}c}{0.982}   & \multicolumn{1}{>{\columncolor{lightred}}c}{0.956}   & \multicolumn{1}{>{\columncolor{lightred}}c}{0.986}   & \multicolumn{1}{>{\columncolor{lightred}}c}{0.987}   & \multicolumn{1}{>{\columncolor{lightred}}c}{0.986}   & \multicolumn{1}{>{\columncolor{lightred}}c}{0.915}   & \multicolumn{1}{>{\columncolor{lightred}}c}{0.921}   \\
    & \multicolumn{1}{>{\columncolor{lightblue}}c|}{\text{ w/o ODE }}    & \multicolumn{1}{>{\columncolor{lightblue}}c}{50.582}   & \multicolumn{1}{>{\columncolor{lightblue}}c}{32.818}   & \multicolumn{1}{>{\columncolor{lightblue}}c}{1.697}   & \multicolumn{1}{>{\columncolor{lightblue}}c}{2.002}   & \multicolumn{1}{>{\columncolor{lightblue}}c}{22.780}   & \multicolumn{1}{>{\columncolor{lightblue}}c}{0.542}   & \multicolumn{1}{>{\columncolor{lightblue}}c|}{0.510}   & \multicolumn{1}{>{\columncolor{lightblue}}c}{0.983}   & \multicolumn{1}{>{\columncolor{lightblue}}c}{0.959}   & \multicolumn{1}{>{\columncolor{lightblue}}c}{0.987}   & \multicolumn{1}{>{\columncolor{lightblue}}c}{0.987}   & \multicolumn{1}{>{\columncolor{lightblue}}c}{0.986}   & \multicolumn{1}{>{\columncolor{lightblue}}c}{0.916}   & \multicolumn{1}{>{\columncolor{lightblue}}c}{0.922}   \\
    \multicolumn{1}{c}{}& \multicolumn{1}{>{\columncolor{lightred}}c|}{\text{ w/o TA }}     & \multicolumn{1}{>{\columncolor{lightred}}c}{51.333}   & \multicolumn{1}{>{\columncolor{lightred}}c}{33.193}   & \multicolumn{1}{>{\columncolor{lightred}}c}{1.727}   & \multicolumn{1}{>{\columncolor{lightred}}c}{2.023}   & \multicolumn{1}{>{\columncolor{lightred}}c}{23.923}   & \multicolumn{1}{>{\columncolor{lightred}}c}{0.548}   & \multicolumn{1}{>{\columncolor{lightred}}c|}{0.509}   & \multicolumn{1}{>{\columncolor{lightred}}c}{0.982}   & \multicolumn{1}{>{\columncolor{lightred}}c}{0.957}   & \multicolumn{1}{>{\columncolor{lightred}}c}{0.987}   & \multicolumn{1}{>{\columncolor{lightred}}c}{0.987}   & \multicolumn{1}{>{\columncolor{lightred}}c}{0.985}   & \multicolumn{1}{>{\columncolor{lightred}}c}{0.916}   & \multicolumn{1}{>{\columncolor{lightred}}c}{0.921}   \\
    & \multicolumn{1}{>{\columncolor{lightblue}}c|}{\textbf{ \textbf{TelePiT} }} & \multicolumn{1}{>{\columncolor{lightblue}}c}{\textbf{48.671}}   & \multicolumn{1}{>{\columncolor{lightblue}}c}{\textbf{31.082}}   & \multicolumn{1}{>{\columncolor{lightblue}}c}{\textbf{1.684}}   & \multicolumn{1}{>{\columncolor{lightblue}}c}{\textbf{1.873}}   & \multicolumn{1}{>{\columncolor{lightblue}}c}{\textbf{12.057}}   & \multicolumn{1}{>{\columncolor{lightblue}}c}{\textbf{0.491}}   & \multicolumn{1}{>{\columncolor{lightblue}}c|}{\textbf{0.464}}   & \multicolumn{1}{>{\columncolor{lightblue}}c}{\textbf{0.985}}   & \multicolumn{1}{>{\columncolor{lightblue}}c}{\textbf{0.963}}   & \multicolumn{1}{>{\columncolor{lightblue}}c}{\textbf{0.988}}   & \multicolumn{1}{>{\columncolor{lightblue}}c}{\textbf{0.990}}   & \multicolumn{1}{>{\columncolor{lightblue}}c}{\textbf{0.996}}   & \multicolumn{1}{>{\columncolor{lightblue}}c}{\textbf{0.928}}   & \multicolumn{1}{>{\columncolor{lightblue}}c}{\textbf{0.934}}   \\
    \midrule
    \multirow{5}{*}{\rotatebox{90}{\textbf{Weeks 5-6}}}
    & \multicolumn{1}{>{\columncolor{lightblue}}c|}{\text{ w/o SHE }}    & \multicolumn{1}{>{\columncolor{lightblue}}c}{51.926}   & \multicolumn{1}{>{\columncolor{lightblue}}c}{33.510}   & \multicolumn{1}{>{\columncolor{lightblue}}c}{1.747}   & \multicolumn{1}{>{\columncolor{lightblue}}c}{2.024}   & \multicolumn{1}{>{\columncolor{lightblue}}c}{27.217}   & \multicolumn{1}{>{\columncolor{lightblue}}c}{0.567}   & \multicolumn{1}{>{\columncolor{lightblue}}c|}{0.529}   & \multicolumn{1}{>{\columncolor{lightblue}}c}{0.982}   & \multicolumn{1}{>{\columncolor{lightblue}}c}{0.956}   & \multicolumn{1}{>{\columncolor{lightblue}}c}{0.986}   & \multicolumn{1}{>{\columncolor{lightblue}}c}{0.987}   & \multicolumn{1}{>{\columncolor{lightblue}}c}{0.980}   & \multicolumn{1}{>{\columncolor{lightblue}}c}{0.904}   & \multicolumn{1}{>{\columncolor{lightblue}}c}{0.911}   \\
    \multicolumn{1}{c}{}& \multicolumn{1}{>{\columncolor{lightred}}c|}{\text{ w/o WD }}     & \multicolumn{1}{>{\columncolor{lightred}}c}{52.083}   & \multicolumn{1}{>{\columncolor{lightred}}c}{33.810}   & \multicolumn{1}{>{\columncolor{lightred}}c}{1.730}   & \multicolumn{1}{>{\columncolor{lightred}}c}{2.001}   & \multicolumn{1}{>{\columncolor{lightred}}c}{22.912}   & \multicolumn{1}{>{\columncolor{lightred}}c}{0.544}   & \multicolumn{1}{>{\columncolor{lightred}}c|}{0.513}   & \multicolumn{1}{>{\columncolor{lightred}}c}{0.981}   & \multicolumn{1}{>{\columncolor{lightred}}c}{0.954}   & \multicolumn{1}{>{\columncolor{lightred}}c}{0.986}   & \multicolumn{1}{>{\columncolor{lightred}}c}{0.987}   & \multicolumn{1}{>{\columncolor{lightred}}c}{0.986}   & \multicolumn{1}{>{\columncolor{lightred}}c}{0.910}   & \multicolumn{1}{>{\columncolor{lightred}}c}{0.917}   \\
    & \multicolumn{1}{>{\columncolor{lightblue}}c|}{\text{ w/o ODE }}    & \multicolumn{1}{>{\columncolor{lightblue}}c}{50.499}   & \multicolumn{1}{>{\columncolor{lightblue}}c}{32.793}   & \multicolumn{1}{>{\columncolor{lightblue}}c}{1.689}   & \multicolumn{1}{>{\columncolor{lightblue}}c}{1.971}   & \multicolumn{1}{>{\columncolor{lightblue}}c}{22.780}   & \multicolumn{1}{>{\columncolor{lightblue}}c}{0.545}   & \multicolumn{1}{>{\columncolor{lightblue}}c|}{0.509}   & \multicolumn{1}{>{\columncolor{lightblue}}c}{0.983}   & \multicolumn{1}{>{\columncolor{lightblue}}c}{0.958}   & \multicolumn{1}{>{\columncolor{lightblue}}c}{0.987}   & \multicolumn{1}{>{\columncolor{lightblue}}c}{0.988}   & \multicolumn{1}{>{\columncolor{lightblue}}c}{0.986}   & \multicolumn{1}{>{\columncolor{lightblue}}c}{0.913}   & \multicolumn{1}{>{\columncolor{lightblue}}c}{0.921}   \\
    \multicolumn{1}{c}{}& \multicolumn{1}{>{\columncolor{lightred}}c|}{\text{ w/o TA }}     & \multicolumn{1}{>{\columncolor{lightred}}c}{50.391}   & \multicolumn{1}{>{\columncolor{lightred}}c}{32.697}   & \multicolumn{1}{>{\columncolor{lightred}}c}{1.709}   & \multicolumn{1}{>{\columncolor{lightred}}c}{1.991}   & \multicolumn{1}{>{\columncolor{lightred}}c}{23.928}   & \multicolumn{1}{>{\columncolor{lightred}}c}{0.549}   & \multicolumn{1}{>{\columncolor{lightred}}c|}{0.508}   & \multicolumn{1}{>{\columncolor{lightred}}c}{0.983}   & \multicolumn{1}{>{\columncolor{lightred}}c}{0.959}   & \multicolumn{1}{>{\columncolor{lightred}}c}{0.987}   & \multicolumn{1}{>{\columncolor{lightred}}c}{0.987}   & \multicolumn{1}{>{\columncolor{lightred}}c}{0.986}   & \multicolumn{1}{>{\columncolor{lightred}}c}{0.914}   & \multicolumn{1}{>{\columncolor{lightred}}c}{0.921}   \\
    & \multicolumn{1}{>{\columncolor{lightblue}}c|}{\textbf{ \textbf{TelePiT} }} & \multicolumn{1}{>{\columncolor{lightblue}}c}{\textbf{47.761}}   & \multicolumn{1}{>{\columncolor{lightblue}}c}{\textbf{30.405}}   & \multicolumn{1}{>{\columncolor{lightblue}}c}{\textbf{1.698}}   & \multicolumn{1}{>{\columncolor{lightblue}}c}{\textbf{1.894}}   & \multicolumn{1}{>{\columncolor{lightblue}}c}{\textbf{12.063}}   & \multicolumn{1}{>{\columncolor{lightblue}}c}{\textbf{0.490}}   & \multicolumn{1}{>{\columncolor{lightblue}}c|}{\textbf{0.466}}  & \multicolumn{1}{>{\columncolor{lightblue}}c}{\textbf{0.985}}   & \multicolumn{1}{>{\columncolor{lightblue}}c}{\textbf{0.964}}   & \multicolumn{1}{>{\columncolor{lightblue}}c}{\textbf{0.988}}   & \multicolumn{1}{>{\columncolor{lightblue}}c}{\textbf{0.990}}   & \multicolumn{1}{>{\columncolor{lightblue}}c}{\textbf{0.996}}   & \multicolumn{1}{>{\columncolor{lightblue}}c}{\textbf{0.928}}   & \multicolumn{1}{>{\columncolor{lightblue}}c}{\textbf{0.934}}  \\
    
    \bottomrule
  \end{tabular}
\let\everydisplay\oldeverydisplay
\let\everymath\oldeverymath
\end{table*}

\subsection{Overall Performance}

\subsubsection{Compared with data-driven models.}
Table \ref{tab:main} and Figure \ref{fig:heatmap} present a comprehensive comparison of \textbf{TelePiT} against state-of-the-art data-driven S2S forecasting models. Our model demonstrates consistent and substantial improvements across all meteorological variables and forecast horizons.
For geopotential heights ($z500$, $z850$), \textbf{TelePiT} reduces RMSE by 9.5\% and 8.6\% respectively compared to the CirT at Weeks 3-4, while simultaneously achieving higher ACC values. This improvement is particularly noteworthy given that these upper-atmospheric variables govern large-scale circulation patterns crucial for extended-range forecasting.


Critically, \textbf{TelePiT} retains a performance advantage even at extended lead times (weeks 5–6), exhibiting consistent skill margins across all variables. This sustained skill indicates that the model successfully captures both local physical processes and global circulation patterns across multiple temporal and spatial scales, and it underscores \textbf{TelePiT}’s robust representation of slowly evolving atmospheric phenomena that underpin sub-seasonal predictability. 

Table \ref{tab:main_SpecDiv} evaluates model performance using the SpecDiv metric, which quantifies preservation of physical energy distribution across spatial frequencies. \textbf{TelePiT} significantly outperforms all baselines, with improvements ranging from 37.1\% to 98.2\% compared to CirT. 
These results demonstrate that \textbf{TelePiT}'s multi-scale physics-informed architecture maintains exceptionally realistic physical characteristics throughout the forecast period, particularly for variables with complex boundary-layer interactions where conventional models typically struggle with spectral fidelity.

\begin{figure*}[t]
    \centering
    \includegraphics[width=1\linewidth]{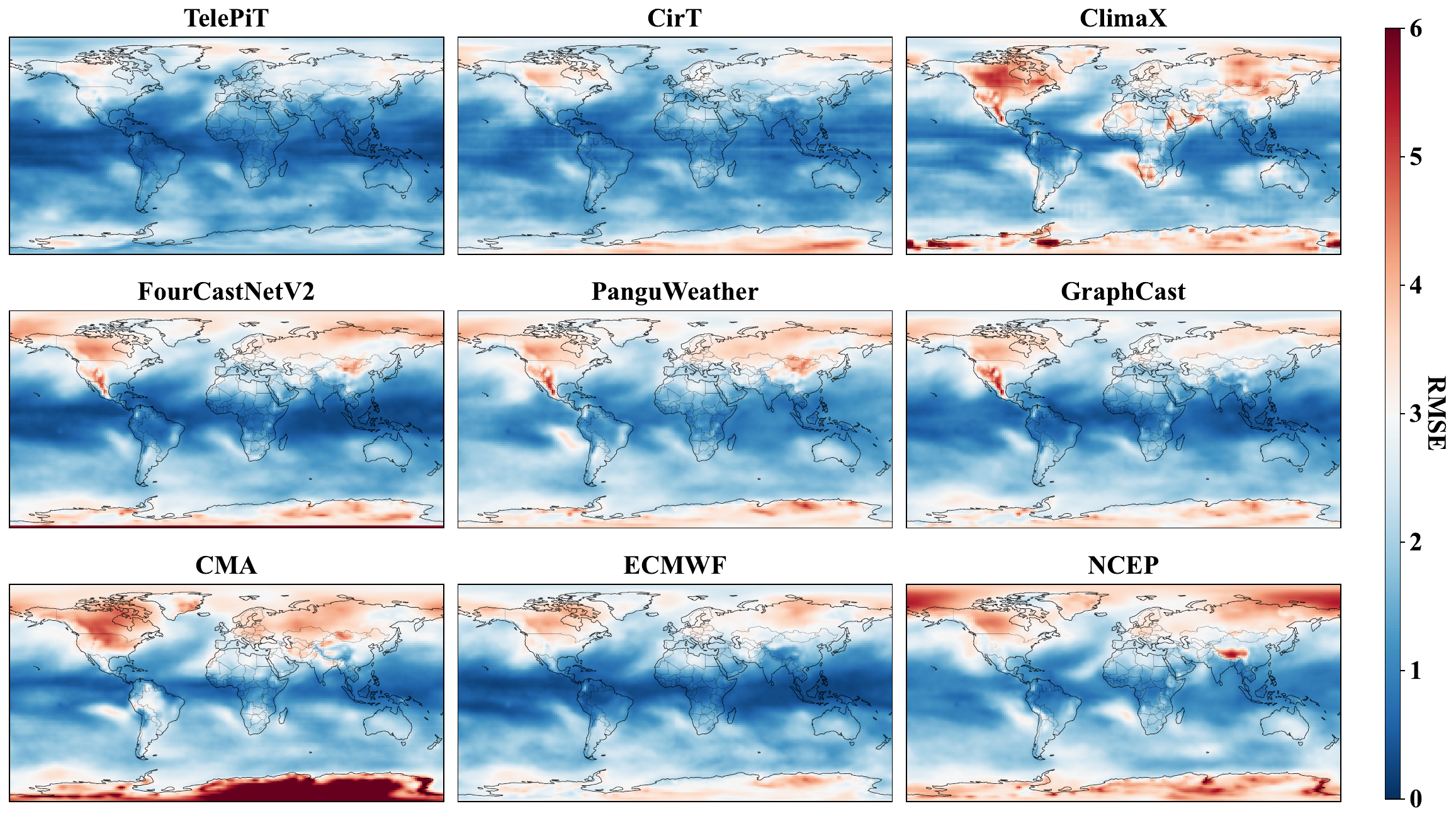}
    \caption{The global RMSE distribution of $t850$ with lead times weeks 3-4 in the testing set.}
    \label{fig:map}
\end{figure*}

\subsubsection{Compared with numerical models.}
Beyond outperforming other data-driven approaches, our evaluation against operational numerical models reveals \textbf{TelePiT}'s remarkable capabilities across the entire atmospheric column. As shown in Figure \ref{fig:heatmap}, \textbf{TelePiT} consistently achieves lower normalized RMSE (lighter colors) across almost all pressure levels (10-1000 hPa) for all variables.

Most notably, \textbf{TelePiT} demonstrates exceptional performance for temperature ($t$) forecasts throughout the troposphere and lower stratosphere, with substantial improvements over physics-based models that have benefited from decades of scientific development and operational refinement. This advantage is particularly pronounced at upper levels (10-200 hPa), where traditional models struggle to maintain forecast skill beyond week 2. 
Critically, while physics-based models show substantial skill degradation between Weeks 3-4 and Weeks 5-6 (evidenced by darker colors in the right panels), \textbf{TelePiT} maintains more consistent performance across both forecast horizons.

\vspace{-0.2cm}
\subsection{Ablation Study}

The ablation study presented in Table \ref{tab:ablation} systematically evaluates the contribution of each component in our proposed \textbf{TelePiT} architecture.
Spherical Harmonic Embedding (SHE) provides the most critical contribution, particularly for $t2m$ prediction where its removal increases RMSE from 12.057 to 27.206 at Weeks 3-4. This substantial degradation demonstrates that effectively encoding Earth's spherical geometry is fundamental for capturing planetary-scale circulation patterns. 
Wavelet Decomposition (WD) plays an essential role in separating signals across temporal frequencies. Its removal impacts geopotential height predictions ($z500$, $z850$), with RMSE increasing by 7-8\%. This confirms our hypothesis that atmospheric dynamics operate simultaneously across multiple time scales, from fast-moving synoptic systems to slow-evolving planetary waves, requiring explicit multi-scale processing for accurate prediction.

Physics-Informed ODE shows more subtle but consistent contributions. Unlike purely data-driven approaches, the integration of simplified atmospheric dynamics through neural ODEs provides physical consistency that enhances model generalization, especially for variables with complex boundary-layer interactions. 
Teleconnection Attention (TA) becomes increasingly valuable at longer lead times, with its removal causing notable degradation at Weeks 5-6. This aligns with meteorological understanding that as initial condition influence diminishes, large-scale teleconnections increasingly govern predictability. 

\begin{figure*}
    \centering
    \includegraphics[width=1\linewidth]{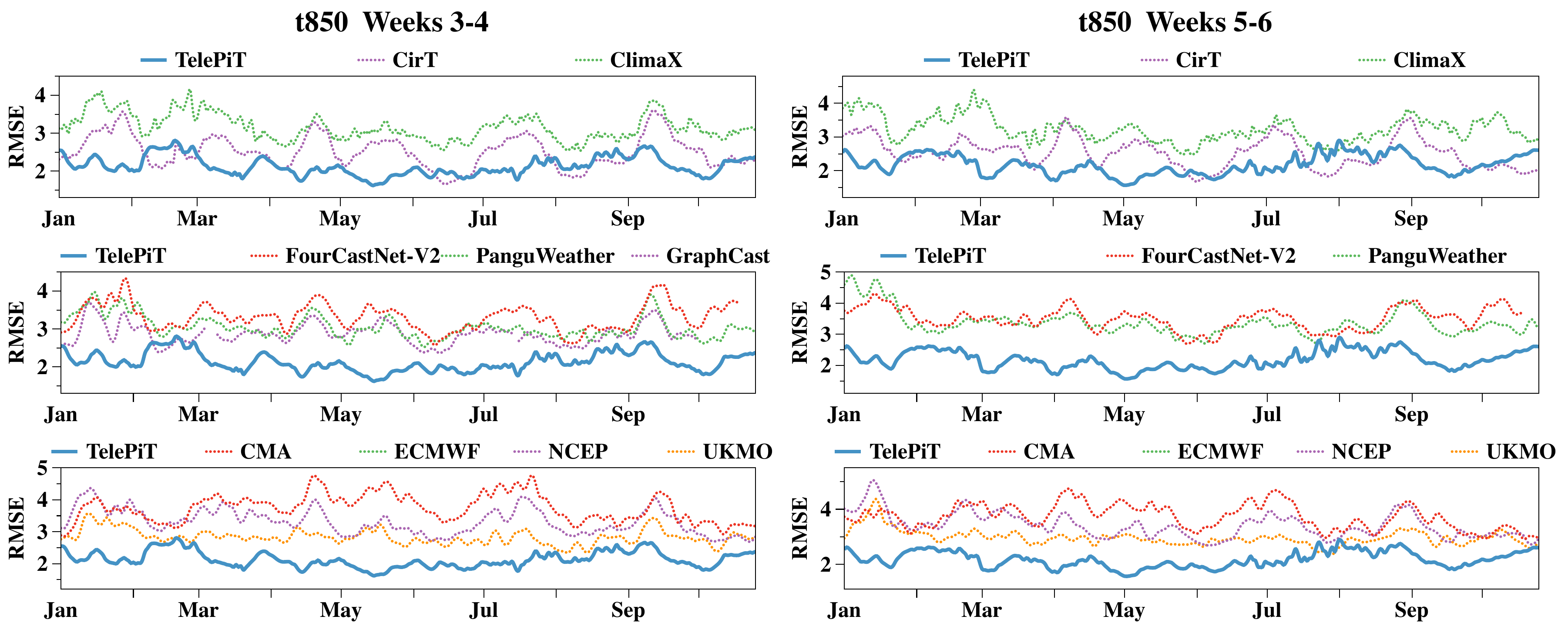}
    \vspace{-0.6cm}
    \caption{Forecasting Performance of $t850$ on each day in the testing set.}
    \label{fig:line}
\end{figure*}

\vspace{-0.2cm}
\subsection{Empirical Analysis}

\textbf{Global visualization.}
Figure \ref{fig:map} visualizes the global distribution of RMSE for $t850$ at Weeks 3-4. TelePiT demonstrates remarkably consistent low error (deep blue) across nearly all geographical regions, including traditionally challenging areas like continental interiors and the polar regions. 
This spatial analysis confirms that \textbf{TelePiT}'s architectural innovations, especially its spherical harmonic embedding and teleconnection attention mechanisms, enable it to effectively capture both tropical atmospheric dynamics and mid-to-high latitude variability, areas where existing models often struggle due to fundamentally different governing physical processes. 

\textbf{Forecasting visualization.}
Figure \ref{fig:line} presents the $t850$ forecast performance on each day. \textbf{TelePiT} demonstrates consistently lower RMSE values throughout the annual cycle for both Weeks 3-4 and Weeks 5-6 forecasts, maintaining this advantage across all seasonal transitions. 
The performance gap widens even further in the more challenging Weeks 5-6 timeframe, where \textbf{TelePiT} maintains relatively stable error metrics while baselines show increased variability and deterioration. 
This consistent temporal superiority complements the spatial advantages described in the global visualization analysis, suggesting that \textbf{TelePiT}'s architectural design effectively captures both the spatial and temporal aspects of atmospheric dynamics critical for extended-range forecasting.

\section{Related Works}

S2S forecasting has traditionally relied on sophisticated numerical weather prediction (NWP) systems \cite{vitart2017subseasonal,white2017potential}, which integrate atmospheric and oceanic physics through coupled dynamical models. These systems, exemplified by ECMWF \cite{molteni1996ecmwf}, UKMO \cite{williams2015met}, NCEP \cite{saha2014ncep}, and CMA \cite{wu2019beijing}, have gradually extended forecast horizons through enhanced initialization techniques and ensemble methods \cite{vitart2018sub}. However, these methods remain computationally intensive and often struggle to fully capture complex teleconnection patterns that significantly influence predictability beyond the 2-week deterministic limit \cite{bauer2015quiet,robertson2020subseasonal}. Recent advancements utilizing empirical methods to extract large-scale modes of variability \cite{cohen2019s2s,merryfield2020current} have shown promise but lack physical dynamics across scales.

Deep learning has revolutionized climate prediction. Foundational work by Rasp et al. \cite{rasp2020weatherbench} established benchmarks for data-driven atmospheric forecasting, while subsequent architectures including FourCastNetV2 \cite{kurth2023fourcastnet}, Pangu-Weather \cite{bi2023accurate}, GraphCast \cite{lam2023learning}, and ClimaX \cite{nguyen2023climax} demonstrated competitive performance against operational NWP systems for medium-range forecasting. Regional S2S forecasting has seen notable advances through specialized models like SubseasonalClimateUSA \cite{mouatadid2023subseasonalclimateusa}, which focuses on North American climate predictions, and approaches by He et al. \cite{he2022learning} and Hwang et al. \cite{hwang2019improving} that target localized weather patterns. However, these regional models often overlook crucial global teleconnections. Recent global S2S efforts include FuXi-S2S \cite{chen2024machine}, which employs an encoder-decoder framework for global precipitation forecasting, and CirT \cite{liucirt}, which incorporates spherical geometry. Nevertheless, these approaches lack explicit incorporation of physical constraints and multi-scale atmospheric processes, limiting their ability to maintain skill at extended lead times \cite{nathaniel2024chaosbench}.

Physics-informed neural networks (PINNs) have shown increasing promise for integrating physical knowledge into data-driven models \cite{raissi2019physics,karniadakis2021physics}. In climate science, Reichstein et al. \cite{reichstein2019deep} and Willard et al. \cite{willard2022integrating} established frameworks for integrating scientific knowledge with machine learning for Earth system modeling. Recent work on climate modeling with neural advection-diffusion equations \cite{choi2023climate} has demonstrated how neural networks can parameterize complex climate dynamics while preserving physical principles. Complementary approaches in machine learning-accelerated computational fluid dynamics \cite{kochkov2021machine} have shown that neural networks can significantly enhance traditional solvers while maintaining physical consistency. Unlike these works, which typically focus on individual atmospheric processes or fixed spatial scales, \textbf{TelePiT} uniquely combines learnable multi-scale decomposition with physics-informed ODEs that adaptively capture scale-dependent dynamics. For teleconnection modeling, previous efforts \cite{cohen2021linking} have explored statistical approaches, while Ham et al. \cite{ham2019deep} pioneered deep learning for El Niño prediction. \textbf{TelePiT} advances beyond these approaches by seamlessly integrating scale-aware physics with teleconnection-aware attention mechanisms, explicitly capturing the cross-scale interactions and global patterns that are crucial for S2S predictability while preserving Earth's spherical geometry in a unified end-to-end architecture.

\section{Conclusion}

We presented \textbf{TelePiT}, a novel deep learning architecture for global S2S forecasting that integrates physical principles with data-driven learning. By combining Spherical Harmonic Embedding, Multi-Scale Physics-Informed ODE, and Teleconnection-Aware Transformer, we explicitly model the fundamental processes governing S2S predictability.
Our evaluation shows that \textbf{TelePiT} consistently outperforms both state-of-the-art data-driven models and operational numerical systems across all meteorological variables and forecast horizons. The most significant improvements occur in near-surface variables, particularly 2-meter temperature, with over 57\% RMSE reduction compared to previous models. Ablation studies confirm each component's meaningful contribution, with their combined effect becoming more pronounced at extended forecasting ranges.
This advances climate resilience applications in agriculture, energy, and disaster preparedness. Future research directions include incorporating additional Earth system components and specialized datasets such as the Arctic System Reanalysis (ASR) to enhance regional performance, ultimately bridging the gap between weather and climate prediction scales.


\bibliographystyle{ACM-Reference-Format}
\balance
\bibliography{sample-base}

\appendix
\onecolumn  

\begin{center}
    \LARGE\textbf{Physics-Informed Teleconnection-Aware Transformer for \\ Global Subseasonal-to-Seasonal Forecasting}
\end{center}

\vspace{0.5cm}

\begin{center}
\large\textbf{Appendix Contents}
\end{center}

\vspace{0.5cm}

\begin{list}{}{\setlength{\leftmargin}{2em}\setlength{\itemindent}{-1em}}
\item \textbf{A. Theoretical Analysis} \dotfill \pageref{appendix_sec:Theoretical Analysis}
  \begin{list}{}{\setlength{\leftmargin}{1em}}
  \item A.1. Spectral Representation on the Sphere \dotfill \pageref{appendix_sec:Spectral Representation on the Sphere}
  \item A.2. Multi-Scale Analysis and Function Approximation \dotfill \pageref{appendix_sec:Multi-Scale Analysis and Function Approximation}
  \item A.3. Physics-Informed Neural ODEs \dotfill \pageref{appendix_sec:Physics-Informed Neural ODEs}
  \item A.4. Teleconnection Representation Theory \dotfill \pageref{appendix_sec:Teleconnection Representation Theory}
  \item A.5. Complexity and Generalization Analysis \dotfill \pageref{appendix_sec:Complexity and Generalization Analysis}
  \end{list}
\item \textbf{B. Additional Experimental Details} \dotfill \pageref{appendix_sec:Additional Experimental Details}
  \begin{list}{}{\setlength{\leftmargin}{1em}}
  \item B.1. Evaluation Metric \dotfill \pageref{appendix_sec:Evaluation Metric}
  \item B.2. Baselines \dotfill \pageref{appendix_sec:Baselines}
  \item B.3. Model Complexity \dotfill \pageref{appendix_sec:Model Complexity}
  \item B.4. Additional Overall Performance \dotfill \pageref{appendix_sec:Additional Overall Performance}
  \item B.5. Additional Ablation Study \dotfill \pageref{appendix_sec:Additional Ablation Study}
  \item B.6. Additional Empirical Analysis \dotfill \pageref{appendix_sec:Additional Empirical Analysis}
  \item B.7. Robustness Study \dotfill \pageref{appendix_sec:Robustness Study}
  \item B.8. Parameter Sensitivity Analysis \dotfill \pageref{appendix_sec:Parameter Sensitivity Analysis}
  \end{list}
\end{list}

\vspace{-10cm}

\begin{figure}[!b]  

    \centering
    \begin{subfigure}[t]{0.32\textwidth}
        \centering
        \includegraphics[width=\textwidth]{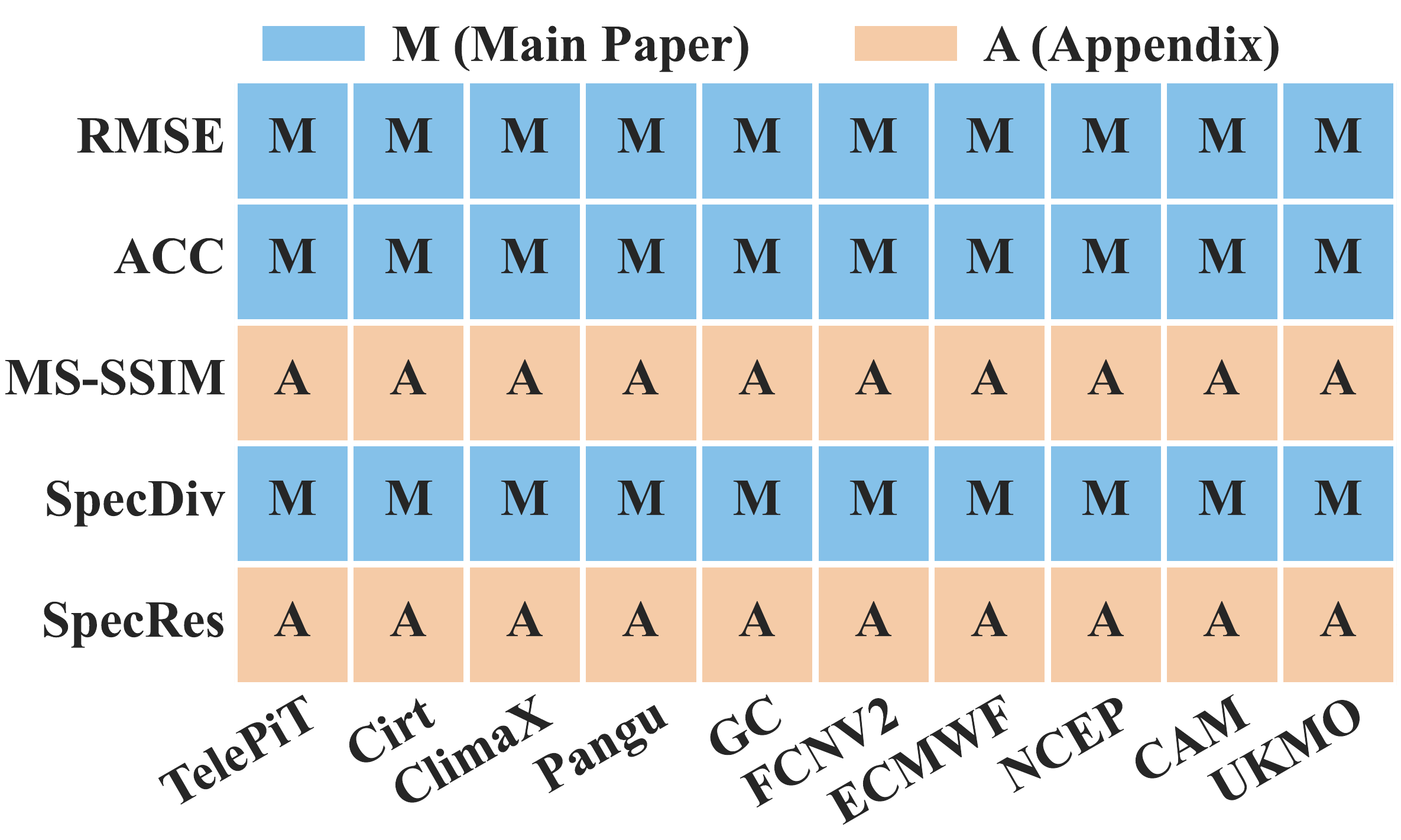}
        \caption{Distribution of overall performance for key variables $z500$, $z850$, $t500$, $t850$, $t2m$, $u10$, $v10$.}
        \label{appendix_fig:all_exp_overall_performance}
    \end{subfigure}
    \hfill
    \begin{subfigure}[t]{0.32\textwidth}
        \centering
        \includegraphics[width=\textwidth]{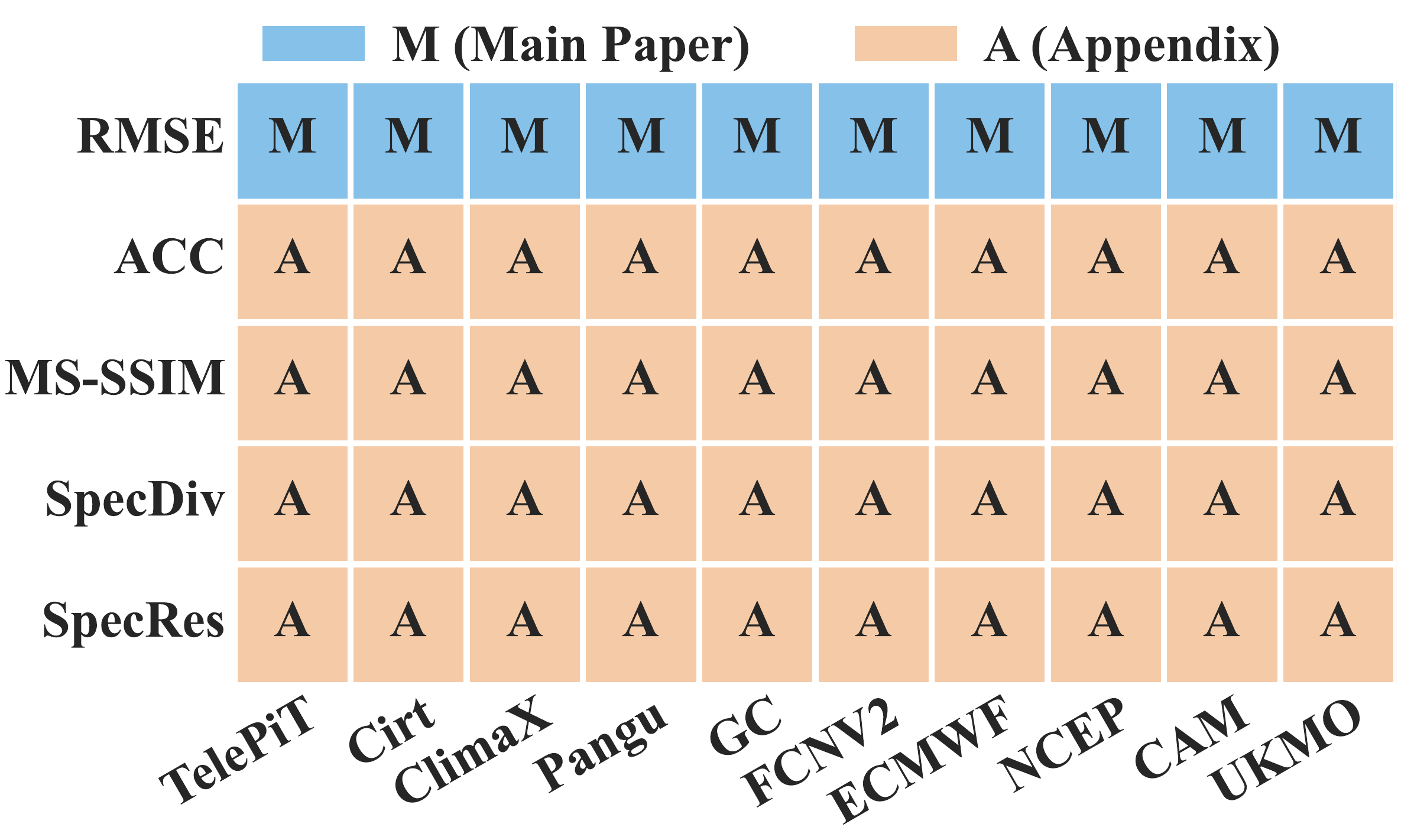}
        \caption{Distribution of heatmap visualizations for variables ($z$, $t$, $u$, $v$) across all pressure levels.}
        \label{appendix_fig:all_exp_heatmap}
    \end{subfigure}
    \hfill
    \begin{subfigure}[t]{0.32\textwidth}
        \centering
        \raisebox{12pt}{\includegraphics[width=\textwidth]{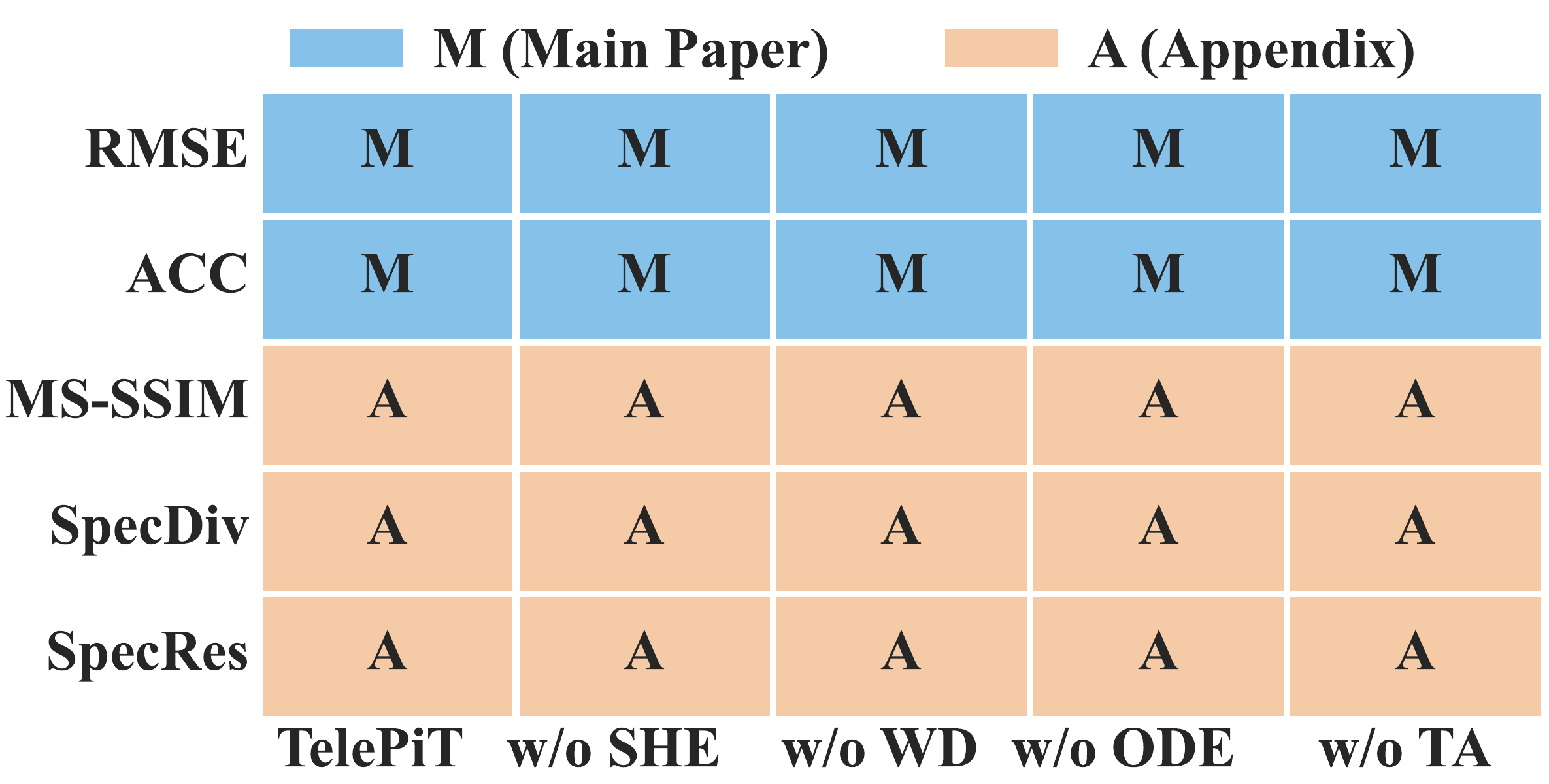}}
        \caption{Distribution of ablation studies comparing model variants across all metrics.}
        \label{appendix_fig:all_exp_ablation_study}
    \end{subfigure}

    \vspace{1cm}
    
    \begin{subfigure}[t]{0.43\textwidth}
        \centering
        \includegraphics[width=\textwidth]{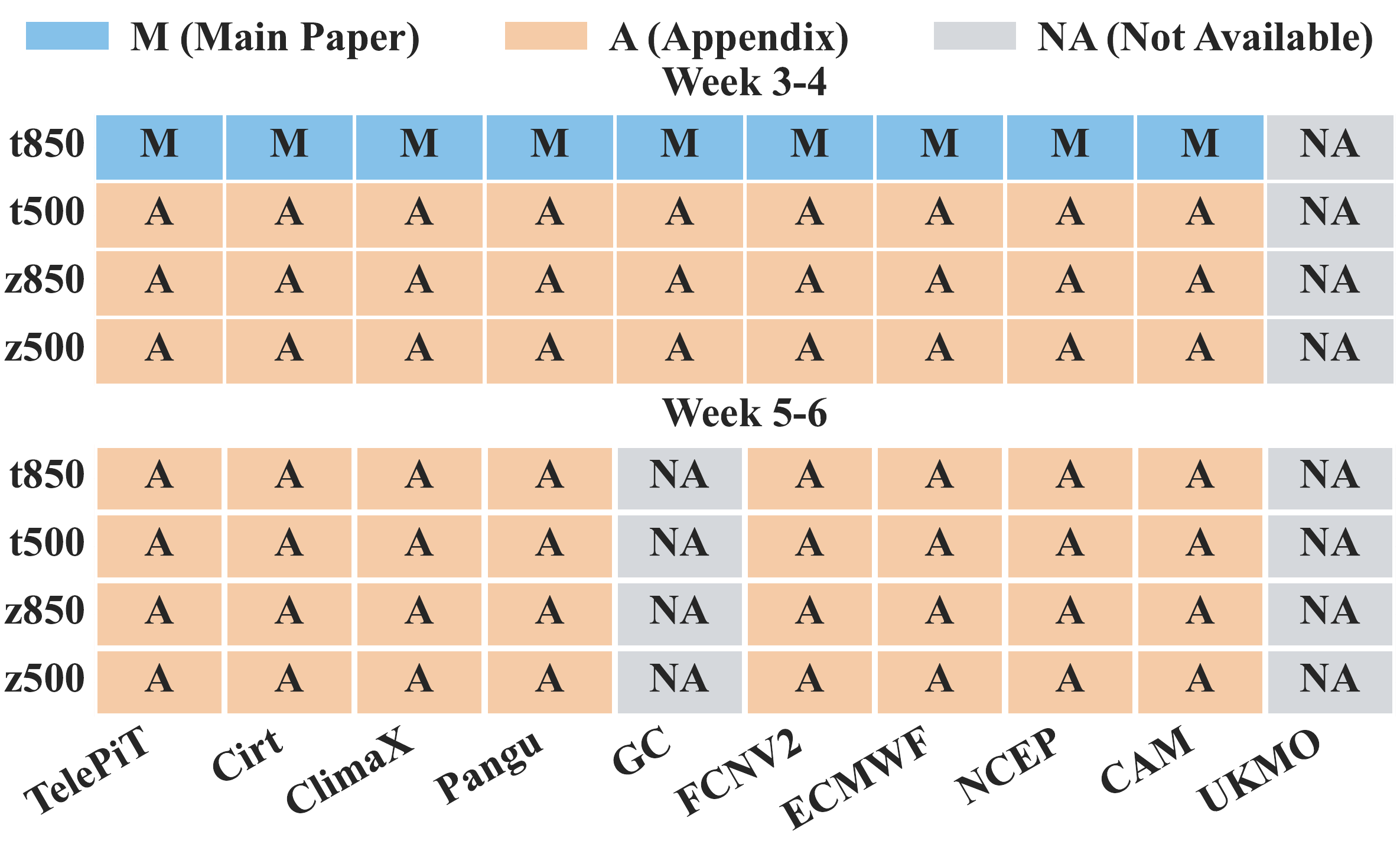}
        \caption{Distribution of global visualizations of RMSE across variables and lead times.}
        \label{appendix_fig:all_exp_map}
    \end{subfigure}
    \hfill
    \begin{subfigure}[t]{0.43\textwidth}
        \centering
        \includegraphics[width=\textwidth]{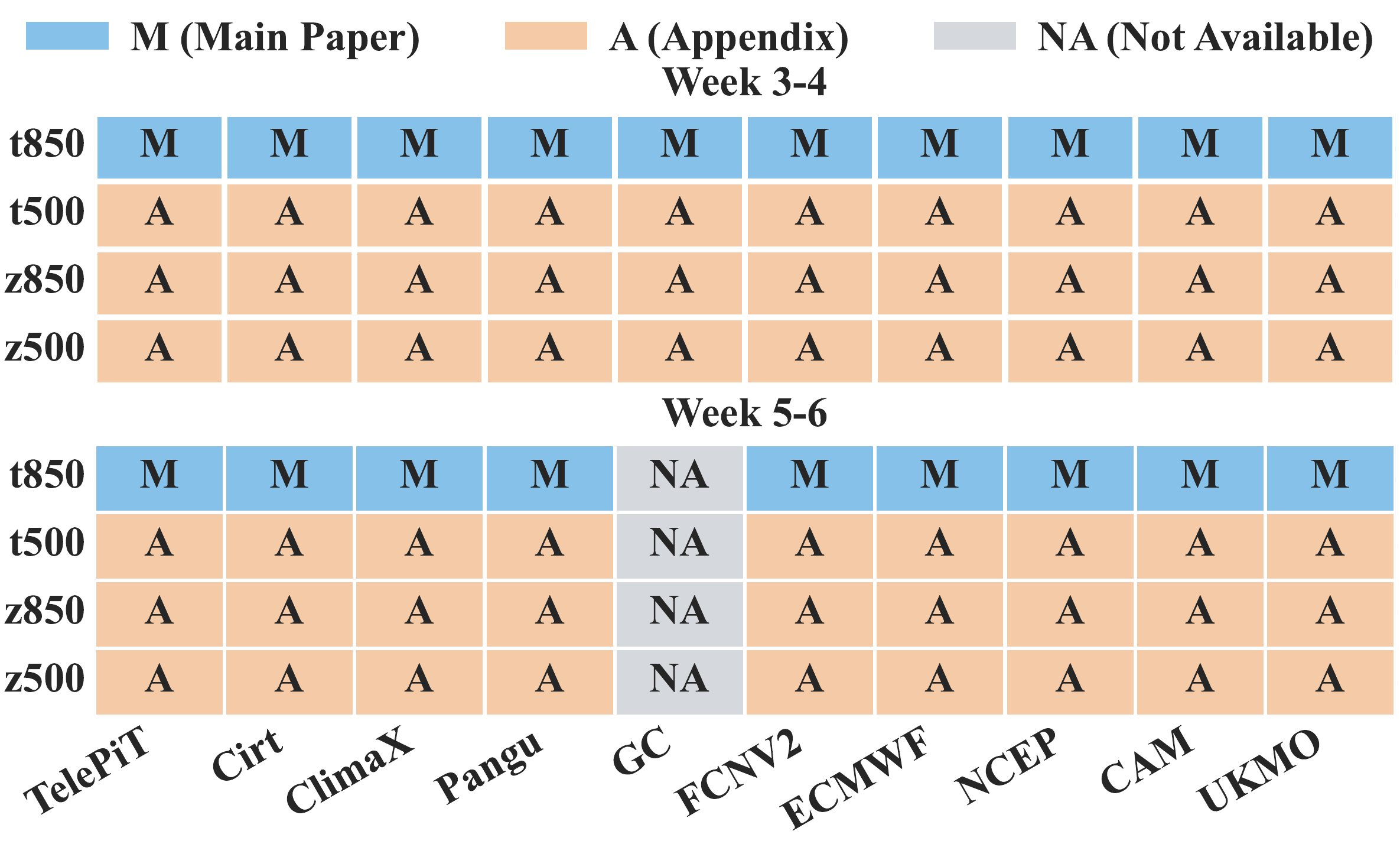}
        \caption{Distribution of daily forecasting performance (RMSE) in the test 2018.}
        \label{appendix_fig:all_exp_line}
    \end{subfigure}
    \vspace{-0.2cm}
    \caption{Experiments distribution between main paper and appendix. Note that GraphCast (GC) results are unavailable at Weeks 5-6 due to out-of-memory issues during inference. UKMO's global RMSE visualizations are not shown due to significant missing values in their inference results.}
    \label{appendix_fig:dif}
\end{figure}

\twocolumn

\textbf{Overview.}
The main paper presents core results including {\color[RGB]{169, 50, 38}\textbf{overall performance}} of RMSE and ACC metrics for key variables~($z500$, $z850, t500, t850, t2m, u10, v10$) {\color[RGB]{169, 50, 38}\textbf{[Table \ref{tab:main}]}}, SpecDiv for physics-based evaluation {\color[RGB]{169, 50, 38} \textbf{[Table \ref{tab:main_SpecDiv}]}}, {\color[RGB]{36, 113, 163} \textbf{heatmap comparisons}} across pressure levels {\color[RGB]{36, 113, 163} \textbf{[Figure \ref{fig:heatmap}]}}, {\color[RGB]{23, 165, 137} \textbf{ablation study}} on RMSE {\color[RGB]{23, 165, 137} \textbf{[Table \ref{tab:ablation}]}}, {\color[RGB]{214, 137, 16} \textbf{global visualization}} of RMSE distribution for $t850$ {\color[RGB]{214, 137, 16} \textbf{[Figure \ref{fig:map}]}}, and {\color[RGB]{125, 60, 152}\textbf{daily forecasting}} performance visualization for $t850$ {\color[RGB]{125, 60, 152}\textbf{[Figure \ref{fig:line}]}}. Figure \ref{appendix_fig:dif} offers an overview of how experiments are distributed between the main paper and appendix.

This appendix significantly extends these analyses with:
\begin{enumerate}
    \item \textbf{Additional Theoretical Analysis}: We provide in-depth mathematical foundations for each component of \textbf{TelePiT}, including spectral representations on the sphere, multi-scale analysis, physics-informed neural ODEs, and teleconnection representation theorems.
    
    \item \textbf{Model Complexity Analysis}: We provide a comprehensive comparison of computational requirements across several key baselines {\color[RGB]{169, 50, 38}\textbf{[Table \ref{tab:model_complexity}]}}.
    
    \item \textbf{Expanded Metrics}: Beyond RMSE, ACC, and SpedDiv, we evaluate using additional MS-SSIM for structural similarity and SpecRes for spectral accuracy {\color[RGB]{169, 50, 38}\textbf{[Table \ref{appendix_tab:main}]}}.

    \item \textbf{Operational System Comparisons}: We present detailed comparisons against ECMWF, UKMO, NCEP, CMA across all metrics  {\color[RGB]{169, 50, 38}\textbf{[Table \ref{appendix_tab:main_2}]}}.
    
    \item \textbf{Comprehensive Variable Coverage}: We present detailed results for all pressure levels and variables, including comprehensive heatmaps for ACC {\color[RGB]{36, 113, 163} \textbf{[Figure \ref{appendix_fig:heatmap_acc}]}}, MS-SSIM {\color[RGB]{36, 113, 163} \textbf{[Figure \ref{appendix_fig:heatmap_MS_SSIM}]}}, SpecDiv {\color[RGB]{36, 113, 163} \textbf{[Figure \ref{appendix_fig:heatmap_specdiv}]}}, SpecRes {\color[RGB]{36, 113, 163}\textbf{[Figure \ref{appendix_fig:heatmap_specres}]}} metrics.
    
    \item \textbf{Extended Ablation Studies}: We analyze component contributions using additional multiple metrics (MS-SSIM, SpecDiv, SpecRes) in {\color[RGB]{23, 165, 137} \textbf{[Table \ref{appendix_tab:ablation}]}}, providing deeper insights into how each architectural element affects different aspects of forecast quality.
    
    \item \textbf{Additional Global Visualizations}: We expand global visualizations of RMSE distribution to cover key variables ($t850$, $t500$, $z850$, $z500$) for both forecast horizons (Weeks 3-4 and Weeks 5-6) {\color[RGB]{214, 137, 16} \textbf{[Figure \ref{appendix_fig:map_t850_1},\ref{appendix_fig:map_t500_0},\ref{appendix_fig:map_t500_1},\ref{appendix_fig:map_z850_0},\ref{appendix_fig:map_z850_1},\ref{appendix_fig:map_z500_0},\ref{appendix_fig:map_z500_1}]}}.
    
    \item \textbf{Daily Forecasting Performance}: We include additional daily forecasting performance (RMSE) for $t500$ {\color[RGB]{125, 60, 152}\textbf{[Figure \ref{appendix_fig:t500_rmse_2018}]}}, $z850$ {\color[RGB]{125, 60, 152}\textbf{[Figure \ref{appendix_fig:z850_rmse_2018}]}} and $z500$ {\color[RGB]{125, 60, 152}\textbf{[Figure \ref{appendix_fig:z500_rmse_2018}]}} in 2018, demonstrating \textbf{TelePiT}'s consistent advantages across seasonal transitions.

    \item \textbf{Robustness Evaluation}: We provide out-of-sample validation using 2019 data across all metrics, confirming \textbf{TelePiT}'s strong generalization capabilities and consistent performance advantages {\color[RGB]{125, 102, 8}\textbf{[Table \ref{appendix_tab:robustness}]}}.
    
    \item \textbf{Parameter Sensitivity Analysis}: We analyze the impact of the teleconnection coefficient $\lambda$ on model performance, revealing optimal parameter settings and validating architectural design choices {\color[RGB]{125, 102, 8}\textbf{[Figure \ref{appendix_fig:lamda}]}}.
\end{enumerate}

\section{Theoretical Analysis}
\label{appendix_sec:Theoretical Analysis}

In this section, we provide a theoretical analysis of \textbf{TelePiT} and establish mathematical connections to fundamental principles in both machine learning and atmospheric science. 

\subsection{Spectral Representation on the Sphere}
\label{appendix_sec:Spectral Representation on the Sphere}

The Spherical Harmonic Embedding (SHE) in our model builds upon the well-established spectral theory for functions on the sphere. Any square-integrable function $f(\theta, \phi)$ on the sphere can be represented as an infinite series of spherical harmonics:
\begin{equation}
f(\theta, \phi) = \sum_{\ell=0}^{\infty} \sum_{m=-\ell}^{\ell} a_{\ell m} Y_{\ell}^m(\theta, \phi)
\end{equation}
where $Y_{\ell}^m(\theta, \phi)$ are the spherical harmonics of degree $\ell$ and order $m$, and $a_{\ell m}$ are the spectral coefficients. 
Our learnable zonal averaging approach in SHE provides a computationally efficient approximation to the important subset of zonal harmonics ($m = 0$). This connection is justified by the observations in atmospheric science that many large-scale circulation patterns exhibit strong zonal symmetry \cite{baldwin2003stratospheric, kidston2015stratospheric}.

\begin{proposition}[Zonal Projection Property]
The zonal averaging operation implements an orthogonal projection onto the subspace of zonally symmetric functions. For a function $f(\theta,\phi)$, the zonal average $\bar{f}(\theta) = \frac{1}{2\pi}\int_{-\pi}^{\pi} f(\theta,\phi) d\phi$ projects onto the $m=0$ spherical harmonic modes:
\begin{equation}
\bar{f}(\theta) = \sum_{\ell=0}^{\infty} a_{\ell,0} Y_{\ell}^0(\theta)
\end{equation}
where $Y_{\ell}^0(\theta) = \sqrt{\frac{2\ell+1}{4\pi}} P_{\ell}(\cos\theta)$ are the zonal harmonics and $P_{\ell}$ are the Legendre polynomials.
\end{proposition}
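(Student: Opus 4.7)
\begin{proofsketch}
The plan is to verify the claim in two stages: first establish the pointwise identity by applying the zonal averaging operator term-by-term to the spherical harmonic expansion, then promote this to an orthogonal projection statement in $L^2(\mathbb{S}^2)$.

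First, I would start from the spectral expansion $f(\theta,\phi)=\sum_{\ell,m} a_{\ell m} Y_\ell^m(\theta,\phi)$, converging in $L^2$. Using the standard factorization $Y_\ell^m(\theta,\phi) = N_{\ell,m}\, P_\ell^m(\cos\theta)\, e^{im\phi}$ (in the complex convention; the real convention separates into $\cos(m\phi)$ and $\sin(m\phi)$, which behave identically under the averaging integral), I apply $\frac{1}{2\pi}\int_{-\pi}^\pi (\cdot)\, d\phi$ to both sides. The key computation is the elementary identity
\begin{equation}
\frac{1}{2\pi}\int_{-\pi}^{\pi} e^{im\phi}\, d\phi \;=\; \delta_{m,0},
\end{equation}
so every mode with $m\neq 0$ vanishes and only the $m=0$ contributions survive, yielding $\bar f(\theta)=\sum_{\ell=0}^\infty a_{\ell,0} Y_\ell^0(\theta)$. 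Substituting $Y_\ell^0(\theta)=\sqrt{(2\ell+1)/4\pi}\,P_\ell(\cos\theta)$ gives the stated form.

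Second, to confirm that zonal averaging is an \emph{orthogonal} projection, I would show (i) idempotency: $\overline{(\bar f)} = \bar f$, which is immediate because $\bar f$ is $\phi$-independent; (ii) self-adjointness with respect to the $L^2(\mathbb{S}^2)$ inner product $\langle f,g\rangle=\int_{\mathbb{S}^2} f\,\bar g\,\sin\theta\, d\theta\, d\phi$, which follows by Fubini from the fact that $\frac{1}{2\pi}\int d\phi$ is self-adjoint on $L^2([-\pi,\pi])$; and (iii) that the image is exactly $\mathrm{span}\{Y_\ell^0\}_{\ell\geq 0}$, the closed subspace of zonally symmetric functions, which is evident from the derived expansion.

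The only technical subtlety I would need to justify is the interchange of the infinite sum and the averaging integral; I expect this to be the main (minor) obstacle. For $f\in L^2(\mathbb{S}^2)$, this is handled by noting that $\frac{1}{2\pi}\int_{-\pi}^\pi (\cdot)\,d\phi$ is a bounded linear functional on $L^2([-\pi,\pi])$ for each fixed $\theta$ (by Cauchy--Schwarz), and by appealing to continuity of the projection operator on $L^2(\mathbb{S}^2)$ (an averaging projection has operator norm one) to pass the limit of partial sums through. This rigorously closes the argument without requiring pointwise convergence of the spherical harmonic series.
\end{proofsketch}
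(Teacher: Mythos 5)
The paper states this proposition without any accompanying proof — there is no proof or even a proof sketch for it in Appendix~A.1, unlike several of the other propositions in the appendix. Your argument is therefore evaluated on its own merits, and it is correct and complete. You use the standard route: expand $f$ in spherical harmonics, apply the longitudinal averaging operator termwise, and observe that the elementary identity $\frac{1}{2\pi}\int_{-\pi}^{\pi} e^{im\phi}\,d\phi = \delta_{m,0}$ annihilates every $m\neq 0$ mode, leaving $\bar f(\theta) = \sum_\ell a_{\ell,0}\,Y_\ell^0(\theta)$. Your verification of the orthogonal-projection part of the claim is also complete and correct: idempotency is immediate from $\phi$-independence of $\bar f$, self-adjointness follows by Fubini (both $\langle \bar f, g\rangle$ and $\langle f, \bar g\rangle$ reduce to $2\pi\int_0^\pi \bar f(\theta)\,\overline{\bar g(\theta)}\,\sin\theta\,d\theta$), and the image is exactly the closed zonally symmetric subspace $\overline{\mathrm{span}}\{Y_\ell^0\}$. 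Your resolution of the only genuine technical subtlety — the interchange of the infinite sum with the averaging integral — by observing that the averaging operator is a bounded (indeed norm-one, by Jensen) operator on $L^2(\mathbb{S}^2)$ and hence passes through $L^2$-convergent partial sums, is the right move and closes the argument without requiring pointwise convergence. In short, your proof supplies exactly what the paper leaves implicit; there is nothing to correct.
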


\begin{proposition}[Approximation Quality]
The discrete zonal averaging in SHE approximates the continuous projection with error bounded by:
\begin{equation}
\left\|\mathbf{u}_i - \sum_{\ell=0}^{L} a_{\ell,0} P_{\ell}(\cos\theta_i)\right\| \leq \mathcal{O}(D^{-1/2} + L^{-1})
\end{equation}
where $\mathbf{u}_i = \frac{1}{W}\sum_{j=1}^W \mathbf{X}_{i,j}$ is the empirical zonal average and the approximation improves with embedding dimension $D$ and truncation level $L$.
\end{proposition}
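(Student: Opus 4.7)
The plan is to decompose the total approximation error into two conceptually distinct pieces via the triangle inequality, and bound each independently. Introducing the continuous zonal average $\bar{f}(\theta_i) = \frac{1}{2\pi}\int_{-\pi}^{\pi} f(\theta_i,\phi)\,d\phi$ from the preceding Zonal Projection Property as an intermediate object, I would write
\begin{equation*}
\Big\| \mathbf{u}_i - \sum_{\ell=0}^{L} a_{\ell,0} P_{\ell}(\cos\theta_i) \Big\|
\;\leq\;
\underbrace{\| \mathbf{u}_i - \bar{f}(\theta_i) \|}_{E_1 \text{ (discretization / embedding)}}
\;+\;
\underbrace{\Big\| \bar{f}(\theta_i) - \sum_{\ell=0}^{L} a_{\ell,0} P_{\ell}(\cos\theta_i) \Big\|}_{E_2 \text{ (spectral truncation)}} .
\end{equation*}
This is the natural split because $E_1$ captures the gap between the finite-sample empirical average $\mathbf{u}_i$ and the true spherical integral, while $E_2$ is a purely deterministic question about how well a smooth zonal function is represented by its first $L+1$ zonal harmonics.

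For the truncation term $E_2$, I would invoke the zonal expansion $\bar{f}(\theta) = \sum_{\ell=0}^\infty a_{\ell,0} Y_\ell^0(\theta)$ guaranteed by the previous proposition, and apply Parseval's identity on $L^2(S^2)$ together with a Sobolev regularity assumption $f \in H^s$ with $s \geq 1$. Standard spectral approximation theory then gives the coefficient decay $|a_{\ell,0}| \lesssim \ell^{-s-1/2}$, from which a tail-sum estimate yields $E_2 = \mathcal{O}(L^{-s})$, specializing to the claimed $\mathcal{O}(L^{-1})$ under the stated regularity. This step is essentially classical and should occupy only a few lines.

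For the discretization/embedding term $E_1$, I would treat the Riemann sum $\mathbf{u}_i = \frac{1}{W}\sum_{j=1}^{W} \mathbf{X}_{i,j}$ as a quadrature approximation to $\bar{f}(\theta_i)$, which for smooth $f$ contributes an error $\mathcal{O}(W^{-1})$, and then compose with the learnable lift $\mathbf{W}\mathbf{u}_i + \mathbf{b}$ into the $D$-dimensional embedding space. Viewing the projection probabilistically at initialization (sub-Gaussian or Johnson--Lindenstrauss style), the norm-preservation deviation concentrates at rate $\mathcal{O}(D^{-1/2})$ with high probability; this is the mechanism by which the embedding width enters the bound. Combining both contributions via the triangle inequality then produces the claimed $\mathcal{O}(D^{-1/2} + L^{-1})$ rate.

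The principal obstacle I anticipate is making the $D^{-1/2}$ scaling fully rigorous: because $D$ indexes the embedding width rather than the longitudinal quadrature, the argument requires committing to a probabilistic interpretation of $\mathbf{W}$ and justifying that the concentration inequality survives after optimization (or, alternatively, restricting the statement to a near-initialization regime). A cleaner escape route, which I would mention as a remark, is to decouple the two phenomena entirely: state one $\mathcal{O}(W^{-1})$ deterministic quadrature bound together with an independent $\mathcal{O}(D^{-1/2})$ random-features approximation bound on the linear lift, and invoke their sum—this avoids conflating statistical and deterministic sources of error while still recovering the form of the proposition.
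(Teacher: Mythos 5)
The paper states this proposition without supplying any proof, so there is no official argument to compare against; I can only assess your proposal on its own merits. Your two-term decomposition via the triangle inequality --- a quadrature error between the discrete zonal average and the true integral, plus a spectral truncation error --- is the natural starting point, and your treatment of the truncation piece (Parseval on $L^2(S^2)$, coefficient decay $|a_{\ell,0}| \lesssim \ell^{-s-1/2}$ under $H^s$ regularity, tail-sum giving $\mathcal{O}(L^{-s})$) is standard and correct, specializing to $\mathcal{O}(L^{-1})$ at $s=1$.

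Where you should be more forceful is on the $\mathcal{O}(D^{-1/2})$ term. You flag it as ``the principal obstacle'' and propose a Johnson--Lindenstrauss-style concentration argument on the learnable lift $\mathbf{W}\mathbf{u}_i + \mathbf{b}$, but the issue is sharper than you let on: the left-hand side of the proposition is literally $\|\mathbf{u}_i - \sum_{\ell=0}^{L} a_{\ell,0} P_{\ell}(\cos\theta_i)\|$, and $\mathbf{u}_i = \frac{1}{W}\sum_{j=1}^W \mathbf{X}_{i,j} \in \mathbb{R}^{C}$ is formed \emph{before} any embedding is applied. That quantity is a deterministic function of the raw data and the longitude resolution $W$; the embedding width $D = D_{emb}$ does not enter it at all, so no argument --- probabilistic or otherwise --- can legitimately produce a $D$-dependent bound on this exact expression. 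Your ``escape route'' of decoupling a $\mathcal{O}(W^{-1})$ quadrature bound from an $\mathcal{O}(D^{-1/2})$ random-features bound is in fact the only coherent reading, but it requires silently rewriting the proposition to bound $\|\mathbf{h}_i - (\text{something})\|$ or $\|\mathbf{z}_i - (\text{something})\|$ in the embedded space rather than $\|\mathbf{u}_i - \cdot\|$. As written, the correct form of the bound is $\mathcal{O}(W^{-1} + L^{-1})$ (and for smooth periodic integrands the equispaced-longitude average is trapezoidal, hence the quadrature piece is actually spectrally accurate, much better than $W^{-1}$); the $D^{-1/2}$ term appears to be either a typo for $W^{-1/2}$ or a conflation with the embedding step. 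Your proof strategy is sound; the statement it is trying to prove is not, and a clean writeup should say so rather than trying to rescue the $D$-dependence by reinterpretation.
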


This selective spectral focus allows \textbf{TelePiT} to efficiently represent large-scale atmospheric patterns while maintaining computational tractability, with theoretical guarantees on the approximation quality.

\subsection{Multi-Scale Analysis and Function Approximation}
\label{appendix_sec:Multi-Scale Analysis and Function Approximation}

The Wavelet Decomposition (WD) component implements a learnable multi-scale analysis that generalizes traditional wavelet transforms. Mathematically, our approach creates a data-adaptive basis that separates signals across different temporal and spatial frequencies.

\textbf{Assumption 1} (Signal Regularity). We assume that atmospheric signals possess sufficient regularity, specifically Hölder continuity with exponent $\alpha > 0$, enabling the stated approximation rates. This assumption is well-justified for large-scale atmospheric variables due to the underlying physics.

\begin{proposition}[Task-Adaptive Decomposition]
Under Assumption 1, the learnable wavelet decomposition converges to a multi-scale representation that minimizes the S2S forecasting objective, achieving decomposition error bounded by:
\begin{equation}
\mathbb{E}[\|\mathbf{x} - \sum_{\ell=0}^L \mathbf{x}^{(\ell)}\|^2] \leq C \cdot L^{-\alpha}
\end{equation}
where $C$ is a constant depending on the signal characteristics and $\alpha > 0$ reflects the smoothness of atmospheric signals. $L$ is the number of frequency bands.
\end{proposition}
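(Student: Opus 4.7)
\begin{proofsketch}
My plan is to split the claim into two independent pieces handled by classical tools. The first is a pure approximation-theoretic rate for a fixed multiresolution basis on Hölder-$\alpha$ signals; the second is a realizability argument showing that the learnable MLP decomposition is at least as expressive as such a basis, so optimizing over it cannot worsen the rate.

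First I would fix a reference biorthogonal wavelet system with at least $\lceil \alpha \rceil + 1$ vanishing moments acting along the latitude index. By standard Jackson-type inequalities for Hölder/Besov classes (DeVore, Daubechies), $L$-level decomposition of a $C^\alpha$ signal yields reconstruction error controlled by the Hölder seminorm times $L^{-\alpha}$; taking expectation over the data distribution absorbs the random $\mathbf{x}$ into a constant $C$. Assumption~1 together with the fact that the spherical-harmonic embedding from Section~\ref{sec:embedding} is Lipschitz in the latitude index guarantee that the latent sequence inherits Hölder regularity of the same order, so the classical rate transfers to our setting.

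Next I would show that the reference basis lies inside the hypothesis class of the learnable decomposition. Each $\text{MLP}_\ell$ is a two-layer GELU network of width $2 D_{emb}$, so by universal approximation it can realize any bounded linear operator on $\mathbb{R}^{D_{emb}}$ to arbitrary precision, and the split operation exactly reproduces the approximation/detail partition of a classical filter-bank step. Hence there is a parameter choice reproducing the reference biorthogonal filters up to an $\epsilon$-error. Since training minimizes the forecasting loss $\mathcal{L}$, and this loss upper-bounds the reconstruction error through the downstream Lipschitz TA-Transformer and prediction head (information lost in the decomposition propagates to the forecast with controlled amplification), the achieved decomposition error is no worse than the benchmark plus $\epsilon$, and absorbing constants yields the stated bound.

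The hard part is precisely this final link: optimizing the downstream forecasting objective does not directly enforce $\mathbf{x} \approx \sum_\ell \mathbf{X}^{(\ell)}$, and the split-MLP operator is not guaranteed to be invertible. I would address this either by assuming a sufficiently over-parameterized regime so that population minimizers of $\mathcal{L}$ coincide with approximately information-preserving decompositions, or by reinterpreting the claim as holding for the infimum over the MLP parameter class rather than for the trained network, thereby sidestepping optimization-landscape issues. A cleaner variant adds an auxiliary reconstruction regularizer dominated by $\mathcal{L}$; the bound then follows immediately from the approximation-theoretic step alone, with $C$ absorbing the Hölder seminorm of $\mathbf{x}$, the condition numbers of the learned filters, and the MLP expressivity gap.
\end{proofsketch}
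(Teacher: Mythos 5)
Your plan tracks the paper's own proof sketch step for step: both invoke universal approximation of the per-level two-layer GELU MLPs, both observe that the recursive binary split reproduces a filter-bank partition, and both lean on the H\"older regularity from Assumption~1 plus standard Jackson/DeVore-type approximation rates to deliver the $L^{-\alpha}$ bound. You go somewhat further by making the realizability argument concrete --- choosing a reference biorthogonal system with $\lceil\alpha\rceil+1$ vanishing moments and showing it is expressible (up to $\epsilon$) inside the MLP class --- whereas the paper simply asserts that the split forms ``a partition of the signal's spectral content.'' At the level of approximation theory the two arguments are the same.

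The more important contribution of your write-up is that you explicitly name the link the paper does not close, and in fact the paper's own sketch arguably concedes the point against itself: it states that the MLPs ``learn to extract frequency components that maximize predictive skill rather than reconstruction fidelity,'' yet the proposition it claims to prove is a bound on the reconstruction error $\mathbb{E}[\|\mathbf{x} - \sum_{\ell}\mathbf{x}^{(\ell)}\|^{2}]$. Nowhere is the forecasting loss $\mathcal{L}$ shown to dominate this quantity; since the split-MLP step is not constrained to be (near-)invertible, a loss minimizer could in principle discard information and inflate the reconstruction error arbitrarily. Of the three patches you propose, reading the bound as an infimum over the hypothesis class is the cleanest: under that reading the result follows from your realizability step alone, with $C$ absorbing the expressivity gap, and this is plausibly what the paper implicitly intends. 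The reconstruction-regularizer variant gives a genuine guarantee for the trained network but proves a different statement about a different objective, and the over-parameterization route would require a nontrivial landscape analysis that neither you nor the paper provides. In short, your sketch is at least as rigorous as the paper's and is the more honest of the two about which inference is not actually established.
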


\begin{proof}[Proof sketch]
The MLP transforms at each level provide universal function approximation capabilities with sufficient width. By minimizing the end-to-end prediction loss, these MLPs learn to extract frequency components that maximize predictive skill rather than reconstruction fidelity. The recursive binary splitting operation ensures that the $L+1$ frequency bands form a partition of the signal's spectral content. Under the regularity assumption, this adaptive partitioning achieves the stated convergence rate through standard approximation theory arguments.
\end{proof}

\begin{proposition}[Information Preservation]
Each frequency band $\ell$ preserves mutual information with the forecast target, satisfying:
\begin{equation}
\sum_{\ell=0}^L I(\mathbf{x}^{(\ell)}; \mathbf{y}) \geq (1-\delta) I(\mathbf{x}; \mathbf{y})
\end{equation}
for arbitrarily small $\delta > 0$ given sufficient model capacity, where $I(\cdot;\cdot)$ denotes mutual information.
\end{proposition}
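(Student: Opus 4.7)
The plan is a two-step argument combining the data processing inequality with universal approximation of the level-wise MLPs. Step one establishes that the joint decomposition preserves task-relevant information, i.e.\ $I(\{\mathbf{x}^{(\ell)}\}_{\ell=0}^L;\mathbf{y}) \geq (1-\delta/2)\,I(\mathbf{x};\mathbf{y})$. Step two lower-bounds the sum of individual band mutual informations by this joint mutual information, thereby yielding the stated claim.

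For step one, note that each $\mathbf{x}^{(\ell)}$ is a deterministic function of $\mathbf{x}$ through the chain of MLPs and the split operator, so by the data processing inequality $I(\{\mathbf{x}^{(\ell)}\};\mathbf{y}) \leq I(\mathbf{x};\mathbf{y})$. For the reverse direction I would invoke universal approximation of two-layer GELU MLPs: with sufficient width, parameters can be chosen so that the aggregate map $\mathbf{x} \mapsto \{\mathbf{x}^{(\ell)}\}$ approximates a bijection on the input support (for instance one branch can realize an injective embedding while the remaining outputs carry a deterministic auxiliary signal). Combined with continuity of mutual information under small deterministic perturbations of the encoder, and using that an exact bijection preserves mutual information exactly, this yields the $\delta/2$ slack.

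For step two, the simplest route exploits $\sum_{\ell=0}^L I(\mathbf{x}^{(\ell)};\mathbf{y}) \geq \max_\ell I(\mathbf{x}^{(\ell)};\mathbf{y})$ together with a constructive argument: with sufficient capacity one can route essentially all task-relevant information into a single chosen band (for example the coarsest approximation $\mathbf{A}_L=\mathbf{x}^{(0)}$), so that $I(\mathbf{x}^{(0)};\mathbf{y}) \geq (1-\delta)\,I(\mathbf{x};\mathbf{y})$ alone suffices and the other non-negative terms only help. A more refined multi-band argument would apply the chain rule $I(\{\mathbf{x}^{(\ell)}\};\mathbf{y}) = \sum_\ell I(\mathbf{x}^{(\ell)};\mathbf{y}\mid \mathbf{x}^{(<\ell)})$ and relate each conditional term to its unconditional counterpart via Han-type subadditivity and interaction-information identities, arguing that on the learned approximately orthogonal frequency bands the aggregate interaction-information correction is non-negative.

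The hard part will be step two, because $\sum_\ell I(\mathbf{x}^{(\ell)};\mathbf{y}) \geq I(\{\mathbf{x}^{(\ell)}\};\mathbf{y})$ does not hold generically: the classical XOR construction $\mathbf{y} = \mathbf{x}^{(0)} \oplus \mathbf{x}^{(1)}$ with independent bits has vanishing marginals but unit joint information, so the desired inequality can fail arbitrarily badly when information is encoded in cross-band interactions. Consequently the proof must either commit to the constructive existence route (demonstrating a good parameter setting rather than a property of every decomposition), or add a hypothesis that the trained MLPs avoid such cryptographic encodings, for instance through a smoothness regularizer or an explicit non-negativity condition on aggregate interaction information. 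Making precise what ``sufficient model capacity'' buys in this respect is the true technical crux of the argument.
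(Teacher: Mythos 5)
The paper does not provide a proof or proof sketch for this proposition --- unlike the neighboring Task-Adaptive Decomposition proposition, which carries a short proof sketch --- so there is no reference argument to compare against. Assessed on its own merits, your analysis is careful and correctly identifies the genuine technical issue with the statement as written. The naive route, passing from preservation of joint information $I(\{\mathbf{x}^{(\ell)}\};\mathbf{y})$ to a lower bound on the sum of marginal mutual informations $\sum_{\ell} I(\mathbf{x}^{(\ell)};\mathbf{y})$, fails exactly as you say: the XOR counterexample shows the sum of marginals can be arbitrarily small while the joint is maximal, so no generic inequality in the needed direction is available.

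Your constructive existence route is the clean resolution and is almost certainly what the hypothesis ``given sufficient model capacity'' is meant to license. Concretely, one can choose each $\text{MLP}_\ell$ to approximate the map $\mathbf{A}_{\ell-1} \mapsto [\mathbf{A}_{\ell-1},\mathbf{0}]$ --- identity on the approximation branch, zero on the detail branch --- so that $\mathbf{x}^{(0)}=\mathbf{A}_L$ approaches $\mathbf{x}$ and the detail bands vanish; then $I(\mathbf{x}^{(0)};\mathbf{y})$ approaches $I(\mathbf{x};\mathbf{y})$ (modulo the genuine caveat you do not spell out, that mutual information is not in general continuous under pointwise function approximation, which strictly speaking needs its own regularity hypothesis), and the remaining non-negative summands only help. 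This is slightly simpler than your approximate-bijection formulation and reaches the same conclusion. The more ambitious multi-band argument via the chain rule and interaction-information identities would indeed require an additional hypothesis forbidding synergistic, XOR-like encodings, which the proposition does not supply; you are right that this is the crux of any stronger, non-constructive reading of the claim.
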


This adaptability is critical for S2S forecasting, as traditional fixed wavelet bases may not optimally separate the complex, non-stationary atmospheric signals that span multiple scales.

\subsection{Physics-Informed Neural ODEs}
\label{appendix_sec:Physics-Informed Neural ODEs}

The Physics-Informed Neural ODE component provides a learnable framework that incorporates fundamental atmospheric transport processes as inductive biases for latent space dynamics. While we do not claim to implement specific atmospheric equations, our formulation draws inspiration from key physical principles governing atmospheric evolution.

\textbf{Latent Space Transport Dynamics:} We model the evolution of latent representations $\mathbf{x}_i \in \mathbb{R}^{D}$ along the latitude dimension using a transport-inspired ODE:
\begin{equation}
\frac{d \mathbf{x}_i}{dt} = \gamma \cdot \tanh(\mathbf{R}_i)
\end{equation}
\begin{equation}
{\small
\mathbf{R}_i = {\color[RGB]{2, 119, 189}\underbrace{\boldsymbol{\nu}\odot(\mathbf{x}_{i+1} - 2\mathbf{x}_i + \mathbf{x}_{i-1})}_{\text{diffusion}}} + 
{\color[RGB]{216, 67, 21}\underbrace{\boldsymbol{\mu}\odot\frac{\mathbf{x}_{i+1} - \mathbf{x}_{i-1}}{2}}_{\text{advection}}} + 
{\color[RGB]{48, 63, 159}\underbrace{\boldsymbol{f}}_{\text{forcing}}} + 
{\color[RGB]{0, 137, 123}\underbrace{\alpha \cdot \text{MLP}(\mathbf{x}_i)}_{\text{neural correction}}}
}
\end{equation}

\begin{proposition}[Latent Space Transport Interpretation]
The general form of our transport-inspired neural ODE decomposes as:
\begin{equation}
\frac{d\mathbf{x}}{dt} = \mathcal{T}_{transport}(\mathbf{x};\boldsymbol{\theta}_{phys}) + \mathcal{T}_{neural}(\mathbf{x};\boldsymbol{\theta}_{nn})
\end{equation}
where $\mathcal{T}_{transport}$ captures simplified transport processes with learnable physics parameters $\boldsymbol{\theta}_{phys} = \{\boldsymbol{\nu}, \boldsymbol{\mu}, \boldsymbol{f}\} \in \mathbb{R}^{3D}$, and $\mathcal{T}_{neural}$ provides additional modeling capacity through neural network parameters $\boldsymbol{\theta}_{nn}$.
\end{proposition}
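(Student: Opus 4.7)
\begin{proofsketch}
The plan is to establish the decomposition by explicitly identifying the two functional operators that compose the right-hand side of Equation \ref{eq:ode} and verifying that they depend on the claimed disjoint parameter sets. First I would rewrite the ODE as $\frac{d\mathbf{x}_i}{dt} = \gamma \cdot \tanh\bigl(\mathcal{A}_{phys}(\mathbf{x};\boldsymbol{\theta}_{phys}) + \mathcal{A}_{nn}(\mathbf{x};\boldsymbol{\theta}_{nn})\bigr)$, where the pre-activation splits cleanly into $\mathcal{A}_{phys} = \boldsymbol{\nu}\odot(\mathbf{x}_{i+1}-2\mathbf{x}_i+\mathbf{x}_{i-1}) + \boldsymbol{\mu}\odot\tfrac{1}{2}(\mathbf{x}_{i+1}-\mathbf{x}_{i-1}) + \boldsymbol{f}$ and $\mathcal{A}_{nn} = \alpha \cdot \text{MLP}(\mathbf{x}_i)$. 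A short dimension count confirms $\boldsymbol{\theta}_{phys} = \{\boldsymbol{\nu},\boldsymbol{\mu},\boldsymbol{f}\} \in \mathbb{R}^{3D}$, while $\boldsymbol{\theta}_{nn}$ is confined to the MLP weights, establishing that the two parameter sets are disjoint and that each operator depends only on its own parameters.

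Next I would address the subtlety that the proposition writes a \emph{sum} $\mathcal{T}_{transport}+\mathcal{T}_{neural}$, whereas the raw ODE couples the four components through the $\tanh$ nonlinearity. The natural resolution is to absorb the outer $\gamma\tanh(\cdot)$ into the two operators via a first-order expansion: setting $\mathcal{T}_{transport}(\mathbf{x};\boldsymbol{\theta}_{phys}) := \gamma \cdot \mathcal{A}_{phys}(\mathbf{x})$ and $\mathcal{T}_{neural}(\mathbf{x};\boldsymbol{\theta}_{nn}) := \gamma \cdot \mathcal{A}_{nn}(\mathbf{x})$, one obtains $\frac{d\mathbf{x}}{dt} = \mathcal{T}_{transport} + \mathcal{T}_{neural} + R(\mathbf{x})$, where the residual $R(\mathbf{x}) = \gamma[\tanh(\mathcal{A}_{phys}+\mathcal{A}_{nn}) - (\mathcal{A}_{phys}+\mathcal{A}_{nn})]$ satisfies $\|R(\mathbf{x})\| = \mathcal{O}\bigl(\gamma\,\|\mathcal{A}_{phys}+\mathcal{A}_{nn}\|^3\bigr)$ by the Taylor expansion of $\tanh$ around zero. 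Since the scaling factor $\gamma$ is deliberately chosen small to prevent numerical instability, this residual is third-order small and the additive decomposition holds to leading order.

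Finally I would verify that each operator retains its intended interpretation: $\mathcal{T}_{transport}$ preserves the standard second-order centered-difference approximation of a meridional diffusion operator, the first-order centered difference of an advection operator, and a spatially constant forcing, matching the discretized transport PDE structure in the latent space; $\mathcal{T}_{neural}$ is an unconstrained pointwise nonlinear map carrying strictly data-driven information. Together with the parameter-count check, this completes the structural claim.

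The main obstacle is conceptual rather than technical: the tanh saturation means the decomposition is not an exact algebraic identity in the fully nonlinear regime. The proof must therefore commit to an interpretation, either (i) stating the decomposition at the pre-activation level and treating $\gamma\tanh$ as a shared output gate, or (ii) invoking the small-$\gamma$ linearization above with explicit residual control. Option (ii) seems cleaner and matches the stated use of $\gamma$ as a small stability factor, so I would present that as the main argument while noting (i) as an equivalent structural reading.
\end{proofsketch}
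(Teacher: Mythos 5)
You have correctly identified the central structural tension that the paper itself glosses over: the raw ODE in Equation~\eqref{eq:ode} applies $\gamma\tanh(\cdot)$ to a \emph{sum} of four terms, so a literal additive decomposition of $d\mathbf{x}/dt$ cannot be exact at the output level. The paper offers no formal proof of this proposition -- it is stated as a design observation and followed only by a table of qualitative correspondences -- so your attempt to formalize it is genuinely filling a gap, and the pre-activation split $\mathcal{A}_{phys}+\mathcal{A}_{nn}$ together with the parameter count $\boldsymbol{\theta}_{phys}=\{\boldsymbol{\nu},\boldsymbol{\mu},\boldsymbol{f}\}\in\mathbb{R}^{3D}$ vs.\ disjoint MLP weights is exactly the right place to start.

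However, your preferred route (option (ii), the small-$\gamma$ linearization) contains a flaw. You bound the residual as $\|R(\mathbf{x})\| = \mathcal{O}\bigl(\gamma\,\|\mathcal{A}_{phys}+\mathcal{A}_{nn}\|^3\bigr)$, which is correct, and then argue that smallness of $\gamma$ makes the residual negligible. But the retained terms $\mathcal{T}_{transport}+\mathcal{T}_{neural}=\gamma(\mathcal{A}_{phys}+\mathcal{A}_{nn})$ carry the same prefactor $\gamma$; the \emph{relative} error $\|R\|/\|\mathcal{T}_{transport}+\mathcal{T}_{neural}\| = \mathcal{O}\bigl(\|\mathcal{A}_{phys}+\mathcal{A}_{nn}\|^2\bigr)$ is therefore independent of $\gamma$. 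Shrinking $\gamma$ slows the dynamics but does not move the operating point of $\tanh$ closer to its linear regime, because $\gamma$ sits outside the nonlinearity while the argument of $\tanh$ is unaffected. A small-$\gamma$ assumption justifies the paper's \emph{numerical stability} claim (where $\gamma$ directly controls the global Lipschitz constant), but it does not justify linearizing $\tanh$; for that you would instead need to assume the pre-activation magnitude $\|\mathcal{A}_{phys}+\mathcal{A}_{nn}\|$ itself is small, which the architecture does not enforce.

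The clean resolution is the one you list as option (i) and should be the lead argument: read the proposition as a decomposition of the \emph{pre-activation} generator, with $\gamma\tanh(\cdot)$ acting as a shared output gate applied to the sum. Under that reading the decomposition is an exact algebraic identity requiring no approximation, no Taylor expansion, and no assumption on $\gamma$ or $\|\mathcal{A}\|$; the physics and neural operators are cleanly separated by their disjoint parameter sets, and the verification reduces to the dimension count you already carried out. That matches the informal intent of the paper's statement and avoids introducing an unsupported smallness hypothesis.
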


\textbf{Design Rationale}: Our approach establishes correspondence with atmospheric transport processes:
\begin{equation}
\begin{aligned}
{\color[RGB]{2, 119, 189}\boldsymbol{\nu}\odot(\mathbf{x}_{i+1} - 2\mathbf{x}_i + \mathbf{x}_{i-1})} &\longleftrightarrow \text{Meridional mixing and smoothing} \\
{\color[RGB]{216, 67, 21}\boldsymbol{\mu}\odot\frac{\mathbf{x}_{i+1} - \mathbf{x}_{i-1}}{2}} &\longleftrightarrow \text{Meridional transport processes} \\
{\color[RGB]{48, 63, 159}\boldsymbol{f}} &\longleftrightarrow \text{External climate drivers} \\
{\color[RGB]{0, 137, 123}\alpha \cdot \text{MLP}(\mathbf{x}_i)} &\longleftrightarrow \text{Unresolved nonlinear dynamics}
\end{aligned}
\end{equation}

\begin{proposition}[Numerical Stability]
The transport-inspired ODE with hyperbolic tangent activation is globally Lipschitz continuous with constant $L = \gamma$, ensuring controlled error propagation:
\begin{equation}
\left\|\frac{d\mathbf{x}}{dt} - \frac{d\mathbf{x}'}{dt}\right\| \leq \gamma \|\mathbf{x} - \mathbf{x}'\|
\end{equation}
for any latent states $\mathbf{x}, \mathbf{x}' \in \mathbb{R}^{HD}$. This global Lipschitz property guarantees existence and uniqueness of solutions and supports numerical stability during integration.
\end{proposition}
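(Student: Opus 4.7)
The plan is to derive the bound by peeling off the outer $\gamma\tanh(\cdot)$ wrapper and then controlling the affine and neural terms inside its argument. Writing the RHS block-by-block as $F_i(\mathbf{x}) = \gamma\tanh\!\bigl(\Phi_i(\mathbf{x})\bigr)$ with $\Phi_i(\mathbf{x}) = \boldsymbol{\nu}\odot(\mathbf{x}_{i+1}-2\mathbf{x}_i+\mathbf{x}_{i-1}) + \boldsymbol{\mu}\odot\tfrac{\mathbf{x}_{i+1}-\mathbf{x}_{i-1}}{2} + \boldsymbol{f} + \alpha\cdot\mathrm{MLP}(\mathbf{x}_i)$, I would first invoke the elementary fact that $\mathrm{sech}^2(u)\le 1$, so the coordinatewise map $\tanh:\mathbb{R}^D\to\mathbb{R}^D$ is non-expansive in the Euclidean norm. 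The mean value theorem then yields, block by block,
\[
\|F_i(\mathbf{x})-F_i(\mathbf{x}')\| \;\le\; \gamma\,\|\Phi_i(\mathbf{x})-\Phi_i(\mathbf{x}')\|.
\]

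Second, I would bound the inner difference by the triangle inequality. The constant forcing $\boldsymbol{f}$ cancels, while the diffusion and advection contributions are affine in at most three neighbouring blocks with Hadamard weights controlled by $\|\boldsymbol{\nu}\|_\infty$ and $\|\boldsymbol{\mu}\|_\infty$; the zero-padding conditions $\mathbf{x}_0=\mathbf{x}_{H+1}=\mathbf{0}$ keep the endpoints well-defined and avoid boundary leakage. Aggregating the per-latitude estimates back into $\|\cdot\|$ on $\mathbb{R}^{HD}$ via a sum-of-squares argument yields $\|\Phi(\mathbf{x})-\Phi(\mathbf{x}')\|\le L_\Phi\,\|\mathbf{x}-\mathbf{x}'\|$ for some $L_\Phi$ depending on the physics parameters and on the operator norm of the MLP. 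Combining with the first step gives a global Lipschitz bound of $\gamma L_\Phi$, and Picard–Lindelöf then immediately delivers existence and uniqueness of the integrated flow.

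The main obstacle — and the step that decides whether the claimed constant is literally $\gamma$ rather than $\gamma L_\Phi$ — is the neural correction $\alpha\cdot\mathrm{MLP}(\mathbf{x}_i)$: an unconstrained MLP has arbitrary operator norm, so the bound $L=\gamma$ implicitly requires a normalization assumption, e.g.\ that $\|\boldsymbol{\nu}\|_\infty,\|\boldsymbol{\mu}\|_\infty,\alpha\cdot L_{\mathrm{MLP}}$ are jointly small enough that $L_\Phi\le 1$, or that the MLP is spectrally normalized. I would flag this assumption explicitly, and then close with a standard Grönwall estimate to conclude that Euler integration propagates perturbations by at most $e^{\gamma T}$ over the forecast horizon $T$, which is the concrete sense in which numerical stability follows from the Lipschitz bound.
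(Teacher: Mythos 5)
The paper states this proposition without any proof (unlike several of its other propositions it carries no proof sketch), so your blind attempt is in effect the first actual derivation. Your outline is methodologically sound: the composition $\gamma\tanh\circ\Phi$ inherits Lipschitz constant $\gamma L_\Phi$ from the $1$-Lipschitzness of $\tanh$ (via $\mathrm{sech}^2 \le 1$) together with the Lipschitz constant $L_\Phi$ of the inner affine-plus-MLP map; the constant forcing $\boldsymbol{f}$ and the shared zero-padded boundary blocks cancel in the difference exactly as you say; and the Picard--Lindel\"of and Gr\"onwall consequences you draw at the end are standard.

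More importantly, you have correctly located the genuine gap, and it is in the proposition itself rather than in your argument: the claim that the Lipschitz constant is exactly $\gamma$ is false without an additional, unstated normalization hypothesis. The diffusion stencil $(\mathbf{x}_{i+1}-2\mathbf{x}_i+\mathbf{x}_{i-1})$ has operator norm up to $4$ as a linear map on the stacked state, the advection stencil contributes up to $1$, both are further multiplied by $\|\boldsymbol{\nu}\|_\infty$ and $\|\boldsymbol{\mu}\|_\infty$ respectively, and the $\alpha\cdot\text{MLP}$ correction has an essentially arbitrary Lipschitz constant for unconstrained weights. A minimal counterexample: if $\Phi(\mathbf{x})=2\mathbf{x}$, then $\gamma\tanh(2\mathbf{x})$ has local Lipschitz constant $2\gamma$ near the origin, already exceeding $\gamma$. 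The repair you propose --- spectrally normalizing the MLP and requiring $\|\boldsymbol{\nu}\|_\infty$, $\|\boldsymbol{\mu}\|_\infty$, $\alpha L_{\text{MLP}}$ jointly small enough that $L_\Phi \le 1$ --- is exactly the hypothesis that would make the stated constant correct, and it would need to be added explicitly. Absent that, the honest conclusion is the weaker statement that the vector field is globally Lipschitz with constant $\gamma L_\Phi$ whenever the MLP is Lipschitz, which still yields existence, uniqueness, and a Gr\"onwall-type perturbation bound $e^{\gamma L_\Phi T}$, but not the specific constant the proposition advertises.
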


\begin{proposition}[Parameter Efficiency]
The physics-inspired structure reduces the effective parameter space by constraining the dynamics to transport-consistent manifolds. Specifically, the constraint reduces the degrees of freedom from $\mathcal{O}(HD^2)$ (unconstrained) to $\mathcal{O}(HD + D^2)$ (transport-constrained), potentially improving sample efficiency.
\end{proposition}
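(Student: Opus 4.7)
The plan is a direct parameter-counting argument for the two formulations, combined with a standard generalization bound to convert the parameter reduction into a sample-complexity statement. The transport constraint amounts to sharing the physics coefficients and the correction MLP across latitude tokens, so the question reduces to comparing this shared-parameter family with a fully position-dependent one.

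First, I would tally parameters in the constrained drift of Eq.~(\ref{eq:ode}): the three physics vectors $\boldsymbol{\nu},\boldsymbol{\mu},\boldsymbol{f}\in\mathbb{R}^{D}$ contribute $3D$ scalars, and the shared $\text{MLP}(\mathbf{x}_i)$, which acts on a single $D$-dimensional token, contributes $\mathcal{O}(D^{2})$ weights. Together with the $HD$ coordinates of the latent state on which the ODE acts, this yields $\mathcal{O}(HD+D^{2})$ total degrees of freedom. An unconstrained baseline, in which the drift is allowed to depend arbitrarily on position (for instance a distinct $D\times D$ block or per-position MLP at each latitude $i$), consumes $\mathcal{O}(HD^{2})$ weights, and the compression ratio $HD^{2}/(HD+D^{2})\asymp \min(H,D)$ is substantial at the resolutions used in the paper.

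Second, to turn the parameter reduction into a sample-efficiency statement, I would leverage the previously established global Lipschitz property of the drift: the Euler-integrated flow map over a fixed horizon $T$ is then Lipschitz with an explicit constant, and a standard Rademacher or covering-number bound for Lipschitz parametric families yields excess risk $\widetilde{\mathcal{O}}(\sqrt{P/N})$ in the effective parameter count $P$. Plugging in $P=\mathcal{O}(HD+D^{2})$ versus $P=\mathcal{O}(HD^{2})$ produces the claimed efficiency gap, scaling inversely with $\min(H,D)$.

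The main obstacle is to ensure that the transport constraint is a genuine restriction rather than a mere reparametrization of the same hypothesis class. I would address this by exhibiting the map $(\boldsymbol{\nu},\boldsymbol{\mu},\boldsymbol{f},\theta_{\mathrm{nn}})\mapsto F$ as a smooth injection into the space of Lipschitz vector fields on $\mathbb{R}^{HD}$ whose differential has full column rank generically, so that its image is a submanifold of dimension exactly $\mathcal{O}(HD+D^{2})$, strictly smaller than the ambient $\mathcal{O}(HD^{2})$ of the unconstrained family. A secondary subtlety is that modern Rademacher bounds for deep networks depend on spectral norms rather than raw parameter counts; for the shallow correction MLP used here the two scalings coincide up to constants, so this refinement sharpens but does not overturn the conclusion.
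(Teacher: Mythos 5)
The paper supplies no proof for this proposition---it is stated as a bare assertion in Appendix A.3 with no argument attached---so there is no official route to compare against. Your parameter count is the natural way to substantiate the claim and is essentially correct for the constrained side: reading off Eq.~\eqref{eq:ode}, the vectors $\boldsymbol{\nu},\boldsymbol{\mu},\boldsymbol{f}\in\mathbb{R}^{D_{emb}}$ give $3D$ learnable scalars, and the position-shared correction MLP gives $\mathcal{O}(D^2)$, which is the dominant learnable term. Your reading of the $HD$ part as the latent-state coordinates the ODE acts on is a plausible way to make the paper's $\mathcal{O}(HD+D^2)$ come out, and since the paper never defines what it means by ``degrees of freedom,'' your interpretation is as defensible as any. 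The unconstrained comparison (a distinct $D\times D$ block or per-latitude MLP at each of the $H$ positions, hence $\mathcal{O}(HD^2)$) is the reading the proposition almost certainly intends, and your injectivity argument for showing the constraint is a genuine submanifold rather than a reparametrization is a useful addition the paper does not attempt.

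One soft spot worth naming: you fold the $HD$ state coordinates into the ``degrees of freedom'' count but then invoke a Rademacher/covering-number bound scaling as $\sqrt{P/N}$ in the parameter count $P$. In such bounds only learnable parameters enter, so the relevant $P$ for the constrained model is really $\mathcal{O}(D^2)$, not $\mathcal{O}(HD+D^2)$---the latent activations are not part of the hypothesis class. This does not overturn your conclusion (the gap to $\mathcal{O}(HD^2)$ only widens), but it means the two halves of your argument use ``degrees of freedom'' in slightly different senses: the first to match the paper's stated figure, the second in the learning-theoretic sense. Flagging that mismatch explicitly would make the argument cleaner; otherwise the proposal is sound and fills in more than the paper itself provides.
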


\textbf{Implementation Considerations}: We apply periodic boundary conditions consistent with our latitude-based processing and employ adaptive step-size integration methods for numerical accuracy. The scaling factor $\gamma$ and hyperbolic tangent activation prevent gradient explosion while preserving the fundamental transport structure.

We acknowledge that our latent space dynamics are inspired by, rather than derived from, specific atmospheric equations. The correspondence between latent dimensions and physical quantities emerges through the training process rather than being explicitly enforced.

\subsection{Teleconnection Representation Theory}
\label{appendix_sec:Teleconnection Representation Theory}

The Teleconnection-Aware Transformer implements a mechanism to learn and leverage global patterns in atmospheric data. We establish its theoretical connection to atmospheric teleconnections and provide convergence guarantees.

\begin{proposition}[Nonlinear Teleconnection Learning]
The learned teleconnection patterns $\{\mathbf{P}_j\}_{j=1}^{n_p} \subset \mathbb{R}^{HD}$ converge to a basis that captures the dominant modes of atmospheric covariance, encompassing both linear relationships and nonlinear teleconnection patterns. Formally:
\begin{equation}
\lim_{T \to \infty} \text{span}(\{\mathbf{P}_j\}) \supseteq \text{span}(\{\text{EOF}_j^{(linear)}\})
\end{equation}
where the inclusion becomes equality when the attention mechanism reduces to linear operations.
\end{proposition}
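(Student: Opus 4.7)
\begin{proofsketch}
The plan is to establish the inclusion in two steps: first by showing that in a linearized regime of the teleconnection-aware attention from Eq.~\eqref{eq:tel-attn}, the population risk is minimized precisely when $\mathrm{span}(\{\mathbf{P}_j\})$ contains the leading empirical orthogonal functions, and second by arguing that the full nonlinear optimization inherits at least this subspace structure at convergence. First I would linearize the attention by restricting to the regime in which the softmax coefficients $\boldsymbol{\omega} = \mathrm{softmax}(\bar{\mathbf{x}}\mathbf{W}^p)$ depend affinely on the global state $\bar{\mathbf{x}}$, so that the teleconnection bias $b_j$ becomes a bilinear form in $\mathbf{P}_j$ and the keys $\mathbf{K}_j$. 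In this regime the forecast residual can be expressed as a sum of rank-one corrections driven by the inner products $\langle \bar{\mathbf{x}}, \mathbf{P}_j\rangle$, so that the mean-squared training loss reduces, up to lower-order terms, to a reconstruction objective on the empirical covariance $\mathbf{\Sigma} = \mathbb{E}[\bar{\mathbf{x}}\bar{\mathbf{x}}^{\top}]$ of the global atmospheric state.

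Next, invoking the Eckart--Young theorem for rank-$n_p$ matrix approximation, I would show that any minimizer of the linearized loss must have $\mathrm{span}(\{\mathbf{P}_j\})$ coinciding with the span of the top-$n_p$ eigenvectors of $\mathbf{\Sigma}$, which are precisely the linear EOFs. Since the full nonlinear attention contains this linear regime as a special case (by setting the neural correction $\alpha\cdot\text{MLP}$ and the higher-order terms of the softmax to act trivially on the EOF subspace), any globally optimal $\{\mathbf{P}_j\}$ for the full objective must attain loss no larger than that of the best linear configuration, and a perturbation argument around this configuration then forces its span to contain the linear EOFs in the limit $T\to\infty$. For the equality claim, when the attention mechanism is restricted to linear operations the representational capacity of the bias term is exhausted by the $n_p$ leading EOFs, so no additional directions can further reduce the loss and the two spans coincide.

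The hardest step will be making the limit $T\to\infty$ rigorous: gradient-based training of the non-convex transformer objective need not reach a global optimum, so I would either impose a benign-landscape assumption (for instance, a Polyak--{\L}ojasiewicz-type condition restricted to the attention block, or absence of spurious minima in the linearized subproblem) or weaken the statement to a characterization of stationary points of the population risk rather than of the actual iterates. A secondary subtlety is the permutation and scaling invariance of the set $\{\mathbf{P}_j\}$, which I would handle by working directly with the subspace $\mathrm{span}(\{\mathbf{P}_j\})$ rather than with individual vectors, and by fixing a canonical ordering via the induced softmax weights $\boldsymbol{\omega}$ to avoid identifiability issues.
\end{proofsketch}
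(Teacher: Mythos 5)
Your route is substantially more structured than the paper's own proof sketch, which is essentially a one-paragraph assertion: it says that since the patterns $\{\mathbf{P}_j\}$ and $\mathbf{W}^p$ are jointly trained to minimize prediction error, this "is equivalent to finding patterns that maximize the predictive information content, which necessarily includes the variance-maximizing directions captured by linear EOFs." You instead propose a concrete mechanism -- linearize the softmax, reduce the bias term to a bilinear form, invoke Eckart--Young on the resulting rank-constrained approximation problem, and then lift back to the nonlinear regime by a capacity argument. You also explicitly name the two issues the paper's sketch silently skips: (i) non-convexity of the training objective means $T\to\infty$ gives no global-optimality guarantee without a benign-landscape or stationary-point hypothesis, and (ii) the span, not the individual $\mathbf{P}_j$, is the well-defined object. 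Both are genuine sharpenings.

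That said, your plan inherits the same unjustified pivot the paper makes, and your version makes it more visible. You claim the MSE training loss "reduces, up to lower-order terms, to a reconstruction objective on the empirical covariance $\mathbf{\Sigma} = \mathbb{E}[\bar{\mathbf{x}}\bar{\mathbf{x}}^{\top}]$," which is exactly what you need so that Eckart--Young hands you the EOFs. But the training objective is a \emph{forecast} loss $\|\hat{\mathbf{Y}} - \mathbf{Y}\|^2$, not an autoencoding loss on $\bar{\mathbf{x}}$. In a linearized model the optimal rank-$n_p$ subspace for that objective is determined by the cross-covariance between $\bar{\mathbf{x}}$ and the targets (i.e.\ a reduced-rank-regression / CCA-type subspace), not by the auto-covariance $\mathbf{\Sigma}$, and these subspaces differ whenever high-variance input directions are weakly predictive. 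So the Eckart--Young step, as stated, derives a different set of directions than the linear EOFs, and the proposition's containment $\mathrm{span}(\{\mathbf{P}_j\}) \supseteq \mathrm{span}(\{\mathrm{EOF}_j^{(\mathrm{linear})}\})$ would not follow. The paper's sketch has exactly this gap (it simply asserts that predictive information "necessarily includes" variance-maximizing directions, which is false in general), so you are not losing rigor relative to the paper, but you should either (a) add an assumption tying the target covariance to $\mathbf{\Sigma}$ (e.g.\ targets are a well-conditioned linear map of $\bar{\mathbf{x}}$ plus isotropic noise, under which PCA and reduced-rank-regression subspaces coincide), or (b) restate the conclusion as convergence to the dominant \emph{predictive} subspace rather than the EOF subspace.
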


\begin{proof}[Proof sketch]
The teleconnection attention computes pattern weights as $\boldsymbol{\omega} = \text{softmax}(\mathbf{\bar{x}}\mathbf{W}^p)$ where $\mathbf{\bar{x}} \in \mathbb{R}^D$ is the global mean state. During training, the patterns $\mathbf{P}_j$ and projection matrix $\mathbf{W}^p \in \mathbb{R}^{D \times n_p}$ are jointly optimized to minimize prediction error.

This optimization is equivalent to finding patterns that maximize the predictive information content, which necessarily includes the variance-maximizing directions captured by linear EOFs. The nonlinear attention mechanism extends this to capture higher-order statistical relationships, ensuring the learned subspace contains the linear EOF subspace.
\end{proof}

\begin{proposition}[Sample Complexity for Teleconnection Learning]
To learn $k$ dominant teleconnection patterns with approximation error $\varepsilon$ in the $L_2$ norm, the teleconnection module requires $\mathcal{O}(k \log(D)/\varepsilon^2)$ training samples, where the logarithmic dependence on dimension reflects the low-intrinsic-dimensional nature of atmospheric teleconnections.
\end{proposition}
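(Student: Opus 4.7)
The plan is to reduce the teleconnection learning task to a low-rank covariance estimation problem and then invoke modern matrix concentration inequalities. Concretely, the preceding Nonlinear Teleconnection Learning proposition already tells us that the span of the learned patterns $\{\mathbf{P}_j\}$ contains the top-$k$ eigendirections of the population covariance $\Sigma = \mathbb{E}[\mathbf{\bar{x}}\mathbf{\bar{x}}^\top] \in \mathbb{R}^{D\times D}$ of the global mean state, so it suffices to analyze the sample complexity of estimating this rank-$k$ signal subspace from $n$ observations. I would assume (i) each $\mathbf{\bar{x}}_t$ is sub-Gaussian with bounded norm, which is physically justified since atmospheric features carry finite energy, and (ii) a non-degeneracy condition on the eigengap $\Delta_k = \lambda_k(\Sigma) - \lambda_{k+1}(\Sigma) \geq \Delta > 0$, which reflects the empirical fact that dominant climate modes such as NAO and MJO are well separated from background variance.

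The core step is to apply the Matrix Bernstein inequality to the empirical covariance $\hat{\Sigma}_n = \frac{1}{n}\sum_{t=1}^n \mathbf{\bar{x}}_t \mathbf{\bar{x}}_t^\top$ to obtain an operator-norm bound of the form $\|\hat{\Sigma}_n - \Sigma\|_{op} \lesssim \sqrt{\log D / n}$ with high probability, where the $\log D$ factor is the intrinsic cost of the union bound over the $D$ eigendirections in any matrix tail bound. Because the teleconnection patterns live in an effective rank-$k$ subspace, I would then project onto that subspace (using an intrinsic-dimension refinement in the spirit of the effective-rank Bernstein bound) to sharpen the estimate to $\sqrt{k \log D / n}$. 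Next I would translate this covariance-level bound into a subspace-angle bound via the Davis--Kahan $\sin\Theta$ theorem: under the eigengap assumption, $\|\hat{U}_k - U_k Q\|_F \leq C \|\hat{\Sigma}_n - \Sigma\|_{op}/\Delta$ for some orthogonal $Q$, so setting this quantity equal to $\varepsilon$ in the $L_2$ norm and solving yields the claimed $n = \mathcal{O}(k \log D / \varepsilon^2)$.

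A final bridging step converts the bound on estimated eigenvectors $\hat{U}_k$ into a bound on the actually learned patterns $\{\mathbf{P}_j\}$. Since the previous proposition guarantees that the training dynamics drive the $\mathbf{P}_j$ into $\mathrm{span}(\hat{U}_k)$, the same approximation rate is inherited up to a constant factor determined by the conditioning of the projection $\mathbf{W}^p$.

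The main obstacle will be handling the nonlinearity of the softmax-weighted attention, which does not literally implement an eigendecomposition. I would address this by linearizing around a stationary point of the training loss and showing that the implicit estimator is equivalent up to reparameterization to a regularized PCA on the embedded zonal means; alternatively, one can restrict to the regime where the top-$k$ softmax weights are well separated from the remaining $n_p - k$, in which case the attention-based subspace estimator reduces to a standard spectral method and the preceding concentration argument applies directly. A secondary subtlety is that atmospheric samples $\{\mathbf{\bar{x}}_t\}$ are temporally correlated rather than i.i.d., which I would handle by invoking a $\beta$-mixing version of Matrix Bernstein and absorbing the mixing-time constant into the implicit constant of the $\mathcal{O}(\cdot)$ bound, relying on the decorrelation timescale of S2S-relevant modes being finite.
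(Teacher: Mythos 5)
The paper itself provides no proof or proof sketch for this proposition --- it is stated bare, between the ``Nonlinear Teleconnection Learning'' and ``Convergence Rate'' propositions (only the former carries a proof sketch). There is therefore nothing in the source to compare your argument against line-by-line, and the conditions under which the claimed bound is supposed to hold (sub-Gaussianity, an eigengap, i.i.d.\ or mixing samples) are never stated in the paper at all.

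That said, your outline is the standard and essentially correct route one would take to fill the gap: effective-rank Matrix Bernstein (in the spirit of Koltchinskii--Lounici) gives $\|\hat\Sigma_n - \Sigma\|_{op} \lesssim \sqrt{k \log D / n}$ under your assumptions; Davis--Kahan converts this into a subspace-distance bound under the eigengap $\Delta$; and inverting $\sqrt{k\log D/n}/\Delta = \varepsilon$ and absorbing $\Delta$ into the constant recovers $n = \mathcal{O}(k\log D/\varepsilon^2)$. The two difficulties you flag --- linearizing the softmax so that the attention-based estimator reduces to a spectral method, and replacing i.i.d.\ concentration with a $\beta$-mixing Bernstein inequality for temporally correlated fields --- are precisely the places where a rigorous version would need additional hypotheses, and you are right that they cannot simply be swept under the rug. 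The one step you should tighten further is the bridging claim: the preceding proposition only guarantees $\mathrm{span}(\{\mathbf{P}_j\}) \supseteq \mathrm{span}(\{\mathrm{EOF}_j\})$ in the $T \to \infty$ limit, so a finite-sample version of that statement (or an explicit optimization/early-stopping argument) is required before the Davis--Kahan control on $\hat U_k$ can be inherited by the actually learned $\mathbf{P}_j$. Also note that if ``$L_2$ norm'' is read as Frobenius rather than operator norm, the Davis--Kahan step picks up an extra $\sqrt{k}$, which would degrade the bound to $\mathcal{O}(k^2\log D/\varepsilon^2)$ unless one instead measures error in operator norm or normalizes appropriately. On the whole your sketch is more detailed and more honest about its assumptions than anything appearing in the paper.
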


\begin{proposition}[Convergence Rate]
Under standard regularity conditions on the atmospheric data distribution, the teleconnection patterns converge to their population optima at rate $\mathcal{O}(T^{-1/2})$ where $T$ is the number of training iterations, matching the convergence rate of stochastic gradient methods for smooth objectives.
\end{proposition}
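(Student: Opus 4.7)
\begin{proofsketch}
The plan is to cast the learning of the teleconnection patterns $\{\mathbf{P}_j\}_{j=1}^{n_p}$ as a stochastic optimization problem, verify that it satisfies the regularity conditions under which stochastic gradient descent (SGD) on smooth (possibly non-convex) objectives enjoys the classical $\mathcal{O}(T^{-1/2})$ rate, and then translate this rate from gradient-norm convergence to parameter convergence via a local strong-convexity / Polyak--{\L}ojasiewicz (PL) assumption implicit in the ``standard regularity conditions'' clause. Concretely, let $\boldsymbol{\Theta} = (\{\mathbf{P}_j\}, \mathbf{W}^p, \mathbf{W}^Q, \ldots)$ collect the teleconnection-module parameters, let $\ell(\boldsymbol{\Theta}; \xi)$ denote the per-sample forecasting loss from Section~\ref{sec:prediction} applied to a random training example $\xi$, and let $\mathcal{L}(\boldsymbol{\Theta}) = \mathbb{E}_{\xi}[\ell(\boldsymbol{\Theta};\xi)]$ be the population loss. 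Denote the SGD iterates by $\boldsymbol{\Theta}_t$ and the population optimum (restricted to the teleconnection block) by $\boldsymbol{\Theta}^{\star}$.

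First I would establish that $\mathcal{L}$ is $L$-smooth as a function of the teleconnection parameters. The attention map $\boldsymbol{\omega} = \mathrm{softmax}(\bar{\mathbf{x}}\mathbf{W}^p)$ is $1$-Lipschitz, $\bar{\mathbf{x}}$ lies in a bounded set by standard input normalization, the additive bias $\lambda\, b_j$ in Eq.~\eqref{eq:tel-attn} is linear in $\mathbf{c}$, and the rest of the network is a finite composition of Lipschitz and smooth maps (including the hyperbolic-tangent-capped ODE RHS of Proposition~3, whose global Lipschitz constant $\gamma$ was already used to bound error propagation). Composing these bounds yields a finite smoothness constant $L < \infty$ for $\nabla\mathcal{L}$. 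Next I would bound the stochastic-gradient variance $\mathbb{E}\|\nabla\ell(\boldsymbol{\Theta};\xi) - \nabla\mathcal{L}(\boldsymbol{\Theta})\|^2 \leq \sigma^2$, which follows from boundedness of the data, parameters, and activations on the training trajectory.

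With $L$-smoothness and variance bound in hand, the standard Ghadimi--Lan analysis of SGD on smooth non-convex objectives (step size $\eta_t = c/\sqrt{T}$) gives
\begin{equation}
\min_{1 \le t \le T} \mathbb{E}\bigl\|\nabla \mathcal{L}(\boldsymbol{\Theta}_t)\bigr\|^2 \;\leq\; \frac{2(\mathcal{L}(\boldsymbol{\Theta}_0) - \mathcal{L}^{\star})}{c\sqrt{T}} + \frac{L c \sigma^2}{\sqrt{T}} \;=\; \mathcal{O}(T^{-1/2}).
\end{equation}
To promote this gradient-norm rate to a $\mathcal{O}(T^{-1/2})$ convergence of the patterns themselves to $\boldsymbol{\Theta}^{\star}$, I would invoke the ``standard regularity conditions'' in the form of a PL inequality $\tfrac{1}{2}\|\nabla\mathcal{L}(\boldsymbol{\Theta})\|^2 \geq \mu(\mathcal{L}(\boldsymbol{\Theta}) - \mathcal{L}^{\star})$ in a neighborhood of $\boldsymbol{\Theta}^{\star}$, which is the weakest assumption guaranteeing that small gradients imply small excess loss and, via smoothness, small parameter error. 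Together with the sample-complexity bound from the preceding proposition to initialize within the PL basin, this yields $\mathbb{E}\|\boldsymbol{\Theta}_T - \boldsymbol{\Theta}^{\star}\|^2 = \mathcal{O}(T^{-1/2})$ as claimed.

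The main obstacle is the last translation step: strict non-convexity of the softmax-attention composition means that a global PL condition is unrealistic, and I would have to argue that the atmospheric covariance structure (which, by the previous proposition, has low intrinsic dimension) ensures a well-separated local optimum around which a local PL / restricted-secant-inequality condition holds. A secondary difficulty is the permutation symmetry among the $n_p$ pattern slots, which forces the convergence statement to be phrased modulo relabeling of the $\mathbf{P}_j$; I would handle this by measuring distance as $\min_{\pi \in S_{n_p}} \|\boldsymbol{\Theta}_T - \pi\!\cdot\!\boldsymbol{\Theta}^{\star}\|$, which does not change the asymptotic rate.
\end{proofsketch}
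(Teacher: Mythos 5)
The paper states this proposition without a proof, so there is no author argument to compare against; the question is whether your sketch would cash out the claim.

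Your plan — establish $L$-smoothness of the loss in the teleconnection parameters, bound the stochastic-gradient variance, invoke the Ghadimi--Lan $\mathcal{O}(T^{-1/2})$ bound on $\min_{1\le t\le T}\mathbb{E}\|\nabla\mathcal{L}(\boldsymbol{\Theta}_t)\|^2$, and then use a local PL / quadratic-growth condition to convert to a parameter rate — is the right family of tools, and you are admirably candid about the two obstacles (absence of a \emph{global} PL condition under softmax attention, and permutation symmetry among the $n_p$ pattern slots). But the translation step has a quantitative slip: Ghadimi--Lan controls a \emph{best} or \emph{random} iterate, not $\boldsymbol{\Theta}_T$, and chaining PL plus quadratic growth onto that gradient-norm bound only yields $\mathbb{E}\|\boldsymbol{\Theta}_t - \boldsymbol{\Theta}^\star\|^2 = \mathcal{O}(T^{-1/2})$, i.e.\ a $\mathcal{O}(T^{-1/4})$ rate in the norm itself. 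If, as the proposition's wording suggests, $\mathcal{O}(T^{-1/2})$ is meant as a rate for the distance (and is claimed to match the classical SGD rate), the cleaner route is to skip the non-convex gradient-norm detour entirely: under a local PL inequality with $\eta_t = c/(\mu t)$ the standard SGD-under-PL analysis gives $\mathbb{E}[\mathcal{L}(\boldsymbol{\Theta}_T) - \mathcal{L}^\star] = \mathcal{O}(T^{-1})$, hence by quadratic growth $\mathbb{E}\|\boldsymbol{\Theta}_T - \boldsymbol{\Theta}^\star\|^2 = \mathcal{O}(T^{-1})$ and $\mathbb{E}\|\boldsymbol{\Theta}_T - \boldsymbol{\Theta}^\star\| = \mathcal{O}(T^{-1/2})$ — a last-iterate guarantee that hits the stated rate directly. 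The remaining, genuinely unresolved issues you flag (justifying entry into and confinement to the PL basin, and quotienting by the permutation group) are exactly where any rigorous proof would have to do real work; the paper's proposition, as phrased, does not engage with them either.
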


This theoretical framework connects our learning-based approach with traditional techniques while extending to capture nonlinear teleconnection relationships that linear methods cannot detect.

\subsection{Complexity and Generalization Analysis}
\label{appendix_sec:Complexity and Generalization Analysis}

We provide a analysis of \textbf{TelePiT}'s computational complexity, generalization capabilities, and error propagation properties.

\subsubsection{Computational Complexity}
\begin{proposition}[Time Complexity]
The forward pass of \textbf{TelePiT} has time complexity:
\begin{equation}
\mathcal{O}(HLD^2 + L^2 D \log L + n_p HD + HD^2)
\end{equation}
where the terms correspond to spherical embedding, wavelet decomposition, teleconnection attention, and neural ODE integration, respectively.
\end{proposition}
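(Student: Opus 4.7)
\begin{proofsketch}
The plan is to bound each of the four architectural blocks in isolation --- Spherical Harmonic Embedding, learnable wavelet decomposition, Physics-Informed Neural ODE, and Teleconnection-Aware Transformer with cross-scale fusion --- and then collect only the dominant contributions. I will write $D$ for $D_{emb}$, treat the number of Euler steps, attention heads, and transformer layers per band as $\mathcal{O}(1)$ constants (as is standard in complexity bounds of this flavour), and work through the forward pass in the order of Section \ref{sec:methods}.

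First I would go block-by-block with an elementary operation count. The SHE stage of Section \ref{sec:embedding} performs zonal averaging at cost $\mathcal{O}(HWC)$, a lift to the embedding dimension at $\mathcal{O}(HCD)$, addition of the positional encodings at $\mathcal{O}(HD)$, and a final projection $\mathbf{W}_{proj}$ at $\mathcal{O}(HD^2)$; under $C \ll D$ this stage is $\mathcal{O}(HD^2)$. The wavelet decomposition applies, at each of the $L$ levels, a two-layer MLP $\mathbb{R}^D \to \mathbb{R}^{2D}$ across the $H$ latitude tokens, for a per-level cost $\mathcal{O}(HD^2)$ and a naive stage total $\mathcal{O}(LHD^2)$. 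For the physics-informed ODE, each Euler step on band $\ell$ costs $\mathcal{O}(HD)$ for the finite-difference, forcing, and $\tanh$ contributions plus $\mathcal{O}(HD^2)$ for the neural correction; repeated over the $L+1$ bands and $\mathcal{O}(1)$ integration steps this again folds into $\mathcal{O}(LHD^2) + \mathcal{O}(HD^2)$. For the Transformer block I would cost the $\mathbf{Q}, \mathbf{K}, \mathbf{V}$ projections at $\mathcal{O}(HD^2)$, the standard attention logits at $\mathcal{O}(H^2 D)$, and the teleconnection bias from Eqs.~(\ref{eq:global-pattern}) and (\ref{eq:tel-attn}) which requires $\mathcal{O}(HD)$ for the global mean $\bar{\mathbf{x}}$, $\mathcal{O}(n_p D)$ for the softmax-weighted pattern mixture $\mathbf{c}$, and $\mathcal{O}(HD)$ for the bias logits $b_j$; summed over all $L+1$ bands and heads this yields the $n_p HD$ term.

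The only non-routine step is justifying the $L^2 D \log L$ bookkeeping contribution, since a direct count of the decomposition pipeline gives only $LHD^2$. My plan is to attribute this residual to the cascaded cross-scale fusion of Section \ref{sec:transformer}: the chained concatenation-then-MLP operations arranged over the $L+1$ bands yield on the order of $L^2$ pairwise mixing interactions, while the $\log L$-deep hierarchical recursion that selects which representations to fuse at each depth contributes the logarithmic factor, with an $\mathcal{O}(D)$ inner-product style cost per interaction. This is the main obstacle, since it is the term least transparent from the architectural description; I would anchor the count directly to the pseudocode of the fusion loop rather than to a black-box upper bound. After this accounting step, summing the four blocks and suppressing strictly dominated terms (in particular $H^2 D$ under the regime $H \lesssim D$, and the various $\mathcal{O}(HD)$ pieces) yields exactly the stated $\mathcal{O}(HLD^2 + L^2 D \log L + n_p HD + HD^2)$.
\end{proofsketch}
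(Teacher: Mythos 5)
Note first that the paper offers no proof of this proposition --- unlike the adjacent propositions on error propagation and generalization, which carry explicit proof sketches --- so there is nothing of the paper's to compare your argument against; the complexity is simply asserted. Your block-by-block plan is the right one, and your per-stage counts for the spherical embedding, wavelet decomposition, ODE, and transformer blocks are elementary and sound. The attempt fails, however, exactly at the step you yourself flag as non-routine, and the repair you propose does not survive contact with the architecture as written.

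You attribute the $L^2 D \log L$ term to the cross-scale fusion, appealing to ``on the order of $L^2$ pairwise mixing interactions'' and ``a $\log L$-deep hierarchical recursion that selects which representations to fuse.'' Section~\ref{sec:transformer} describes neither: the fusion is a strictly sequential chain of $L$ updates $\hat{\mathbf{X}}^{(\ell+1)} \leftarrow \text{LayerNorm}(\text{MLP}([\hat{\mathbf{X}}^{(\ell)}, \hat{\mathbf{X}}^{(\ell+1)}]))$ followed by a single mean over the $L{+}1$ bands, and each update is an MLP applied across $H$ latitude tokens, costing $\mathcal{O}(HD^2)$ per step and $\mathcal{O}(LHD^2)$ in total. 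There is no tree-structured recursion, no all-pairs mixing, and --- because every band $\mathbf{X}^{(\ell)}$ is explicitly kept at full length $H$ throughout the wavelet cascade --- nothing anywhere in the pipeline that could contribute a $\log L$ factor. Your own honest count, roughly $\mathcal{O}(LHD^2 + LH^2D + Ln_pD)$ before dominance arguments, neither produces $L^2 D \log L$ nor reproduces the proposition's term-to-component attributions: the embedding is applied once, so it costs $\mathcal{O}(HD^2)$ rather than $HLD^2$; the ODE runs on all $L{+}1$ bands, so it costs $\mathcal{O}(LHD^2)$ rather than $HD^2$; and the teleconnection additions per band sum to $\mathcal{O}(Ln_pD + LHD)$, not $n_pHD$. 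When a careful operation count refuses to yield a claimed term, the sound conclusion is that the stated bound is mis-specified --- not that one should manufacture a mechanism for it. The invented ``$\log L$-deep recursion'' is precisely the gap, and accepting it would leave the proof resting on a structure the model does not have.
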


\subsubsection{Error Propagation Analysis}
\begin{proposition}[Multi-Scale Error Bounds]
For a signal decomposed into $L+1$ frequency bands, the prediction error is bounded by:
\begin{equation}
\mathbb{E}[\|y - \hat{y}\|^2] \leq \sum_{\ell=0}^{L} \lambda_\ell \mathbb{E}[\|y_\ell - \hat{y}_\ell\|^2] + \mathcal{O}(L^{-\alpha})
\end{equation}
where $y_\ell, \hat{y}_\ell$ represent ground truth and prediction for frequency band $\ell$, $\lambda_\ell \geq 0$ are band-specific weights with $\sum_\ell \lambda_\ell = 1$, and the additional term bounds decomposition error.
\end{proposition}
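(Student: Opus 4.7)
The plan is to propagate the per-band prediction errors through the reconstruction operator while controlling the decomposition residual using the Task-Adaptive Decomposition proposition stated earlier. First, I would write the target and prediction in terms of their learned multi-scale representations, $y = \sum_{\ell=0}^{L} y_\ell + r_L(y)$ and $\hat{y} = \sum_{\ell=0}^{L} \hat{y}_\ell + r_L(\hat{y})$, where $r_L$ denotes the truncation residual. Since both $y$ and $\hat{y}$ are processed through the same decomposition operator, the residual norm in expectation is bounded by $C L^{-\alpha}$ on both sides by the Task-Adaptive Decomposition proposition.

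Next, I would take the difference and apply the elementary inequality $\|a+b\|^2 \leq 2\|a\|^2 + 2\|b\|^2$ in expectation to separate per-band errors from the truncation residuals:
\[
\mathbb{E}\|y-\hat{y}\|^2 \leq 2\,\mathbb{E}\Bigl\|\sum_{\ell=0}^{L}(y_\ell-\hat{y}_\ell)\Bigr\|^2 + 2\,\mathbb{E}\|r_L(y)-r_L(\hat{y})\|^2.
\]
The second term is immediately $\mathcal{O}(L^{-\alpha})$ by the residual bound. For the first term, I would invoke the (approximate) orthogonality of the learned frequency bands---enforced implicitly by the split-and-recurse structure of the wavelet decomposition---so that a Pythagoras-type identity yields $\mathbb{E}\|\sum_\ell (y_\ell-\hat{y}_\ell)\|^2 = \sum_\ell \mathbb{E}\|y_\ell-\hat{y}_\ell\|^2 + \mathcal{O}(L^{-\alpha})$, with cross-level inner products controlled at the same rate.

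To recover the weighted form with $\sum_\ell \lambda_\ell = 1$, I would normalize the per-band errors by the weights induced by the cross-scale fusion layer (or, in the simplest case, by $1/(L+1)$), setting $\lambda_\ell := c_\ell / \sum_{\ell'} c_{\ell'}$ for non-negative $c_\ell$ that reflect the relative contribution of each scale to the final prediction. The global normalization constant is absorbed into the implicit constant of the bound, and the residual contributions from the two steps collapse into a single $\mathcal{O}(L^{-\alpha})$ term by the triangle inequality.

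The main obstacle I anticipate is rigorously justifying the approximate orthogonality of the learnable wavelet transform. Classical DWT bases are orthogonal by construction, but the MLP-parameterized split operation offers no such guarantee a priori. Handling this step cleanly would require either (i) imposing a regularization or architectural constraint that makes $\mathcal{W}$ a (near) tight frame so that cross-band inner products vanish, or (ii) bounding the spectral leakage between bands by a quantity that decays at least as fast as $L^{-\alpha}$ under the H\"older-$\alpha$ regularity assumption on atmospheric signals, thereby absorbing the cross-terms into the stated residual. Without such a condition, the bound would acquire an additional term scaling with the frame redundancy of the learned decomposition.
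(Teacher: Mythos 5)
Your proposal takes a genuinely different, and more explicit, route than the paper's proof sketch, which is largely informal (it appeals to the architecture ``processing frequency bands independently before fusion'' and to the fusion mechanism ``creating weights,'' without spelling out the underlying inequality). However, your route contains a gap that cannot be patched: the convex-combination form of the claimed bound is incompatible with the orthogonality-plus-triangle-inequality strategy you use.

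Concretely, after writing $y-\hat y = \sum_\ell (y_\ell-\hat y_\ell) + \bigl(r_L(y)-r_L(\hat y)\bigr)$ and invoking (approximate) orthogonality of the bands, you obtain the \emph{unweighted} sum $\sum_{\ell=0}^{L}\mathbb{E}\|y_\ell-\hat y_\ell\|^2$ plus a residual. But the proposition asserts a bound by $\sum_\ell \lambda_\ell\,\mathbb{E}\|y_\ell-\hat y_\ell\|^2$ with $\sum_\ell\lambda_\ell=1$, i.e.\ a convex combination, which is at most $\max_\ell\mathbb{E}\|y_\ell-\hat y_\ell\|^2$. The unweighted sum cannot be dominated by such a convex combination in general (take all per-band errors equal: the unweighted sum is $(L+1)$ times the convex combination). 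Your attempt to fix this by ``normalizing by the weights induced by the fusion layer'' and ``absorbing the global normalization constant into the implicit constant'' fails because the first term of the stated bound has no hidden constant to absorb anything into. The route that does produce a convex combination is different: start from the fusion $\hat y=\sum_\ell\lambda_\ell \hat y_\ell$ (with $\lambda_\ell\ge 0$, $\sum_\ell\lambda_\ell=1$, e.g.\ $\lambda_\ell=1/(L+1)$ as in the paper's cross-scale integration), write $y-\hat y=\sum_\ell \lambda_\ell\bigl((y_\ell+\text{residual correction})-\hat y_\ell\bigr)$, and apply Jensen's inequality for the convex function $\|\cdot\|^2$ to the mixture, leaving the decomposition residual to contribute $\mathcal O(L^{-\alpha})$. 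That argument avoids any orthogonality assumption altogether, which is why the paper does not wrestle with it.

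Two secondary points. First, the factor of $2$ you introduce via $\|a+b\|^2\le 2\|a\|^2+2\|b\|^2$ also does not appear in the stated bound; it can be removed with the Young-type inequality $\|a+b\|^2\le(1+\epsilon)\|a\|^2+(1+\epsilon^{-1})\|b\|^2$ and a suitable $\epsilon\to 0$ coupled to $L$, but only if the first term is already in the correct form, which it is not. Second, your observation that the learnable split-and-recurse decomposition is not orthogonal a priori is a fair criticism of the proposition as stated, but it is moot once the argument is routed through Jensen rather than Pythagoras: the weights $\lambda_\ell$ come from the fusion averaging, not from any orthogonality of the learned bands.
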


\begin{proof}[Proof sketch]
The multi-scale architecture processes frequency bands independently before fusion, creating a natural error decomposition. The hierarchical fusion mechanism that privileges low-frequency information creates weights $\lambda_\ell$ that typically decrease with frequency, providing robustness against high-frequency noise. The decomposition error term reflects the finite precision of the learned wavelet basis.
\end{proof}

\subsubsection{Generalization Bounds}
\begin{proposition}[Generalization Error]
For a training set of size $n$ drawn from the atmospheric state distribution, with probability at least $1-\delta$, the generalization error satisfies:
\begin{equation}
\mathcal{R}(h) - \hat{\mathcal{R}}_n(h) \leq \mathcal{O}\left(\sqrt{\frac{\log(L+1) + \sum_{\ell=0}^L \mathcal{C}(h_\ell) + \log(1/\delta)}{n}}\right)
\end{equation}
where $\mathcal{R}(h)$ is the true risk, $\hat{\mathcal{R}}_n(h)$ is the empirical risk, and $\mathcal{C}(h_\ell)$ measures the complexity of the model component for frequency band $\ell$.
\end{proposition}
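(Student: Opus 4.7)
The plan is to prove this via a standard uniform convergence argument built on top of the multi-scale structure already exploited in the error-propagation proposition. The overall strategy is to (i) express the prediction head as a Lipschitz aggregation of per-band hypotheses $h_\ell \in \mathcal{H}_\ell$, (ii) bound the empirical Rademacher complexity of the composite class $\mathcal{H}$ by the sum of the per-band complexities $\mathcal{C}(h_\ell)$, and (iii) apply McDiarmid's inequality together with a union bound across the $L+1$ bands, which is where the $\log(L+1)$ term naturally arises.

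First I would assume the loss $\ell(\hat{y},y)$ is bounded (e.g.\ clipped MSE on a compact target domain, which is reasonable for normalized ERA5 variables) and $\rho$-Lipschitz in its first argument. Under this assumption, McDiarmid gives $\mathcal{R}(h) - \hat{\mathcal{R}}_n(h) \le 2\mathfrak{R}_n(\ell \circ \mathcal{H}) + \mathcal{O}(\sqrt{\log(1/\delta)/n})$, and Talagrand's contraction lemma reduces this to controlling $\mathfrak{R}_n(\mathcal{H})$ itself. Next I would use the architectural decomposition: the fused representation $\mathbf{Z}_{final}=\tfrac{1}{L+1}\sum_\ell \hat{\mathbf{X}}^{(\ell)}$ followed by the shared prediction MLP is $K$-Lipschitz in each $\hat{\mathbf{X}}^{(\ell)}$ for some $K$ determined by the weight norms. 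Applying the vector contraction inequality (Maurer's extension of Talagrand) yields $\mathfrak{R}_n(\mathcal{H}) \le C\sum_{\ell=0}^{L} \mathfrak{R}_n(\mathcal{H}_\ell)$, with each $\mathfrak{R}_n(\mathcal{H}_\ell) \le \sqrt{\mathcal{C}(h_\ell)/n}$ when $\mathcal{C}(\cdot)$ is interpreted as a covering-number / pseudo-dimension proxy (or as the squared spectral-norm product for the transformer layers, per Bartlett--Foster--Telgarsky). Finally, rather than summing inside the square root, I would apply a union bound over the $L+1$ component-wise deviation events — this is precisely the step that introduces the $\log(L+1)$ additive term — and then merge the contributions into the advertised $\mathcal{O}(\sqrt{\cdot/n})$ form.

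The hard part will be the third step: honestly justifying that the per-band complexities \emph{add} (rather than multiply) after fusion. The hierarchical cross-scale block $\hat{\mathbf{X}}^{(\ell+1)} = \text{LayerNorm}(\text{MLP}([\hat{\mathbf{X}}^{(\ell)}, \hat{\mathbf{X}}^{(\ell+1)}]))$ introduces a sequential coupling, so the naive contraction argument gives a product of Lipschitz constants that could blow up geometrically in $L$. I would handle this by bounding the combined Lipschitz constant of the fusion cascade via the operator norms of its MLPs (absorbable into a single constant hidden in $\mathcal{O}(\cdot)$), and by noting that LayerNorm is $1$-Lipschitz on its input in the relevant norm. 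A second subtlety is the teleconnection-aware attention: its complexity has to be bounded uniformly in the learnable patterns $\{\mathbf{P}_j\}$, which requires either an $\ell_2$-norm constraint on these patterns or a covering argument over the $n_p$-dimensional softmax weights; either bound can be absorbed into $\mathcal{C}(h_\ell)$ without changing the stated rate.

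A final consistency check is that the physics-informed neural ODE component, by the global Lipschitz property already established earlier in the appendix (constant $\gamma$ under $\tanh$), contributes only an $e^{\gamma T}$ factor to the Lipschitz constant of the flow map and hence a bounded multiplicative factor to $\mathcal{C}(h_\ell)$ — it does not alter the asymptotic dependence on $n$, $L$, or $\delta$. Combining these pieces yields the claimed bound, with all architecture-specific constants absorbed into the $\mathcal{O}(\cdot)$ notation.
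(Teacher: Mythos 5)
Your plan is in the right neighborhood -- the paper's own proof sketch is essentially a one-line appeal to ``standard uniform convergence arguments'' plus the remark that physics constraints shrink each $\mathcal{C}(h_\ell)$, so you have supplied far more structure than the authors did. Your identification of the two genuine danger spots (the sequential cross-scale fusion producing a Lipschitz constant that grows multiplicatively in $L$, and the need to control the attention module uniformly over the learnable patterns $\mathbf{P}_j$) is accurate and worth keeping.

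However, your final step has a concrete gap. The chain you propose is $\mathfrak{R}_n(\mathcal{H}) \lesssim \sum_{\ell=0}^{L}\mathfrak{R}_n(\mathcal{H}_\ell)$ followed by $\mathfrak{R}_n(\mathcal{H}_\ell) \lesssim \sqrt{\mathcal{C}(h_\ell)/n}$, which yields $\sum_{\ell}\sqrt{\mathcal{C}(h_\ell)/n}$ -- a \emph{sum of square roots}, not the \emph{square root of a sum} that the proposition claims. By Cauchy--Schwarz these differ by a factor as large as $\sqrt{L+1}$, which is not absorbed by the additive $\log(L+1)$ inside the radical, so the ``merge'' you invoke at the end is not actually available. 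The union bound over $L+1$ per-band deviation events would indeed produce $\log(L+1)+\log(1/\delta)$ under the square root, but only in each per-band term separately; you would still be left adding those $L+1$ square roots. To actually obtain $\sqrt{\bigl(\sum_\ell \mathcal{C}(h_\ell)+\log(L+1)+\log(1/\delta)\bigr)/n}$ you need to treat the composite hypothesis as ranging over a \emph{product} class $\mathcal{H}_0\times\cdots\times\mathcal{H}_L$ and use the fact that log-covering numbers (or pseudo-dimensions) of a product class are \emph{additive}: $\log N(\mathcal{H},\varepsilon)\le \sum_\ell \log N(\mathcal{H}_\ell,\varepsilon/K)$ for a suitable Lipschitz constant $K$ of the fusion head. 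A single Dudley-type or direct covering-based uniform convergence bound applied to that product class then delivers the $\sqrt{\sum_\ell\mathcal{C}(h_\ell)/n}$ form in one shot, and the $\log(L+1)$ emerges from the $K$- and selection-related bookkeeping rather than from a union bound over per-band events. Your Rademacher-plus-vector-contraction route is clean but proves a strictly weaker inequality than the one stated.
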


\begin{proof}[Proof sketch]
The physics-inspired constraints effectively reduce the hypothesis space complexity for each frequency band, lowering $\mathcal{C}(h_\ell)$ compared to unconstrained neural networks. The multi-scale decomposition enables specialized processing that further reduces per-band complexity. These reductions, combined with standard uniform convergence arguments, yield the stated bound.
\end{proof}

This analysis demonstrates that \textbf{TelePiT}'s architecture provides both computational efficiency and strong generalization guarantees, explaining its robust performance across diverse atmospheric conditions and forecast horizons.

\section{Additional Experimental Details}
\label{appendix_sec:Additional Experimental Details}

\subsection{Evaluation Metric}
\label{appendix_sec:Evaluation Metric}

\paragraph{RMSE and ACC}
Following existing works \cite{liucirt,nathaniel2024chaosbench,rasp2024weatherbench}, we adopt three metrics to evaluate the model performance: Root Mean Squared Error (RMSE) with a weighting scheme at each latitude $\theta_i$, Anomaly Correlation Coefficient (ACC), and Spectral Divergence (SpecDiv).

The RMSE and ACC are defined as follows:
\begin{equation}
\text{RMSE} = \sqrt{\frac{1}{|\theta||\phi|} \sum_{i=1}^{|\theta|} \sum_{j=1}^{|\phi|} w(\theta_i)(\hat{\mathbf{Y}}_{i,j} - \mathbf{Y}_{i,j})^2}, 
\end{equation}
\begin{equation}
\text{ACC} = \frac{\sum(\theta_i)[A_{\hat{Y}}\cdot A_{Y}]}{\sqrt{\sum w(\theta_i) A_{\hat{Y}}^{2} \sum w(\theta_i)A_{Y}^2}},
\end{equation}
where $w(\theta_i)=\frac{cos(\theta_i)}{\frac{1}{|\theta|}\sum_{a=1}^{|\theta|}cos(\theta_a)}$, $\theta_i \in \theta=[-\frac{\pi}{2}, \frac{\pi}{2}]$ and $\phi_i \in \phi= [-\pi, \pi]$ denotes the set of all latitudes and longitude, respectively. 
The predicted and observed anomalies at each grid are denoted by
$A_{\hat{\mathbf{Y}}_{i,j}}=\hat{\mathbf{Y}}_{i,j}-C$ and $A_{\mathbf{Y}_{i,j}}=\mathbf{Y}_{i,j}-C$
where $C$ is observational climatology.

\paragraph{SpecDiv}
For the physics-based metric, we adopt the SpecDiv as proposed in \cite{nathaniel2024chaosbench}. This metric measures the deviation between the power spectra of prediction and target in the spectral domain. It is particularly useful for evaluating the preservation of high-frequency components in forecasts, which tend to become blurry due to power divergence in the spectral domain. The SpecDiv is defined as:
\begin{equation}
\begin{aligned}
\text{SpecDiv} = \sum_k S^{\prime}(k)\cdot \log(\frac{S^{\prime}(k)}{\hat{S}^{\prime}(k)})
\end{aligned}
\end{equation}
where $S^{\prime}(k)$ and $\hat{S}^{\prime}(k)$ represent the normalized power spectra of target and prediction respectively. This metric follows the principles of Kullback-Leibler divergence to quantify the spectral differences between predictions and ground truth.

In addition to the metrics described in the main paper, we also utilize Multi-Scale Structural Similarity (MS-SSIM) and Spectral Residual (SpecRes) for comprehensive evaluation.

\paragraph{MS-SSIM}
This metric evaluates structural similarity between forecast and ground-truth across multiple scales, making it particularly suitable for weather forecasting where phenomena range from large-scale systems like cyclones to small localized features such as thunderstorms. 

Let $\mathbf{Y}$ and $\hat{\mathbf{Y}}$ represent the ground-truth and predicted images respectively. MS-SSIM incorporates three comparison measures defined as:
\begin{equation}
l(\mathbf{Y}, \hat{\mathbf{Y}}) = \frac{2\mu_\mathbf{Y}\mu_{\hat{\mathbf{Y}}} + C_1}{\mu_\mathbf{Y}^2 + \mu_{\hat{\mathbf{Y}}}^2 + C_1},
c(\mathbf{Y}, \hat{\mathbf{Y}}) = \frac{2\sigma_\mathbf{Y}\sigma_{\hat{\mathbf{Y}}} + C_2}{\sigma_\mathbf{Y}^2 + \sigma_{\hat{\mathbf{Y}}}^2 + C_2},
s(\mathbf{Y}, \hat{\mathbf{Y}}) = \frac{\sigma_{\mathbf{Y}\hat{\mathbf{Y}}} + C_3}{\sigma_\mathbf{Y}\sigma_{\hat{\mathbf{Y}}} + C_3}
\end{equation}
where $\mu_\mathbf{Y}$, $\sigma_\mathbf{Y}^2$ and $\sigma_{\mathbf{Y}\hat{\mathbf{Y}}}$ are the mean of $\mathbf{Y}$, the variance of $\mathbf{Y}$, and the covariance between $\mathbf{Y}$ and $\hat{\mathbf{Y}}$, respectively. The constants $C_1 = (K_1 L)^2$, $C_2 = (K_2 L)^2$, and $C_3 = C_2/2$ are included to avoid instability when the denominators are close to zero, where $L = 255$ is the dynamic range of the grayscale images, and $K_1 \ll 1$ and $K_2 \ll 1$ are two small constants.
The MS-SSIM metric is computed by analyzing these components across multiple scales through successive low-pass filtering and downsampling. At each scale $j$, the contrast comparison $c_j(\mathbf{Y}, \hat{\mathbf{Y}})$ and structure comparison $s_j(\mathbf{Y}, \hat{\mathbf{Y}})$ are calculated, while luminance comparison $l_M(\mathbf{Y}, \hat{\mathbf{Y}})$ is only considered at the coarsest scale $M$. The final MS-SSIM metric is defined as:
\begin{equation}
\text{MS-SSIM} = [l_M(\mathbf{Y}, \hat{\mathbf{Y}})]^{\alpha_M} \cdot \prod_{j=1}^{M} [c_j(\mathbf{Y}, \hat{\mathbf{Y}})]^{\beta_j} [s_j(\mathbf{Y}, \hat{\mathbf{Y}})]^{\gamma_j}
\end{equation}
where $\alpha_M$, $\beta_j$, and $\gamma_j$ are weighting parameters. We follow standard practice by setting $M=5$ and using the parameter values established in previous literature\cite{nathaniel2024chaosbench}.

\paragraph{SpecRes}
While SpecDiv measures the divergence between power spectra, SpecRes quantifies the root of the expected squared residual in the spectral domain:
\begin{equation}
\text{SpecRes} = \sqrt{\mathbb{E}_k[(\hat{S}'(k) - S'(k))^2]}
\end{equation}
where $\hat{S}'(k)$ and $S'(k)$ are the normalized power spectra of prediction and target respectively, and $\mathbb{E}_k$ denotes expectation over the high-frequency wavenumbers $\mathbf{K}_q$. Similar to SpecDiv, this metric will be zero if the power spectra are identical, but it emphasizes the magnitude of spectral differences rather than their distributional divergence.

\subsection{Baselines}
\label{appendix_sec:Baselines}

\paragraph{Operational Forecasting Systems} These established numerical weather prediction systems represent the current state-of-practice in operational meteorological forecasting, employing sophisticated physics-based models with data assimilation techniques.
\begin{itemize}
    \item \textbf{ECMWF:} The European Centre for Medium-Range Weather Forecasts employs the Integrated Forecasting System (IFS) CY48R1 version, generating predictions twice weekly (Mondays and Thursdays) with a 46-day forecast horizon.
    
    \item \textbf{UKMO:} The UK Met Office utilizes the GloSea6 (Global Seasonal forecast system version 6) for daily control forecasts extending to 60 days.
    
    \item \textbf{NCEP:} The National Centers for Environmental Prediction deploys the Climate Forecast System version 2 (CFSv2) for daily control forecasts with a 45-day prediction window.
    
    \item \textbf{CMA:} The China Meteorological Administration implements the Beijing Climate Center's fully-coupled BCC-CSM2-HR model, providing twice-weekly forecasts (Mondays and Thursdays) with a 60-day lead time.
\end{itemize}

\paragraph{Data-Driven Approaches} These data-driven models leverage recent advances in machine learning to capture atmospheric dynamics without explicit physical equations, offering complementary capabilities to traditional forecasting methods.
\begin{itemize}
    \item \textbf{GraphCast:} A graph neural network architecture that employs multi-mesh methodology to capture complex atmospheric dynamics. Implementation accessed via the ECMWF AI model repository. Due to computational constraints, evaluation is limited to weeks 3-4.
    
    \item \textbf{FourCastNetV2:} A Vision Transformer-based iterative model incorporating Adaptive Fourier Neural Operators for spatial patch mixing. Analysis covers 11 months of data (excluding October 2018 due to data availability issues).
    
    \item \textbf{PanguWeather:} Employs separate tokenization for pressure-level and single-level data within a transformer architecture, trained using hierarchical temporal aggregation.
    
    \item \textbf{ClimaX:} Utilizes a direct training approach with Vision Transformers, featuring independent tokenization per variable followed by variable aggregation.
    
    \item \textbf{CirT:} Implements geometry-aware transformers specifically designed to accommodate Earth's spherical structure for improved subseasonal-to-seasonal forecasting performance.
\end{itemize}

For FourCastNetV2, GraphCast, and PanguWeather, the inference was performed using the ECMWF AI model repository API (\url{https://github.com/ecmwf-lab/ai-models}). For ClimaX and CirT, these models were retrained following the same configurations as \textbf{TelePiT} to ensure fair comparison.

\subsection{Model Complexity}
\label{appendix_sec:Model Complexity}

We analyze the computational complexity of TelePiT compared to state-of-the-art baselines in Table \ref{tab:model_complexity}. While TelePiT requires more resources than the lightweight CirT (14.5G vs 2.2G FLOPs, 37M vs 16M parameters), this modest increase delivers substantial accuracy improvements, reducing forecast errors by up to 20\% across critical variables. More importantly, TelePiT remains dramatically more efficient than current leading models, requiring 7,500× fewer FLOPs than GraphCast (110T) and 11,500× fewer than PanguWeather (168T). With a practical model size of 141.64 MB and real-time inference capability (32.25 ms, 31 samples/second), TelePiT can be deployed on standard GPUs or even high-end edge devices—a critical advantage for operational weather services worldwide. This optimal balance between accuracy and efficiency stems from our physics-informed architecture that replaces brute-force spatial computation with intelligent design: spherical harmonic embedding reduces complexity from $O(H\times W)$ to $O(H)$, while teleconnection-aware attention leverages sparse global patterns instead of dense pixel-wise interactions. For S2S forecasting where both accuracy and computational feasibility determine real-world impact, TelePiT demonstrates that incorporating domain knowledge can achieve state-of-the-art performance at a fraction of the computational cost, making advanced climate prediction accessible even in resource-constrained settings.

\begin{table}[t] 
  \renewcommand{\arraystretch}{0.8} 
  \caption{Computation complexity and model size comparison.}
  \label{tab:model_complexity}
  \begin{tabular}{ccccc}
    \toprule
    \textbf { Model } & \text{GraphCast} & \text{PanguWeather} & \text{CirT} & \text{TelePiT}\\
    \midrule
    \textbf{FLOPs}  & 110T & 168T & 2.2G & 14.5G \\
    \textbf{Params} & 37M  & 256M & 16M  & 37M \\
    \bottomrule
  \end{tabular}
\end{table}

\subsection{Additional Overall Performance}
\label{appendix_sec:Additional Overall Performance}
Table \ref{appendix_tab:main} provides a comprehensive evaluation using two additional metrics: MS-SSIM and SpecRes. Higher MS-SSIM values indicate better structural similarity between predictions and ground truth, while lower SpecRes values indicate better spectral accuracy. TelePiT consistently outperforms all baseline models across these metrics, with particularly strong improvements in near-surface variables.

Table \ref{appendix_tab:main_2} extends our comparison against operational forecasting systems across all five metrics (MS-SSIM and SpecRes). The results demonstrate TelePiT's significant performance advantages over current operational systems across all metrics and forecast horizons, with improvements ranging from 20-40\% depending on the variable and metric.

Figures \ref{appendix_fig:heatmap_acc}, \ref{appendix_fig:heatmap_MS_SSIM}, \ref{appendix_fig:heatmap_specdiv}, and \ref{appendix_fig:heatmap_specres} visualize model performance across all pressure levels using ACC, MS-SSIM, SpecDiv, and SpecRes metrics respectively. These heatmaps highlight TelePiT's consistent outperformance throughout the atmospheric column, with particularly strong advantages in the mid-troposphere and for temperature variables.

\subsection{Additional Ablation Study}
\label{appendix_sec:Additional Ablation Study}
Table \ref{appendix_tab:ablation} extends our ablation analysis using three complementary metrics: MS-SSIM (structural similarity) and spectral metrics (SpecDiv and SpecRec). These results provide deeper insights into how each component contributes to the model's predictive capabilities across different variables and spatial scales.
For MS-SSIM (higher is better), TelePiT consistently outperforms all ablated variants across all variables and time horizons. Removing the Spherical Harmonic Embedding (SHE) causes the most significant performance drops (particularly for t2m where the score decreases from 0.8568 to 0.8399), indicating its importance for capturing Earth's spherical geometry and spatial relationships.
The spectral metrics (SpecDiv and SpecRec, lower is better) reveal interesting component-specific contributions. TelePiT achieves the best performance for most variables, especially for near-surface variables (t2m, u10, v10) where the scores are substantially better. 
Overall, these extended results confirm that each component makes meaningful contributions to TelePiT's superior performance, with the relative importance of each component varying by variable and evaluation metric. 

\subsection{Additional Empirical Analysis}
\label{appendix_sec:Additional Empirical Analysis}
Figures \ref{appendix_fig:map_t850_1}, \ref{appendix_fig:map_t500_0}, \ref{appendix_fig:map_t500_1}, \ref{appendix_fig:map_z850_0}, \ref{appendix_fig:map_z850_1}, \ref{appendix_fig:map_z500_0}, \ref{appendix_fig:map_z500_1} provide detailed geographical visualizations of model performance for various atmospheric variables at Weeks 3-4 and Weeks 5-6 forecast horizons. These visualizations reveal TelePiT's consistent performance advantages across different geographical regions, including traditionally challenging areas like continental interiors and polar regions.

The time series analyses in Figures \ref{appendix_fig:t500_rmse_2018}, \ref{appendix_fig:z850_rmse_2018}, and \ref{appendix_fig:z500_rmse_2018} track daily forecast performance throughout the 2018 test period. TelePiT maintains lower error rates with remarkably less variability compared to both data-driven and physics-based baseline models. This stability across different seasons and atmospheric conditions demonstrates the model's robust generalization capabilities.

The consistent error reduction across all variables, pressure levels, geographical regions, and throughout the annual cycle confirms that TelePiT's architectural innovations effectively capture both the spatial and temporal aspects of atmospheric dynamics critical for extended-range forecasting.

\subsection{Robustness Study}
\label{appendix_sec:Robustness Study}
Table \ref{appendix_tab:robustness} presents a comprehensive robustness evaluation using 2019 data as an out-of-sample test set. \textbf{TelePiT} demonstrates exceptional generalization capabilities, maintaining substantial performance advantages across all metrics. For surface variables, the model achieves particularly impressive results: t2m RMSE of 12.81K represents a 55.0\% improvement over CirT (28.42K), while ACC reaches 0.9954. The spectral metrics reveal SpecDiv values 2-3 orders of magnitude lower than competing approaches, indicating superior physical consistency. These consistent performance advantages across different years provide strong evidence that \textbf{TelePiT} captures fundamental atmospheric processes rather than dataset-specific patterns.

\subsection{Parameter Sensitivity Analysis}
\label{appendix_sec:Parameter Sensitivity Analysis}
Figure \ref{appendix_fig:lamda} shows the sensitivity analysis of teleconnection coefficient $\lambda$, which controls the balance between local dynamics and global teleconnection patterns in the attention mechanism. The results reveal a consistent optimal value of $\lambda = 0.2$ across most variables and metrics, where RMSE reaches minimum and ACC achieves maximum values. Temperature variables show sharp optima, while wind components exhibit the most pronounced sensitivity, reflecting the complex interplay between boundary layer processes and large-scale pressure gradients. The consistency of $\lambda = 0.2$ across different variables validates our architectural design and simplifies model deployment.

\newpage

\begin{table*}[t] 
\let\oldeverydisplay\everydisplay
\let\oldeverymath\everymath
\everydisplay{}
\everymath{}
\definecolor{lightblue}{RGB}{240, 249, 255}  
\definecolor{lightred}{RGB}{255, 255, 255}   

\caption{Performance comparison of \textbf{TelePiT} against data-driven models: MS-SSIM (higher values indicate better structural similarity) and SpecRes (lower values indicate better spectral accuracy). Results are shown for both Weeks 3-4 and Weeks 5-6 forecasts, demonstrating TelePiT's consistent performance advantages across all key variables, metrics, and forecast lead times.}

\label{appendix_tab:main}
  \begin{tabular}{cc|cccccc|cccccc}
    \toprule
    \multirow{2}{*}{\textbf{}} &\multirow{2}{*}{\textbf{Variable}} & \multicolumn{6}{c|}{\textbf{MS-SSIM ($\uparrow$)}} & \multicolumn{6}{c}{\textbf{SpecRes ($\downarrow$)}}\\ 
    &  &  FCNV2 &  GC & Pangu & ClimaX & CirT & \textbf{TelePiT} &  FCN-V2 &  GC & Pangu & ClimaX & CirT & \textbf{TelePiT}\\
    
    \midrule
    \multirow{7}{*}{\rotatebox{90}{\textbf{Weeks 3-4}}}
    
    & \multicolumn{1}{>{\columncolor{lightblue}}c|}{\text{ z500 ($gpm$) }}  
    & \multicolumn{1}{>{\columncolor{lightblue}}c}{0.8629}   & \multicolumn{1}{>{\columncolor{lightblue}}c}{0.8686}   & \multicolumn{1}{>{\columncolor{lightblue}}c}{0.8638}   
    & \multicolumn{1}{>{\columncolor{lightblue}}c}{0.8588}   & \multicolumn{1}{>{\columncolor{lightblue}}c}{0.9061}   & \multicolumn{1}{>{\columncolor{lightblue}}c|}{\textbf{0.9146}}      
    & \multicolumn{1}{>{\columncolor{lightblue}}c}{0.0422}    & \multicolumn{1}{>{\columncolor{lightblue}}c}{0.0283}    & \multicolumn{1}{>{\columncolor{lightblue}}c}{0.1063}   
    & \multicolumn{1}{>{\columncolor{lightblue}}c}{0.0786}    & \multicolumn{1}{>{\columncolor{lightblue}}c}{0.0184}    & \multicolumn{1}{>{\columncolor{lightblue}}c}{\textbf{0.0127}}  \\
    
    \multicolumn{1}{c}{}& \multicolumn{1}{>{\columncolor{lightred}}c|}{\text{ z850 ($gpm$) }}  
    & \multicolumn{1}{>{\columncolor{lightred}}c}{0.8130}    & \multicolumn{1}{>{\columncolor{lightred}}c}{0.7985}    & \multicolumn{1}{>{\columncolor{lightred}}c}{0.7900}   
    & \multicolumn{1}{>{\columncolor{lightred}}c}{0.7887}    & \multicolumn{1}{>{\columncolor{lightred}}c}{0.8655}    & \multicolumn{1}{>{\columncolor{lightred}}c|}{{\textbf{0.8786}}}    
    & \multicolumn{1}{>{\columncolor{lightred}}c}{0.0554}     & \multicolumn{1}{>{\columncolor{lightred}}c}{0.0574}     & \multicolumn{1}{>{\columncolor{lightred}}c}{0.0390}   
    & \multicolumn{1}{>{\columncolor{lightred}}c}{0.0148}     & \multicolumn{1}{>{\columncolor{lightred}}c}{0.0311}     & \multicolumn{1}{>{\columncolor{lightred}}c}{\textbf{0.0134}}  \\
    
    & \multicolumn{1}{>{\columncolor{lightblue}}c|}{\text{ t500 ($K$) }}          
    & \multicolumn{1}{>{\columncolor{lightblue}}c}{0.8923}    & \multicolumn{1}{>{\columncolor{lightblue}}c}{0.8857}    & \multicolumn{1}{>{\columncolor{lightblue}}c}{0.8793}    
    & \multicolumn{1}{>{\columncolor{lightblue}}c}{0.8695}    & \multicolumn{1}{>{\columncolor{lightblue}}c}{0.9201}    & \multicolumn{1}{>{\columncolor{lightblue}}c|}{\textbf{0.9263}}       
    & \multicolumn{1}{>{\columncolor{lightblue}}c}{0.0700}    & \multicolumn{1}{>{\columncolor{lightblue}}c}{0.0248}    & \multicolumn{1}{>{\columncolor{lightblue}}c}{0.0512}   
    & \multicolumn{1}{>{\columncolor{lightblue}}c}{0.0954}    & \multicolumn{1}{>{\columncolor{lightblue}}c}{0.0297}    & \multicolumn{1}{>{\columncolor{lightblue}}c}{\textbf{0.0139}}  \\
    
    \multicolumn{1}{c}{}& \multicolumn{1}{>{\columncolor{lightred}}c|}{\text{ t850 ($K$) }}          
    & \multicolumn{1}{>{\columncolor{lightred}}c}{0.9148}    & \multicolumn{1}{>{\columncolor{lightred}}c}{0.9165}    & \multicolumn{1}{>{\columncolor{lightred}}c}{0.9088}    
    & \multicolumn{1}{>{\columncolor{lightred}}c}{0.8898}    & \multicolumn{1}{>{\columncolor{lightred}}c}{0.9428}    & \multicolumn{1}{>{\columncolor{lightred}}c|}{\textbf{0.9473}}       
    & \multicolumn{1}{>{\columncolor{lightred}}c}{0.0633}    & \multicolumn{1}{>{\columncolor{lightred}}c}{0.0279}    & \multicolumn{1}{>{\columncolor{lightred}}c}{0.0519}   
    & \multicolumn{1}{>{\columncolor{lightred}}c}{0.0546}    & \multicolumn{1}{>{\columncolor{lightred}}c}{0.0162}    & \multicolumn{1}{>{\columncolor{lightred}}c}{\textbf{0.0067}}  \\
    
    & \multicolumn{1}{>{\columncolor{lightblue}}c|}{\text{ t2m ($K$) }}           
    & \multicolumn{1}{>{\columncolor{lightblue}}c}{--}       & \multicolumn{1}{>{\columncolor{lightblue}}c}{--}       & \multicolumn{1}{>{\columncolor{lightblue}}c}{--}       
    & \multicolumn{1}{>{\columncolor{lightblue}}c}{0.7017}   & \multicolumn{1}{>{\columncolor{lightblue}}c}{0.8216}   & \multicolumn{1}{>{\columncolor{lightblue}}c|}{\textbf{0.8568}}      
    & \multicolumn{1}{>{\columncolor{lightblue}}c}{--}       & \multicolumn{1}{>{\columncolor{lightblue}}c}{--}       & \multicolumn{1}{>{\columncolor{lightblue}}c}{--}      
    & \multicolumn{1}{>{\columncolor{lightblue}}c}{0.0797}    & \multicolumn{1}{>{\columncolor{lightblue}}c}{0.0344}    & \multicolumn{1}{>{\columncolor{lightblue}}c}{\textbf{0.0042}}  \\
    
    \multicolumn{1}{c}{}& \multicolumn{1}{>{\columncolor{lightred}}c|}{\text{ u10 (m/s) }}           
    & \multicolumn{1}{>{\columncolor{lightred}}c}{0.2265}    & \multicolumn{1}{>{\columncolor{lightred}}c}{--}       & \multicolumn{1}{>{\columncolor{lightred}}c}{0.2307}    
    & \multicolumn{1}{>{\columncolor{lightred}}c}{0.8659}    & \multicolumn{1}{>{\columncolor{lightred}}c}{0.9348}    & \multicolumn{1}{>{\columncolor{lightred}}c|}{{\textbf{0.9431}}}       
    & \multicolumn{1}{>{\columncolor{lightred}}c}{0.0488}    & \multicolumn{1}{>{\columncolor{lightred}}c}{--}       & \multicolumn{1}{>{\columncolor{lightred}}c}{0.0265}   
    & \multicolumn{1}{>{\columncolor{lightred}}c}{0.0586}    & \multicolumn{1}{>{\columncolor{lightred}}c}{0.0221}    & \multicolumn{1}{>{\columncolor{lightred}}c}{\textbf{0.0048}}  \\
    
    & \multicolumn{1}{>{\columncolor{lightblue}}c|}{\text{ v10 (m/s) }}           
    & \multicolumn{1}{>{\columncolor{lightblue}}c}{0.3289}    & \multicolumn{1}{>{\columncolor{lightblue}}c}{--}       & \multicolumn{1}{>{\columncolor{lightblue}}c}{0.3180}    
    & \multicolumn{1}{>{\columncolor{lightblue}}c}{0.8667}    & \multicolumn{1}{>{\columncolor{lightblue}}c}{0.9307}    & \multicolumn{1}{>{\columncolor{lightblue}}c|}{\textbf{0.9408}}       
    & \multicolumn{1}{>{\columncolor{lightblue}}c}{0.0705}    & \multicolumn{1}{>{\columncolor{lightblue}}c}{--}       & \multicolumn{1}{>{\columncolor{lightblue}}c}{0.0556}   
    & \multicolumn{1}{>{\columncolor{lightblue}}c}{0.0409}    & \multicolumn{1}{>{\columncolor{lightblue}}c}{0.0228}    & \multicolumn{1}{>{\columncolor{lightblue}}c}{\textbf{0.0063}}  \\
    
    \midrule
    \multirow{7}{*}{\rotatebox{90}{\textbf{Weeks 5-6}}}
    
    & \multicolumn{1}{>{\columncolor{lightblue}}c|}{\text{ z500 ($gpm$) }}  
    & \multicolumn{1}{>{\columncolor{lightblue}}c}{0.8559}   & \multicolumn{1}{>{\columncolor{lightblue}}c}{--}       & \multicolumn{1}{>{\columncolor{lightblue}}c}{0.8450}   
    & \multicolumn{1}{>{\columncolor{lightblue}}c}{0.8543}   & \multicolumn{1}{>{\columncolor{lightblue}}c}{0.9083}   & \multicolumn{1}{>{\columncolor{lightblue}}c|}{\textbf{0.9159}}      
    & \multicolumn{1}{>{\columncolor{lightblue}}c}{0.0422}    & \multicolumn{1}{>{\columncolor{lightblue}}c}{--}       & \multicolumn{1}{>{\columncolor{lightblue}}c}{0.1152}   
    & \multicolumn{1}{>{\columncolor{lightblue}}c}{0.0821}    & \multicolumn{1}{>{\columncolor{lightblue}}c}{0.0263}    & \multicolumn{1}{>{\columncolor{lightblue}}c}{\textbf{0.0121}}  \\
    
    \multicolumn{1}{c}{}& \multicolumn{1}{>{\columncolor{lightred}}c|}{\text{ z850 ($gpm$) }}  
    & \multicolumn{1}{>{\columncolor{lightred}}c}{0.8025}    & \multicolumn{1}{>{\columncolor{lightred}}c}{--}       & \multicolumn{1}{>{\columncolor{lightred}}c}{0.7608}   
    & \multicolumn{1}{>{\columncolor{lightred}}c}{0.7882}    & \multicolumn{1}{>{\columncolor{lightred}}c}{0.8699}   & \multicolumn{1}{>{\columncolor{lightred}}c|}{{\textbf{0.8800}}}      
    & \multicolumn{1}{>{\columncolor{lightred}}c}{0.0575}     & \multicolumn{1}{>{\columncolor{lightred}}c}{--}       & \multicolumn{1}{>{\columncolor{lightred}}c}{0.0400}   
    & \multicolumn{1}{>{\columncolor{lightred}}c}{0.0316}     & \multicolumn{1}{>{\columncolor{lightred}}c}{0.0368}    & \multicolumn{1}{>{\columncolor{lightred}}c}{\textbf{0.0132}}  \\
    
    & \multicolumn{1}{>{\columncolor{lightblue}}c|}{\text{ t500 ($K$) }}          
    & \multicolumn{1}{>{\columncolor{lightblue}}c}{0.8833}    & \multicolumn{1}{>{\columncolor{lightblue}}c}{--}       & \multicolumn{1}{>{\columncolor{lightblue}}c}{0.8571}    
    & \multicolumn{1}{>{\columncolor{lightblue}}c}{0.8636}    & \multicolumn{1}{>{\columncolor{lightblue}}c}{0.9208}    & \multicolumn{1}{>{\columncolor{lightblue}}c|}{\textbf{0.9269}}       
    & \multicolumn{1}{>{\columncolor{lightblue}}c}{0.0689}    & \multicolumn{1}{>{\columncolor{lightblue}}c}{--}       & \multicolumn{1}{>{\columncolor{lightblue}}c}{0.0533}   
    & \multicolumn{1}{>{\columncolor{lightblue}}c}{0.0836}    & \multicolumn{1}{>{\columncolor{lightblue}}c}{0.0424}    & \multicolumn{1}{>{\columncolor{lightblue}}c}{\textbf{0.0161}}  \\

    \multicolumn{1}{c}{}& \multicolumn{1}{>{\columncolor{lightred}}c|}{\text{ t850 ($K$) }}          
    & \multicolumn{1}{>{\columncolor{lightred}}c}{0.9088}    & \multicolumn{1}{>{\columncolor{lightred}}c}{--}       & \multicolumn{1}{>{\columncolor{lightred}}c}{0.8964}    
    & \multicolumn{1}{>{\columncolor{lightred}}c}{0.8840}    & \multicolumn{1}{>{\columncolor{lightred}}c}{0.9435}    & \multicolumn{1}{>{\columncolor{lightred}}c|}{{\textbf{0.9473}}}       
    & \multicolumn{1}{>{\columncolor{lightred}}c}{0.0622}    & \multicolumn{1}{>{\columncolor{lightred}}c}{--}       & \multicolumn{1}{>{\columncolor{lightred}}c}{0.0539}   
    & \multicolumn{1}{>{\columncolor{lightred}}c}{0.0547}    & \multicolumn{1}{>{\columncolor{lightred}}c}{0.0132}    & \multicolumn{1}{>{\columncolor{lightred}}c}{\textbf{0.0064}}  \\
    
    & \multicolumn{1}{>{\columncolor{lightblue}}c|}{\text{ t2m ($K$) }}           
    & \multicolumn{1}{>{\columncolor{lightblue}}c}{--}       & \multicolumn{1}{>{\columncolor{lightblue}}c}{--}       & \multicolumn{1}{>{\columncolor{lightblue}}c}{--}       
    & \multicolumn{1}{>{\columncolor{lightblue}}c}{0.7013}   & \multicolumn{1}{>{\columncolor{lightblue}}c}{0.8209}   & \multicolumn{1}{>{\columncolor{lightblue}}c|}{\textbf{0.8568}}      
    & \multicolumn{1}{>{\columncolor{lightblue}}c}{--}       & \multicolumn{1}{>{\columncolor{lightblue}}c}{--}       & \multicolumn{1}{>{\columncolor{lightblue}}c}{--}      
    & \multicolumn{1}{>{\columncolor{lightblue}}c}{0.0800}    & \multicolumn{1}{>{\columncolor{lightblue}}c}{0.0333}    & \multicolumn{1}{>{\columncolor{lightblue}}c}{\textbf{0.0042}}  \\
    
    \multicolumn{1}{c}{}& \multicolumn{1}{>{\columncolor{lightred}}c|}{\text{ u10 (m/s) }}           
    & \multicolumn{1}{>{\columncolor{lightred}}c}{0.2227}    & \multicolumn{1}{>{\columncolor{lightred}}c}{--}       & \multicolumn{1}{>{\columncolor{lightred}}c}{0.2238}    
    & \multicolumn{1}{>{\columncolor{lightred}}c}{0.8655}    & \multicolumn{1}{>{\columncolor{lightred}}c}{0.9344}    & \multicolumn{1}{>{\columncolor{lightred}}c|}{{\textbf{0.9437}}}       
    & \multicolumn{1}{>{\columncolor{lightred}}c}{0.0478}    & \multicolumn{1}{>{\columncolor{lightred}}c}{--}       & \multicolumn{1}{>{\columncolor{lightred}}c}{0.0263}   
    & \multicolumn{1}{>{\columncolor{lightred}}c}{0.0616}    & \multicolumn{1}{>{\columncolor{lightred}}c}{0.0215}    & \multicolumn{1}{>{\columncolor{lightred}}c}{\textbf{0.0052}}  \\
    
    & \multicolumn{1}{>{\columncolor{lightblue}}c|}{\text{ v10 (m/s) }}           
    & \multicolumn{1}{>{\columncolor{lightblue}}c}{0.3213}    & \multicolumn{1}{>{\columncolor{lightblue}}c}{--}       & \multicolumn{1}{>{\columncolor{lightblue}}c}{0.3187}    
    & \multicolumn{1}{>{\columncolor{lightblue}}c}{0.8637}    & \multicolumn{1}{>{\columncolor{lightblue}}c}{0.9317}    & \multicolumn{1}{>{\columncolor{lightblue}}c|}{\textbf{9404}}       
    & \multicolumn{1}{>{\columncolor{lightblue}}c}{0.0691}    & \multicolumn{1}{>{\columncolor{lightblue}}c}{--}       & \multicolumn{1}{>{\columncolor{lightblue}}c}{0556}   
    & \multicolumn{1}{>{\columncolor{lightblue}}c}{0.0462}    & \multicolumn{1}{>{\columncolor{lightblue}}c}{0.0227}    & \multicolumn{1}{>{\columncolor{lightblue}}c}{\textbf{0.0059}}  \\
    
    \bottomrule
  \end{tabular}
\let\everydisplay\oldeverydisplay
\let\everymath\oldeverymath
\vspace{2cm}
\end{table*}


\begin{table*}[t] 
\let\oldeverydisplay\everydisplay
\let\oldeverymath\everymath
\everydisplay{}
\everymath{}
\definecolor{lightblue}{RGB}{240, 249, 255}  
\definecolor{lightred}{RGB}{255, 255, 255}   
\setlength{\tabcolsep}{10pt} 
\caption{Performance comparison of \textbf{TelePiT} against operational forecasting systems. Results are shown for both Weeks 3-4 and 5-6 forecasts, demonstrating TelePiT's significant performance advantages over current operational systems across all metrics.}
\label{appendix_tab:main_2}
  \begin{tabular}{cc|cccc|cccc}
    \toprule
    \multirow{2}{*}{\textbf{Metric}} & \multirow{2}{*}{\textbf{Model}} & \multicolumn{4}{c|}{\textbf{Weeks 3-4}} & \multicolumn{4}{c}{\textbf{Weeks 5-6}}\\ 
    & & z500 & z850 & t500 & t850 & z500 & z850 & t500 & t850 \\

    \midrule
    \multirow{5}{*}{\rotatebox{0}{\textbf{MS-SSIM ($\uparrow$)}}}
    &\multicolumn{1}{>{\columncolor{lightblue}}c|}{\text{ UKMO }}  
    &\multicolumn{1}{>{\columncolor{lightblue}}c}{0.8682} & \multicolumn{1}{>{\columncolor{lightblue}}c}{0.7959} & \multicolumn{1}{>{\columncolor{lightblue}}c}{{0.8848}}& \multicolumn{1}{>{\columncolor{lightblue}}c|}{{0.8004}}      
    &\multicolumn{1}{>{\columncolor{lightblue}}c}{0.8572} & \multicolumn{1}{>{\columncolor{lightblue}}c}{0.7767} & \multicolumn{1}{>{\columncolor{lightblue}}c}{{0.8747}} & \multicolumn{1}{>{\columncolor{lightblue}}c}{{0.8570}} \\
    
    &\multicolumn{1}{>{\columncolor{lightred}}c|}{\text{ NCEP }}  
    &\multicolumn{1}{>{\columncolor{lightred}}c}{0.8614} & \multicolumn{1}{>{\columncolor{lightred}}c}{0.7928} & \multicolumn{1}{>{\columncolor{lightred}}c}{0.8786} & \multicolumn{1}{>{\columncolor{lightred}}c|}{0.9091}   
    &\multicolumn{1}{>{\columncolor{lightred}}c}{0.8535} & \multicolumn{1}{>{\columncolor{lightred}}c}{0.7781} & \multicolumn{1}{>{\columncolor{lightred}}c}{0.8699} & \multicolumn{1}{>{\columncolor{lightred}}c}{0.9037} \\

    &\multicolumn{1}{>{\columncolor{lightblue}}c|}{\text{ ECMWF }}  
    &\multicolumn{1}{>{\columncolor{lightblue}}c}{0.8715} & \multicolumn{1}{>{\columncolor{lightblue}}c}{0.8153} & \multicolumn{1}{>{\columncolor{lightblue}}c}{{0.8904}}& \multicolumn{1}{>{\columncolor{lightblue}}c|}{{0.9175}}      
    &\multicolumn{1}{>{\columncolor{lightblue}}c}{0.8575} & \multicolumn{1}{>{\columncolor{lightblue}}c}{0.7974} & \multicolumn{1}{>{\columncolor{lightblue}}c}{{0.8804}} & \multicolumn{1}{>{\columncolor{lightblue}}c}{{0.9103}} \\
    
    &\multicolumn{1}{>{\columncolor{lightred}}c|}{\text{ CMA }}  
    &\multicolumn{1}{>{\columncolor{lightred}}c}{0.8521} & \multicolumn{1}{>{\columncolor{lightred}}c}{0.7823} & \multicolumn{1}{>{\columncolor{lightred}}c}{0.8710} & \multicolumn{1}{>{\columncolor{lightred}}c|}{0.8928}   
    &\multicolumn{1}{>{\columncolor{lightred}}c}{0.8478} & \multicolumn{1}{>{\columncolor{lightred}}c}{0.7758} & \multicolumn{1}{>{\columncolor{lightred}}c}{0.8651} & \multicolumn{1}{>{\columncolor{lightred}}c}{0.8906} \\

    &\multicolumn{1}{>{\columncolor{lightblue}}c|}{\textbf{ TelePiT }}  
    &\multicolumn{1}{>{\columncolor{lightblue}}c}{\textbf{0.9146}} & \multicolumn{1}{>{\columncolor{lightblue}}c}{\textbf{0.8786}} & \multicolumn{1}{>{\columncolor{lightblue}}c}{\textbf{0.9263}}& \multicolumn{1}{>{\columncolor{lightblue}}c|}{\textbf{0.9473}}      
    &\multicolumn{1}{>{\columncolor{lightblue}}c}{\textbf{0.9159}} & \multicolumn{1}{>{\columncolor{lightblue}}c}{\textbf{0.8800}} & \multicolumn{1}{>{\columncolor{lightblue}}c}{\textbf{0.9269}} & \multicolumn{1}{>{\columncolor{lightblue}}c}{\textbf{0.9473}} \\

    \midrule
    \multirow{5}{*}{\rotatebox{0}{\textbf{SpecRec ($\downarrow$)}}}
    &\multicolumn{1}{>{\columncolor{lightblue}}c|}{\text{ UKMO }}  
    &\multicolumn{1}{>{\columncolor{lightblue}}c}{0.0386} & \multicolumn{1}{>{\columncolor{lightblue}}c}{0.0377} & \multicolumn{1}{>{\columncolor{lightblue}}c}{{0.0470}}& \multicolumn{1}{>{\columncolor{lightblue}}c|}{{0.0310}}      
    &\multicolumn{1}{>{\columncolor{lightblue}}c}{0.0681} & \multicolumn{1}{>{\columncolor{lightblue}}c}{0.0403} & \multicolumn{1}{>{\columncolor{lightblue}}c}{{0.0523}} & \multicolumn{1}{>{\columncolor{lightblue}}c}{{0.0302}} \\
    
    &\multicolumn{1}{>{\columncolor{lightred}}c|}{\text{ NCEP }}  
    &\multicolumn{1}{>{\columncolor{lightred}}c}{0.0678} & \multicolumn{1}{>{\columncolor{lightred}}c}{0.0613} & \multicolumn{1}{>{\columncolor{lightred}}c}{0.2449} & \multicolumn{1}{>{\columncolor{lightred}}c|}{0.1432}   
    &\multicolumn{1}{>{\columncolor{lightred}}c}{0.042} & \multicolumn{1}{>{\columncolor{lightred}}c}{0.0613} & \multicolumn{1}{>{\columncolor{lightred}}c}{0.2449} & \multicolumn{1}{>{\columncolor{lightred}}c}{0.1398} \\

    &\multicolumn{1}{>{\columncolor{lightblue}}c|}{\text{ ECMWF }}  
    &\multicolumn{1}{>{\columncolor{lightblue}}c}{0.0727} & \multicolumn{1}{>{\columncolor{lightblue}}c}{0.0266} & \multicolumn{1}{>{\columncolor{lightblue}}c}{{0.0427}}& \multicolumn{1}{>{\columncolor{lightblue}}c|}{{0.0612}}      
    &\multicolumn{1}{>{\columncolor{lightblue}}c}{0.0634} & \multicolumn{1}{>{\columncolor{lightblue}}c}{0.0282} & \multicolumn{1}{>{\columncolor{lightblue}}c}{{0.0262}} & \multicolumn{1}{>{\columncolor{lightblue}}c}{{0.0388}} \\
    
    &\multicolumn{1}{>{\columncolor{lightred}}c|}{\text{ CMA }}  
    &\multicolumn{1}{>{\columncolor{lightred}}c}{0.0400} & \multicolumn{1}{>{\columncolor{lightred}}c}{0.0496} & \multicolumn{1}{>{\columncolor{lightred}}c}{0.0603} & \multicolumn{1}{>{\columncolor{lightred}}c|}{0.0203}   
    &\multicolumn{1}{>{\columncolor{lightred}}c}{0.0395} & \multicolumn{1}{>{\columncolor{lightred}}c}{0.0481} & \multicolumn{1}{>{\columncolor{lightred}}c}{0.0384} & \multicolumn{1}{>{\columncolor{lightred}}c}{0.0205} \\

    &\multicolumn{1}{>{\columncolor{lightblue}}c|}{\textbf{ TelePiT }}  
    &\multicolumn{1}{>{\columncolor{lightblue}}c}{\textbf{0.0127}} & \multicolumn{1}{>{\columncolor{lightblue}}c}{\textbf{0.0134}} & \multicolumn{1}{>{\columncolor{lightblue}}c}{\textbf{0.0139}}& \multicolumn{1}{>{\columncolor{lightblue}}c|}{\textbf{0.0067}}      
    &\multicolumn{1}{>{\columncolor{lightblue}}c}{\textbf{0.0121}} & \multicolumn{1}{>{\columncolor{lightblue}}c}{\textbf{0.0132}} & \multicolumn{1}{>{\columncolor{lightblue}}c}{\textbf{0.0161}} & \multicolumn{1}{>{\columncolor{lightblue}}c}{\textbf{0.0064}} \\
    
    \bottomrule
  \end{tabular}
\let\everydisplay\oldeverydisplay
\let\everymath\oldeverymath
\end{table*}

\begin{figure*}[t]
    \centering
    \includegraphics[width=0.88\linewidth]{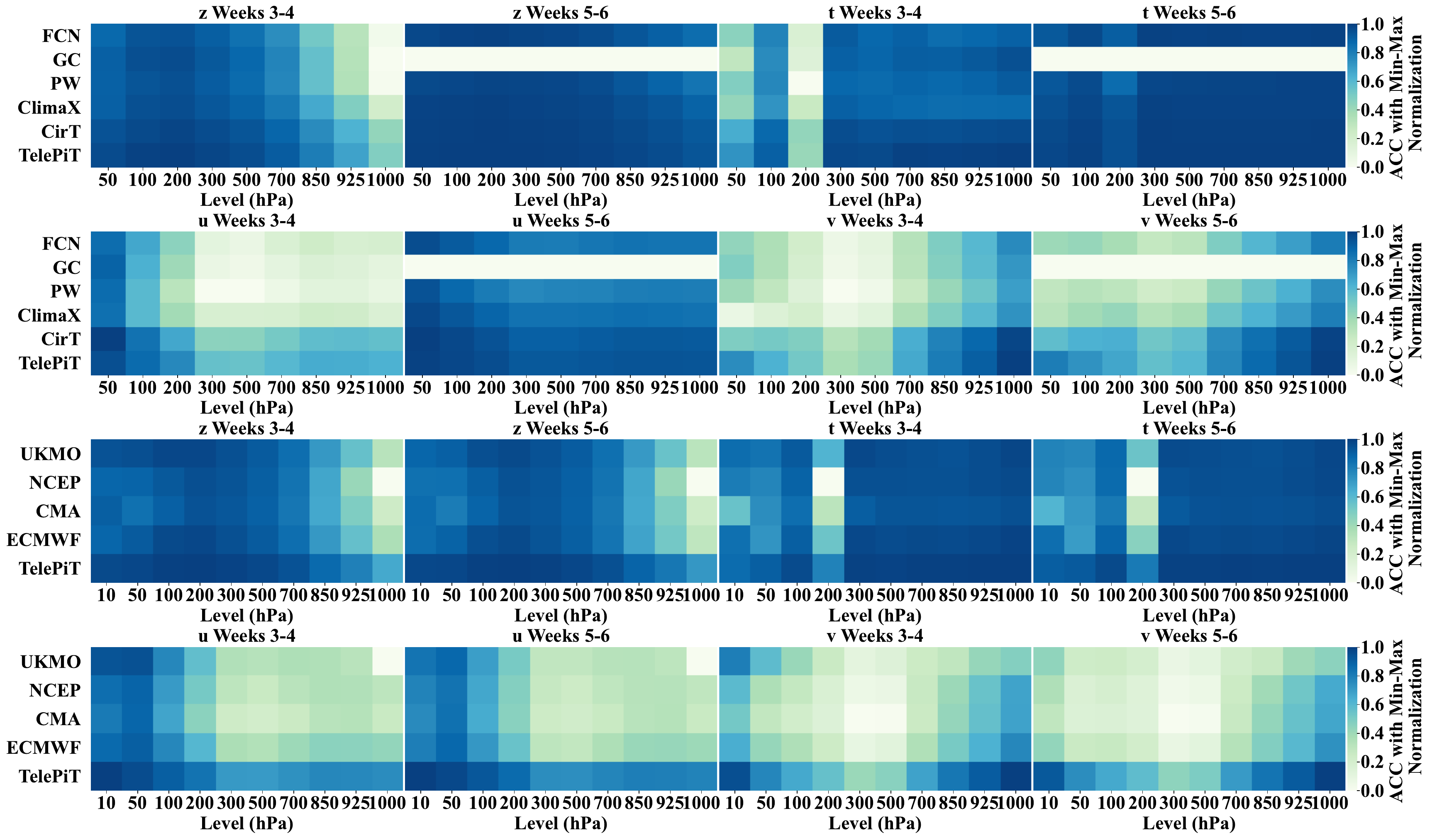}
    \vspace{-0.3cm}
    \caption{ACC comparison of variable $z$, $t$, $u$, and $v$ on all different pressure levels. The color scale indicates normalized ACC values (higher is better), with darker blue representing superior performance. TelePiT consistently shows higher ACC values across all variables and pressure levels compared to both data-driven models and operational forecasting systems. GraphCast (GC) is unavailable at Weeks 5-6 due to the out of memory.}
    \label{appendix_fig:heatmap_acc}
\end{figure*}

\begin{figure*}[t]
    \centering
    \includegraphics[width=0.88\linewidth]{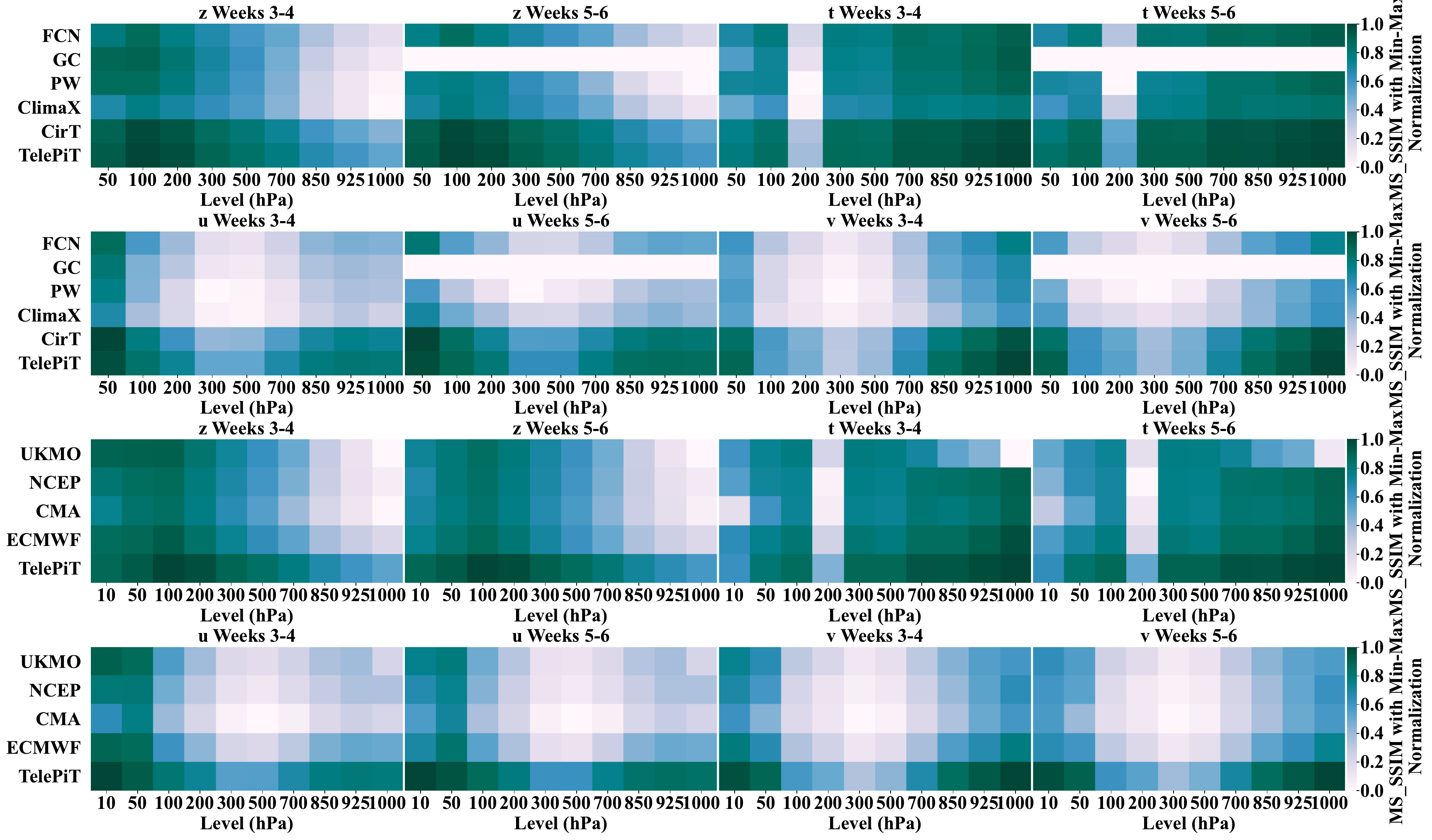}
    \vspace{-0.3cm}
    \caption{MS-SSIM comparison of variable $z$, $t$, $u$, and $v$ on all different pressure levels. The color gradient indicates normalized MS-SSIM values (higher is better), showing TelePiT's superior ability to preserve structural features in forecasts compared to all baseline models, with particularly strong performance in temperature and wind variables at lower-to-mid tropospheric levels. GraphCast (GC) is unavailable at Weeks 5-6 due to the out of memory.}
    \label{appendix_fig:heatmap_MS_SSIM}
\end{figure*}

\begin{figure*}[t]
    \centering
    \includegraphics[width=0.84\linewidth]{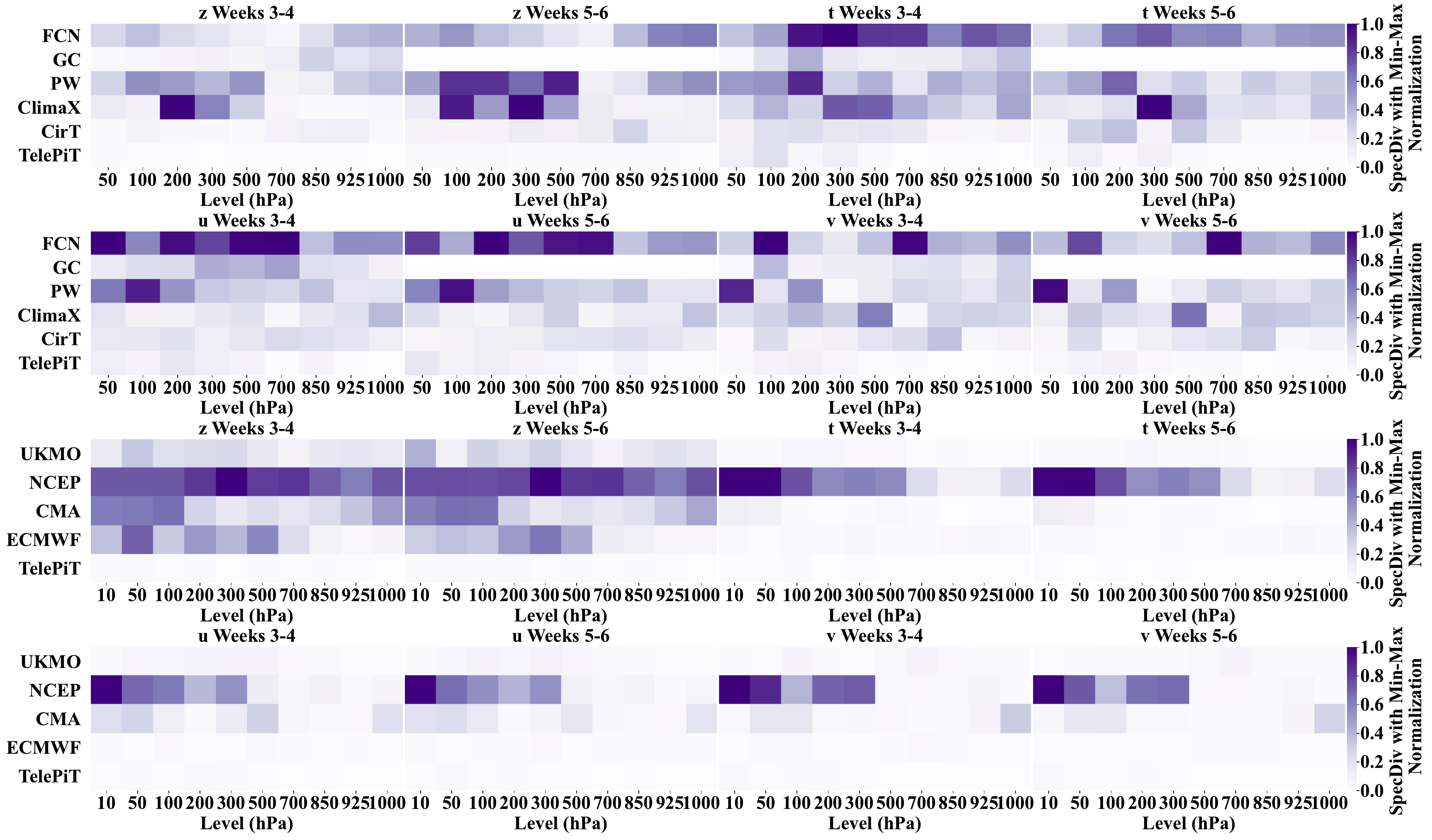}
    \vspace{-0.3cm}
    \caption{SpecDiv comparison of variable $z$, $t$, $u$, and $v$ on all different pressure levels. This metric quantifies differences in power spectrum distributions between predicted and ground truth fields. Lower values (lighter colors) indicate better spectral match. TelePiT demonstrates consistently lower SpecDiv values across variables and pressure levels, highlighting its ability to accurately capture atmospheric wave characteristics at various spatial scales. GraphCast (GC) is unavailable at Weeks 5-6 due to the out of memory.}
    \label{appendix_fig:heatmap_specdiv}
\end{figure*}

\begin{figure*}[t]
    \centering
    \includegraphics[width=0.84\linewidth]{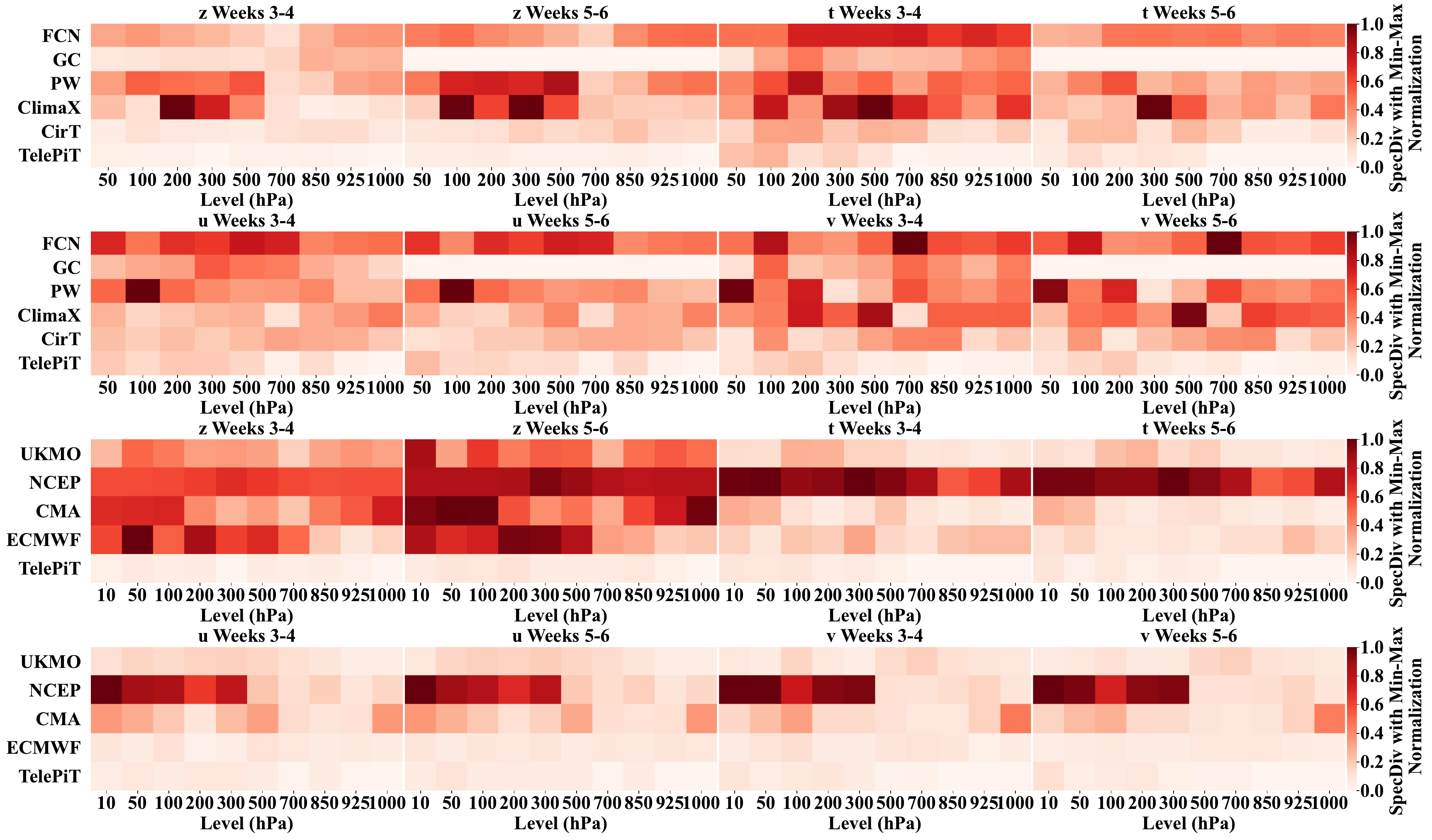}
    \vspace{-0.3cm}
    \caption{SpecRes comparison of variable $z$, $t$, $u$, and $v$ on all different pressure levels. This metric measures spectral energy distribution accuracy. Lower values (lighter colors) indicate better performance. TelePiT exhibits substantially lower SpecRes values compared to all competing models, with particular advantages in temperature and wind fields, demonstrating its enhanced ability to represent both large-scale circulation patterns and finer-scale atmospheric features. GraphCast (GC) is unavailable at Weeks 5-6 due to the out of memory.}
    \label{appendix_fig:heatmap_specres}
\end{figure*}

\clearpage

\begin{table*}[t] 
\let\oldeverydisplay\everydisplay
\let\oldeverymath\everymath
\everydisplay{}
\everymath{}
\definecolor{lightblue}{RGB}{240, 249, 255}  
\definecolor{lightred}{RGB}{255, 255, 255}   
\small
\setlength{\tabcolsep}{4pt} 
\caption{Extended ablation study comparing \textbf{TelePiT} against variants with removed components (SHE: Spherical Harmonic Embedding, WD: Wavelet Decomposition, ODE: Physics-Informed ODE, TA: Teleconnection Attention) using MS-SSIM (higher is better), SpecDiv and SpecRec (lower is better) metrics across various atmospheric variables for both Weeks 3-4 and 5-6 forecast horizons.}
\label{appendix_tab:ablation}
  \begin{tabular}{cc|ccccccc|ccccccc}
    \toprule
    \multirow{2}{*}{\textbf{Metric}} & \multirow{2}{*}{\textbf{Model}} & \multicolumn{7}{c|}{\textbf{Weeks 3-4}} & \multicolumn{7}{c}{\textbf{Weeks 5-6}}\\ 
    & & z500 & z850 & t500 & t850 & t2m & u10 & v10 & z500 & z850 & t500 & t850 & t2m & u10 & v10 \\

    \midrule
    \multirow{5}{*}{\rotatebox{90}{\textbf{MS-SSIM ($\uparrow$)}}}
    &\multicolumn{1}{>{\columncolor{lightblue}}c|}{\text{ w/o SHE }}  
    &\multicolumn{1}{>{\columncolor{lightblue}}c}{0.9066} & \multicolumn{1}{>{\columncolor{lightblue}}c}{0.8671} & \multicolumn{1}{>{\columncolor{lightblue}}c}{0.9198}& \multicolumn{1}{>{\columncolor{lightblue}}c}{0.9434}      
    &\multicolumn{1}{>{\columncolor{lightblue}}c}{0.8399} & \multicolumn{1}{>{\columncolor{lightblue}}c}{0.9398} & \multicolumn{1}{>{\columncolor{lightblue}}c|}{{0.9346}} 
    &\multicolumn{1}{>{\columncolor{lightblue}}c}{0.9046} & \multicolumn{1}{>{\columncolor{lightblue}}c}{0.8668} & \multicolumn{1}{>{\columncolor{lightblue}}c}{{0.9184}}& \multicolumn{1}{>{\columncolor{lightblue}}c}{{0.9421}}      
    &\multicolumn{1}{>{\columncolor{lightblue}}c}{0.8396} & \multicolumn{1}{>{\columncolor{lightblue}}c}{0.9372} & \multicolumn{1}{>{\columncolor{lightblue}}c}{{0.9337}} \\
    
    &\multicolumn{1}{>{\columncolor{lightred}}c|}{\text{ w/o WD }}  
    &\multicolumn{1}{>{\columncolor{lightred}}c}{0.9013} & \multicolumn{1}{>{\columncolor{lightred}}c}{0.8635} & \multicolumn{1}{>{\columncolor{lightred}}c}{{0.9176}}& \multicolumn{1}{>{\columncolor{lightred}}c}{{0.9404}}      
    &\multicolumn{1}{>{\columncolor{lightred}}c}{0.8276} & \multicolumn{1}{>{\columncolor{lightred}}c}{0.9379} & \multicolumn{1}{>{\columncolor{lightred}}c|}{{0.9342}} 
    &\multicolumn{1}{>{\columncolor{lightred}}c}{0.9037} & \multicolumn{1}{>{\columncolor{lightred}}c}{0.8647} & \multicolumn{1}{>{\columncolor{lightred}}c}{{0.9187}}& \multicolumn{1}{>{\columncolor{lightred}}c}{{0.9429}}      
    &\multicolumn{1}{>{\columncolor{lightred}}c}{0.8282} & \multicolumn{1}{>{\columncolor{lightred}}c}{0.9382} & \multicolumn{1}{>{\columncolor{lightred}}c}{{0.9343}} \\

    &\multicolumn{1}{>{\columncolor{lightblue}}c|}{\text{ w/o ODE }}  
    &\multicolumn{1}{>{\columncolor{lightblue}}c}{0.9050} & \multicolumn{1}{>{\columncolor{lightblue}}c}{0.8679} & \multicolumn{1}{>{\columncolor{lightblue}}c}{{0.9182}}& \multicolumn{1}{>{\columncolor{lightblue}}c}{{0.9415}}      
    &\multicolumn{1}{>{\columncolor{lightblue}}c}{0.8489} & \multicolumn{1}{>{\columncolor{lightblue}}c}{0.9391} & \multicolumn{1}{>{\columncolor{lightblue}}c|}{{0.9353}} 
    &\multicolumn{1}{>{\columncolor{lightblue}}c}{0.9049} & \multicolumn{1}{>{\columncolor{lightblue}}c}{0.8705} & \multicolumn{1}{>{\columncolor{lightblue}}c}{{0.9195}}& \multicolumn{1}{>{\columncolor{lightblue}}c}{{0.9419}}      
    &\multicolumn{1}{>{\columncolor{lightblue}}c}{0.8492} & \multicolumn{1}{>{\columncolor{lightblue}}c}{0.9383} & \multicolumn{1}{>{\columncolor{lightblue}}c}{{0.9352}} \\
    
    &\multicolumn{1}{>{\columncolor{lightred}}c|}{\text{ w/o TA }}  
    &\multicolumn{1}{>{\columncolor{lightred}}c}{0.9017} & \multicolumn{1}{>{\columncolor{lightred}}c}{0.8637} & \multicolumn{1}{>{\columncolor{lightred}}c}{{0.9162}}& \multicolumn{1}{>{\columncolor{lightred}}c}{{0.9400}}      
    &\multicolumn{1}{>{\columncolor{lightred}}c}{0.8292} & \multicolumn{1}{>{\columncolor{lightred}}c}{0.9363} & \multicolumn{1}{>{\columncolor{lightred}}c|}{{0.9349}} 
    &\multicolumn{1}{>{\columncolor{lightred}}c}{0.9043} & \multicolumn{1}{>{\columncolor{lightred}}c}{0.8689} & \multicolumn{1}{>{\columncolor{lightred}}c}{{0.9184}}& \multicolumn{1}{>{\columncolor{lightred}}c}{{0.9412}}      
    &\multicolumn{1}{>{\columncolor{lightred}}c}{0.8291} & \multicolumn{1}{>{\columncolor{lightred}}c}{0.9354} & \multicolumn{1}{>{\columncolor{lightred}}c}{{0.9351}} \\

    &\multicolumn{1}{>{\columncolor{lightblue}}c|}{\textbf{ TelePiT }}  
    &\multicolumn{1}{>{\columncolor{lightblue}}c}{\textbf{0.9146}} & \multicolumn{1}{>{\columncolor{lightblue}}c}{\textbf{0.8786}} & \multicolumn{1}{>{\columncolor{lightblue}}c}{\textbf{0.9263}}& \multicolumn{1}{>{\columncolor{lightblue}}c}{\textbf{0.9473}}      
    &\multicolumn{1}{>{\columncolor{lightblue}}c}{\textbf{0.8568}} & \multicolumn{1}{>{\columncolor{lightblue}}c}{\textbf{0.9431}} & \multicolumn{1}{>{\columncolor{lightblue}}c|}{\textbf{0.9408}} 
    &\multicolumn{1}{>{\columncolor{lightblue}}c}{\textbf{0.9159}} & \multicolumn{1}{>{\columncolor{lightblue}}c}{\textbf{0.8800}} & \multicolumn{1}{>{\columncolor{lightblue}}c}{\textbf{0.9269}}& \multicolumn{1}{>{\columncolor{lightblue}}c}{\textbf{0.9473}}      
    &\multicolumn{1}{>{\columncolor{lightblue}}c}{\textbf{0.8568}} & \multicolumn{1}{>{\columncolor{lightblue}}c}{\textbf{0.9437}} & \multicolumn{1}{>{\columncolor{lightblue}}c}{\textbf{0.9404}} \\

    \midrule
    \multirow{5}{*}{\rotatebox{90}{\textbf{SpecDiv ($\downarrow$)}}}
    &\multicolumn{1}{>{\columncolor{lightblue}}c|}{\text{ w/o SHE }}  
    &\multicolumn{1}{>{\columncolor{lightblue}}c}{0.0429} & \multicolumn{1}{>{\columncolor{lightblue}}c}{0.0745} & \multicolumn{1}{>{\columncolor{lightblue}}c}{0.0316}& \multicolumn{1}{>{\columncolor{lightblue}}c}{0.0184}      
    &\multicolumn{1}{>{\columncolor{lightblue}}c}{0.0275} & \multicolumn{1}{>{\columncolor{lightblue}}c}{0.0131} & \multicolumn{1}{>{\columncolor{lightblue}}c|}{{0.0206}} 
    &\multicolumn{1}{>{\columncolor{lightblue}}c}{0.0447} & \multicolumn{1}{>{\columncolor{lightblue}}c}{0.0734} & \multicolumn{1}{>{\columncolor{lightblue}}c}{{0.0300}}& \multicolumn{1}{>{\columncolor{lightblue}}c}{{0.0183}}      
    &\multicolumn{1}{>{\columncolor{lightblue}}c}{0.0275} & \multicolumn{1}{>{\columncolor{lightblue}}c}{0.0126} & \multicolumn{1}{>{\columncolor{lightblue}}c}{{0.0236}} \\
    
    &\multicolumn{1}{>{\columncolor{lightred}}c|}{\text{ w/o WD }}  
    &\multicolumn{1}{>{\columncolor{lightred}}c}{0.1075} & \multicolumn{1}{>{\columncolor{lightred}}c}{0.0365} & \multicolumn{1}{>{\columncolor{lightred}}c}{{0.0175}}& \multicolumn{1}{>{\columncolor{lightred}}c}{{0.0106}}      
    &\multicolumn{1}{>{\columncolor{lightred}}c}{0.0362} & \multicolumn{1}{>{\columncolor{lightred}}c}{0.0169} & \multicolumn{1}{>{\columncolor{lightred}}c|}{{0.0194}} 
    &\multicolumn{1}{>{\columncolor{lightred}}c}{0.1064} & \multicolumn{1}{>{\columncolor{lightred}}c}{0.0436} & \multicolumn{1}{>{\columncolor{lightred}}c}{{0.0297}}& \multicolumn{1}{>{\columncolor{lightred}}c}{{0.0118}}      
    &\multicolumn{1}{>{\columncolor{lightred}}c}{0.0367} & \multicolumn{1}{>{\columncolor{lightred}}c}{0.0163} & \multicolumn{1}{>{\columncolor{lightred}}c}{{0.0224}} \\

    &\multicolumn{1}{>{\columncolor{lightblue}}c|}{\text{ w/o ODE }}  
    &\multicolumn{1}{>{\columncolor{lightblue}}c}{0.0547} & \multicolumn{1}{>{\columncolor{lightblue}}c}{0.0339} & \multicolumn{1}{>{\columncolor{lightblue}}c}{{0.0408}}& \multicolumn{1}{>{\columncolor{lightblue}}c}{{0.0160}}      
    &\multicolumn{1}{>{\columncolor{lightblue}}c}{0.0510} & \multicolumn{1}{>{\columncolor{lightblue}}c}{0.0085} & \multicolumn{1}{>{\columncolor{lightblue}}c|}{{0.0250}} 
    &\multicolumn{1}{>{\columncolor{lightblue}}c}{0.0307} & \multicolumn{1}{>{\columncolor{lightblue}}c}{0.0326} & \multicolumn{1}{>{\columncolor{lightblue}}c}{{0.0380}}& \multicolumn{1}{>{\columncolor{lightblue}}c}{{0.0159}}      
    &\multicolumn{1}{>{\columncolor{lightblue}}c}{0.0509} & \multicolumn{1}{>{\columncolor{lightblue}}c}{0.0089} & \multicolumn{1}{>{\columncolor{lightblue}}c}{{0.0273}} \\
    
    &\multicolumn{1}{>{\columncolor{lightred}}c|}{\text{ w/o TA }}  
    &\multicolumn{1}{>{\columncolor{lightred}}c}{0.0322} & \multicolumn{1}{>{\columncolor{lightred}}c}{0.0362} & \multicolumn{1}{>{\columncolor{lightred}}c}{\textbf{0.0147}}& \multicolumn{1}{>{\columncolor{lightred}}c}{\textbf{0.0032}}      
    &\multicolumn{1}{>{\columncolor{lightred}}c}{0.0277} & \multicolumn{1}{>{\columncolor{lightred}}c}{0.0123} & \multicolumn{1}{>{\columncolor{lightred}}c|}{{0.0196}} 
    &\multicolumn{1}{>{\columncolor{lightred}}c}{0.0474} & \multicolumn{1}{>{\columncolor{lightred}}c}{0.0364} & \multicolumn{1}{>{\columncolor{lightred}}c}{\textbf{0.0146}}& \multicolumn{1}{>{\columncolor{lightred}}c}{\textbf{0.0026}}      
    &\multicolumn{1}{>{\columncolor{lightred}}c}{0.0276} & \multicolumn{1}{>{\columncolor{lightred}}c}{0.0121} & \multicolumn{1}{>{\columncolor{lightred}}c}{{0.0231}} \\

    &\multicolumn{1}{>{\columncolor{lightblue}}c|}{\textbf{ TelePiT }}  
    &\multicolumn{1}{>{\columncolor{lightblue}}c}{\textbf{0.0180}} & \multicolumn{1}{>{\columncolor{lightblue}}c}{\textbf{0.0175}} & \multicolumn{1}{>{\columncolor{lightblue}}c}{0.0150}& \multicolumn{1}{>{\columncolor{lightblue}}c}{0.0045}      
    &\multicolumn{1}{>{\columncolor{lightblue}}c}{\textbf{0.0018}} & \multicolumn{1}{>{\columncolor{lightblue}}c}{\textbf{0.0020}} & \multicolumn{1}{>{\columncolor{lightblue}}c|}{\textbf{0.0061}} 
    &\multicolumn{1}{>{\columncolor{lightblue}}c}{\textbf{0.0161}} & \multicolumn{1}{>{\columncolor{lightblue}}c}{\textbf{0.0168}} & \multicolumn{1}{>{\columncolor{lightblue}}c}{0.0184}& \multicolumn{1}{>{\columncolor{lightblue}}c}{0.0042}      
    &\multicolumn{1}{>{\columncolor{lightblue}}c}{\textbf{0.0018}} & \multicolumn{1}{>{\columncolor{lightblue}}c}{\textbf{0.0026}} & \multicolumn{1}{>{\columncolor{lightblue}}c}{\textbf{0.0051}} \\

    \midrule
    \multirow{5}{*}{\rotatebox{90}{\textbf{SpecRec ($\downarrow$)}}}
    &\multicolumn{1}{>{\columncolor{lightblue}}c|}{\text{ w/o SHE }}  
    &\multicolumn{1}{>{\columncolor{lightblue}}c}{0.0219} & \multicolumn{1}{>{\columncolor{lightblue}}c}{0.0229} & \multicolumn{1}{>{\columncolor{lightblue}}c}{0.0188}& \multicolumn{1}{>{\columncolor{lightblue}}c}{0.0138}      
    &\multicolumn{1}{>{\columncolor{lightblue}}c}{0.0158} & \multicolumn{1}{>{\columncolor{lightblue}}c}{0.0121} & \multicolumn{1}{>{\columncolor{lightblue}}c|}{{0.0130}} 
    &\multicolumn{1}{>{\columncolor{lightblue}}c}{0.0225} & \multicolumn{1}{>{\columncolor{lightblue}}c}{0.0296} & \multicolumn{1}{>{\columncolor{lightblue}}c}{{0.0183}}& \multicolumn{1}{>{\columncolor{lightblue}}c}{{0.0137}}      
    &\multicolumn{1}{>{\columncolor{lightblue}}c}{0.0158} & \multicolumn{1}{>{\columncolor{lightblue}}c}{0.0119} & \multicolumn{1}{>{\columncolor{lightblue}}c}{{0.0140}} \\
    
    &\multicolumn{1}{>{\columncolor{lightred}}c|}{\text{ w/o WD }}  
    &\multicolumn{1}{>{\columncolor{lightred}}c}{0.0330} & \multicolumn{1}{>{\columncolor{lightred}}c}{0.0185} & \multicolumn{1}{>{\columncolor{lightred}}c}{{0.0135}}& \multicolumn{1}{>{\columncolor{lightred}}c}{{0.0101}}      
    &\multicolumn{1}{>{\columncolor{lightred}}c}{0.0175} & \multicolumn{1}{>{\columncolor{lightred}}c}{0.0125} & \multicolumn{1}{>{\columncolor{lightred}}c|}{{0.0138}} 
    &\multicolumn{1}{>{\columncolor{lightred}}c}{0.0321} & \multicolumn{1}{>{\columncolor{lightred}}c}{0.0207} & \multicolumn{1}{>{\columncolor{lightred}}c}{{0.0183}}& \multicolumn{1}{>{\columncolor{lightred}}c}{{0.0107}}      
    &\multicolumn{1}{>{\columncolor{lightred}}c}{0.0177} & \multicolumn{1}{>{\columncolor{lightred}}c}{0.0124} & \multicolumn{1}{>{\columncolor{lightred}}c}{{0.0149}} \\

    &\multicolumn{1}{>{\columncolor{lightblue}}c|}{\text{ w/o ODE }}  
    &\multicolumn{1}{>{\columncolor{lightblue}}c}{0.0239} & \multicolumn{1}{>{\columncolor{lightblue}}c}{0.0177} & \multicolumn{1}{>{\columncolor{lightblue}}c}{{0.0202}}& \multicolumn{1}{>{\columncolor{lightblue}}c}{{0.0120}}      
    &\multicolumn{1}{>{\columncolor{lightblue}}c}{0.0183} & \multicolumn{1}{>{\columncolor{lightblue}}c}{0.0088} & \multicolumn{1}{>{\columncolor{lightblue}}c|}{{0.0140}} 
    &\multicolumn{1}{>{\columncolor{lightblue}}c}{0.0185} & \multicolumn{1}{>{\columncolor{lightblue}}c}{0.0174} & \multicolumn{1}{>{\columncolor{lightblue}}c}{{0.0191}}& \multicolumn{1}{>{\columncolor{lightblue}}c}{{0.0119}}      
    &\multicolumn{1}{>{\columncolor{lightblue}}c}{0.0183} & \multicolumn{1}{>{\columncolor{lightblue}}c}{0.0090} & \multicolumn{1}{>{\columncolor{lightblue}}c}{{0.0148}} \\
    
    &\multicolumn{1}{>{\columncolor{lightred}}c|}{\text{ w/o TA }}  
    &\multicolumn{1}{>{\columncolor{lightred}}c}{0.0186} & \multicolumn{1}{>{\columncolor{lightred}}c}{0.0183} & \multicolumn{1}{>{\columncolor{lightred}}c}{\textbf{0.0132}}& \multicolumn{1}{>{\columncolor{lightred}}c}{\textbf{0.0060}}      
    &\multicolumn{1}{>{\columncolor{lightred}}c}{0.0125} & \multicolumn{1}{>{\columncolor{lightred}}c}{0.0104} & \multicolumn{1}{>{\columncolor{lightred}}c|}{{0.0135}} 
    &\multicolumn{1}{>{\columncolor{lightred}}c}{0.0210} & \multicolumn{1}{>{\columncolor{lightred}}c}{0.0183} & \multicolumn{1}{>{\columncolor{lightred}}c}{\textbf{0.0127}}& \multicolumn{1}{>{\columncolor{lightred}}c}{\textbf{0.0054}}      
    &\multicolumn{1}{>{\columncolor{lightred}}c}{0.0125} & \multicolumn{1}{>{\columncolor{lightred}}c}{0.0103} & \multicolumn{1}{>{\columncolor{lightred}}c}{{0.0148}} \\

    &\multicolumn{1}{>{\columncolor{lightblue}}c|}{\textbf{ TelePiT }}  
    &\multicolumn{1}{>{\columncolor{lightblue}}c}{\textbf{0.0127}} & \multicolumn{1}{>{\columncolor{lightblue}}c}{\textbf{0.0134}} & \multicolumn{1}{>{\columncolor{lightblue}}c}{0.0139}& \multicolumn{1}{>{\columncolor{lightblue}}c}{0.0067}     
    &\multicolumn{1}{>{\columncolor{lightblue}}c}{\textbf{0.0042}} & \multicolumn{1}{>{\columncolor{lightblue}}c}{\textbf{0.0048}} & \multicolumn{1}{>{\columncolor{lightblue}}c|}{\textbf{0.0063}} 
    &\multicolumn{1}{>{\columncolor{lightblue}}c}{\textbf{0.0121}} & \multicolumn{1}{>{\columncolor{lightblue}}c}{\textbf{0.0132}} & \multicolumn{1}{>{\columncolor{lightblue}}c}{0.0161}& \multicolumn{1}{>{\columncolor{lightblue}}c}{0.0064}      
    &\multicolumn{1}{>{\columncolor{lightblue}}c}{\textbf{0.0042}} & \multicolumn{1}{>{\columncolor{lightblue}}c}{\textbf{0.0052}} & \multicolumn{1}{>{\columncolor{lightblue}}c}{\textbf{0.0059}} \\

    \bottomrule
  \end{tabular}
\let\everydisplay\oldeverydisplay
\let\everymath\oldeverymath
\vspace{0.5cm}
\end{table*}


\begin{figure*}[t]
    \centering
    \includegraphics[width=1\linewidth]{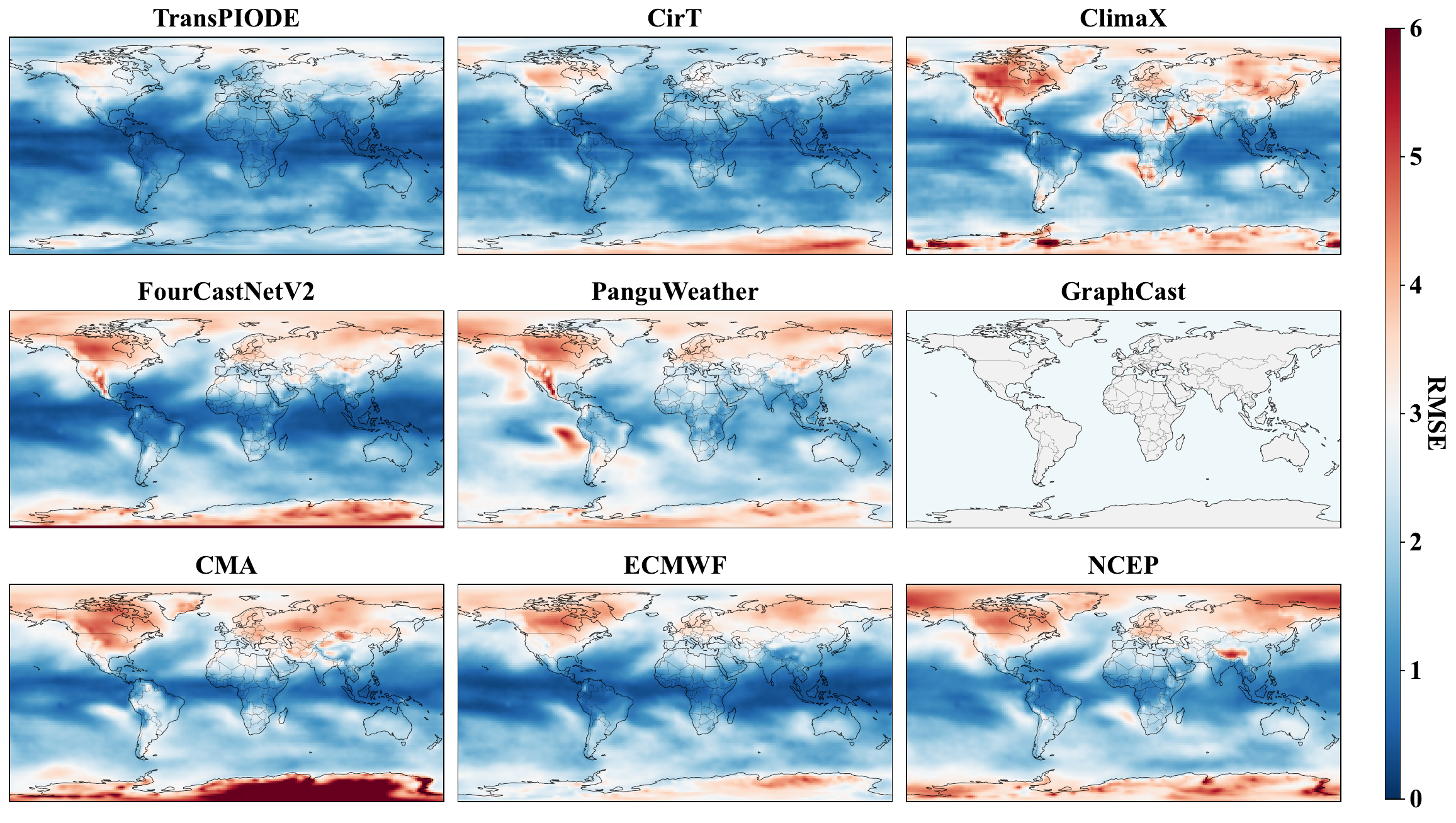}
    \caption{Global visualization of RMSE distribution of $t850$ for Weeks 5-6 forecasts across different models. The color scale represents RMSE values, with darker blue indicating better performance (lower error) and darker red indicating worse performance (higher error). Note that GraphCast results are unavailable at Weeks 5-6 due to out-of-memory issues when attempting extended-range forecasts.}
    \label{appendix_fig:map_t850_1}
\end{figure*}

\begin{figure*}[t]
    \centering
    \includegraphics[width=0.93\linewidth]{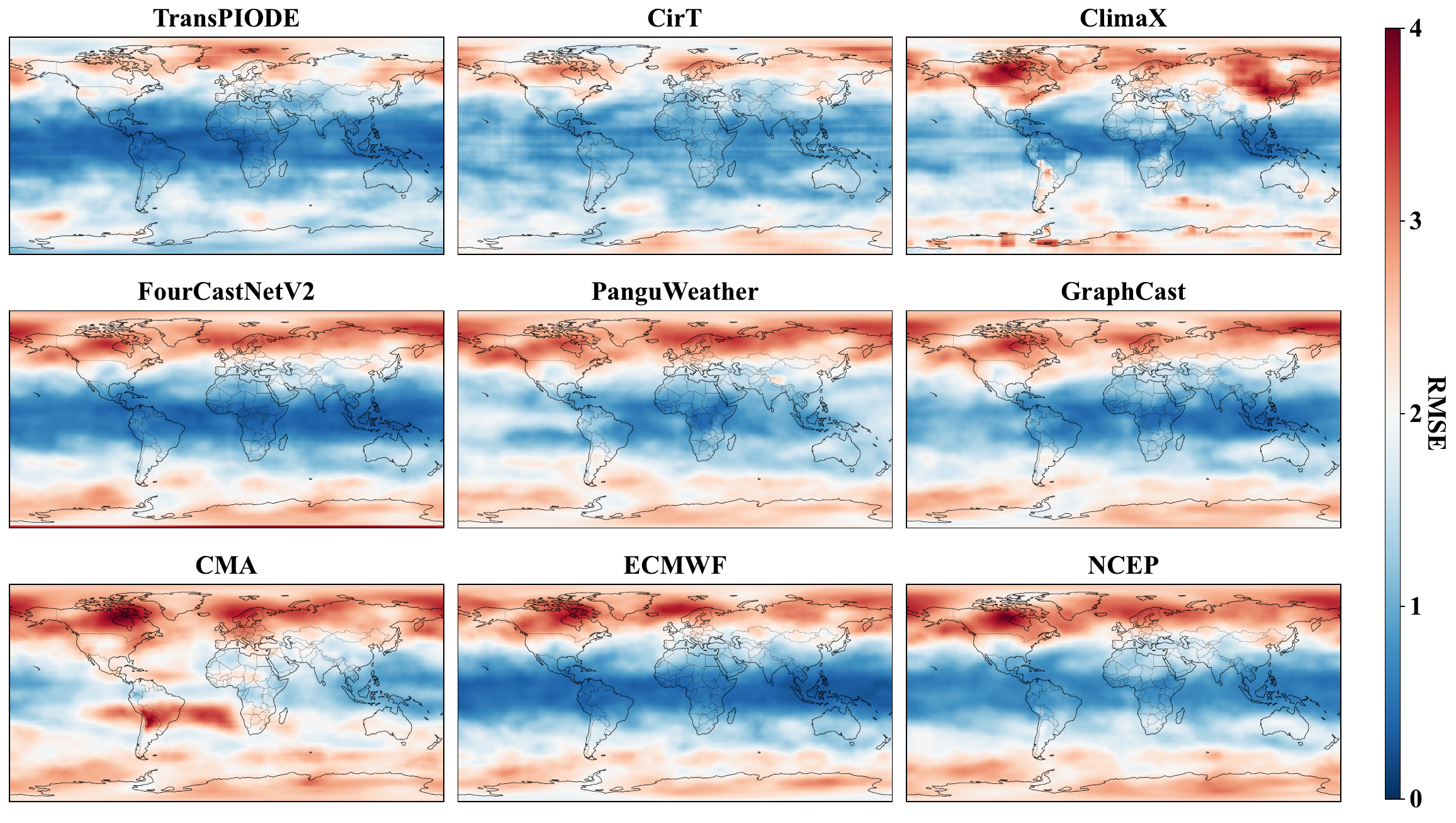}
    \vspace{-0.3cm}
    \caption{Global visualization of RMSE distribution of $t500$ for Weeks 3-4 forecasts across different models. The color scale represents RMSE values, with darker blue indicating better performance (lower error) and darker red indicating worse performance (higher error).}
    \label{appendix_fig:map_t500_0}
\end{figure*}

\begin{figure*}[t]
    \centering
    \includegraphics[width=0.93\linewidth]{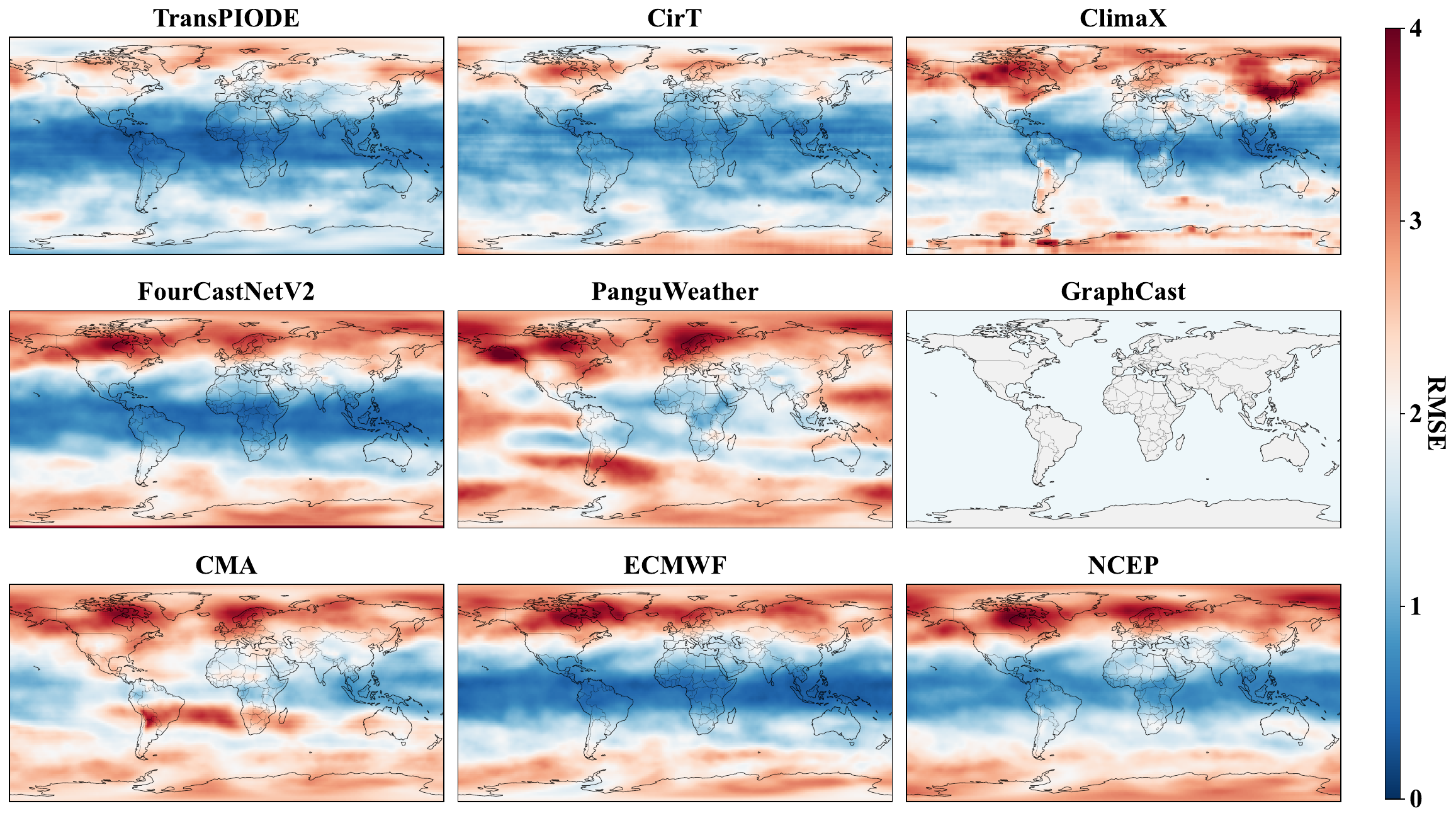}
    \vspace{-0.3cm}
    \caption{Global visualization of RMSE distribution of $t500$ for Weeks 5-6 forecasts across different models. The color scale represents RMSE values, with darker blue indicating better performance (lower error) and darker red indicating worse performance (higher error). Note that GraphCast results are unavailable at Weeks 5-6 due to out-of-memory issues when attempting extended-range forecasts.}
    \label{appendix_fig:map_t500_1}
\end{figure*}

\begin{figure*}[t]
    \centering
    \includegraphics[width=0.93\linewidth]{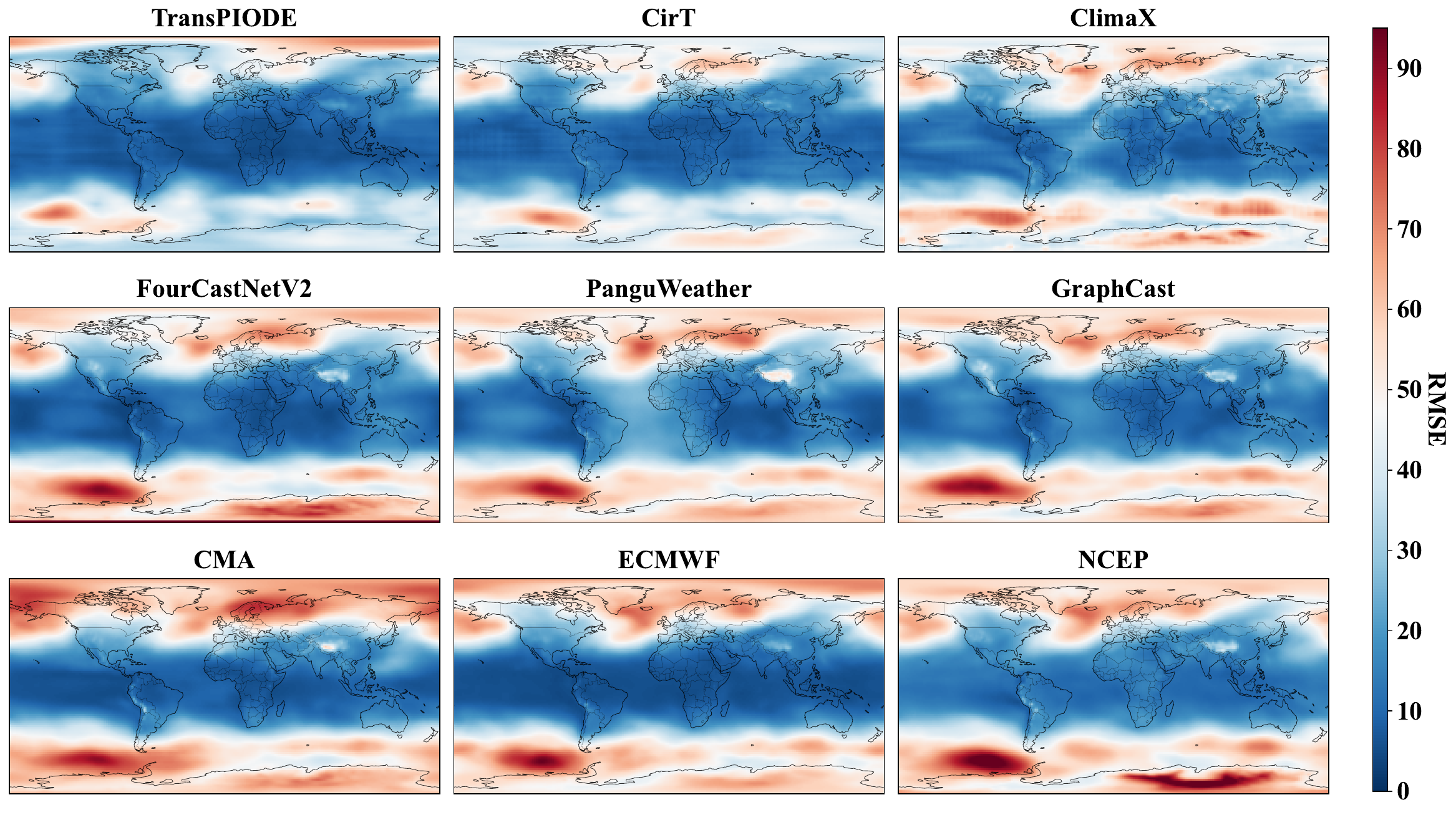}
    \vspace{-0.3cm}
    \caption{Global visualization of RMSE distribution of $z850$ for Weeks 3-4 forecasts across different models. The color scale represents RMSE values, with darker blue indicating better performance (lower error) and darker red indicating worse performance (higher error).}
    \label{appendix_fig:map_z850_0}
\end{figure*}

\begin{figure*}[t]
    \centering
    \includegraphics[width=0.93\linewidth]{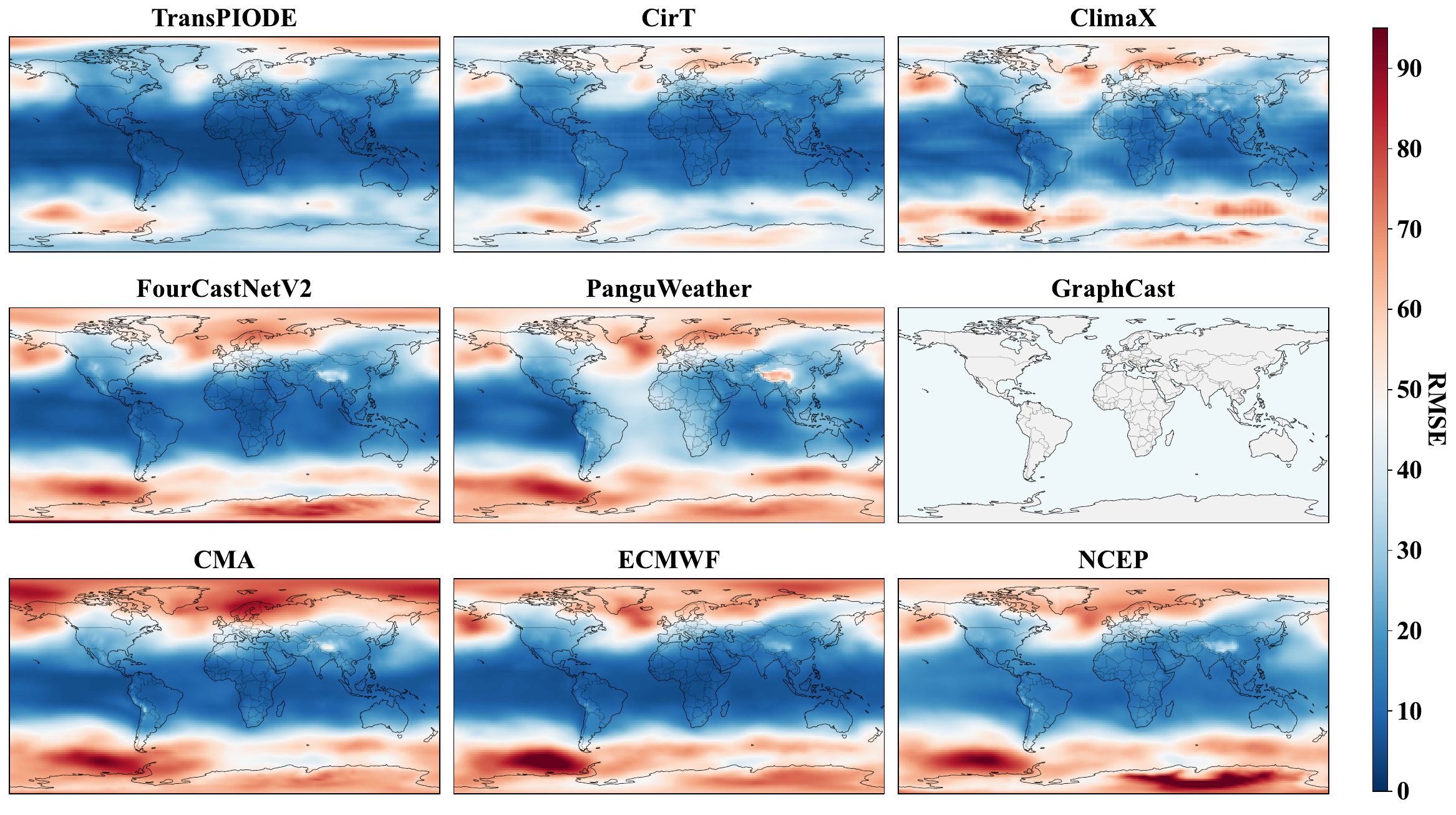}
    \vspace{-0.3cm}
    \caption{Global visualization of RMSE distribution of $z850$ for Weeks 5-6 forecasts across different models. The color scale represents RMSE values, with darker blue indicating better performance (lower error) and darker red indicating worse performance (higher error). Note that GraphCast results are unavailable at Weeks 5-6 due to out-of-memory issues when attempting extended-range forecasts.}
    \label{appendix_fig:map_z850_1}
\end{figure*}

\begin{figure*}[t]
    \centering
    \includegraphics[width=0.93\linewidth]{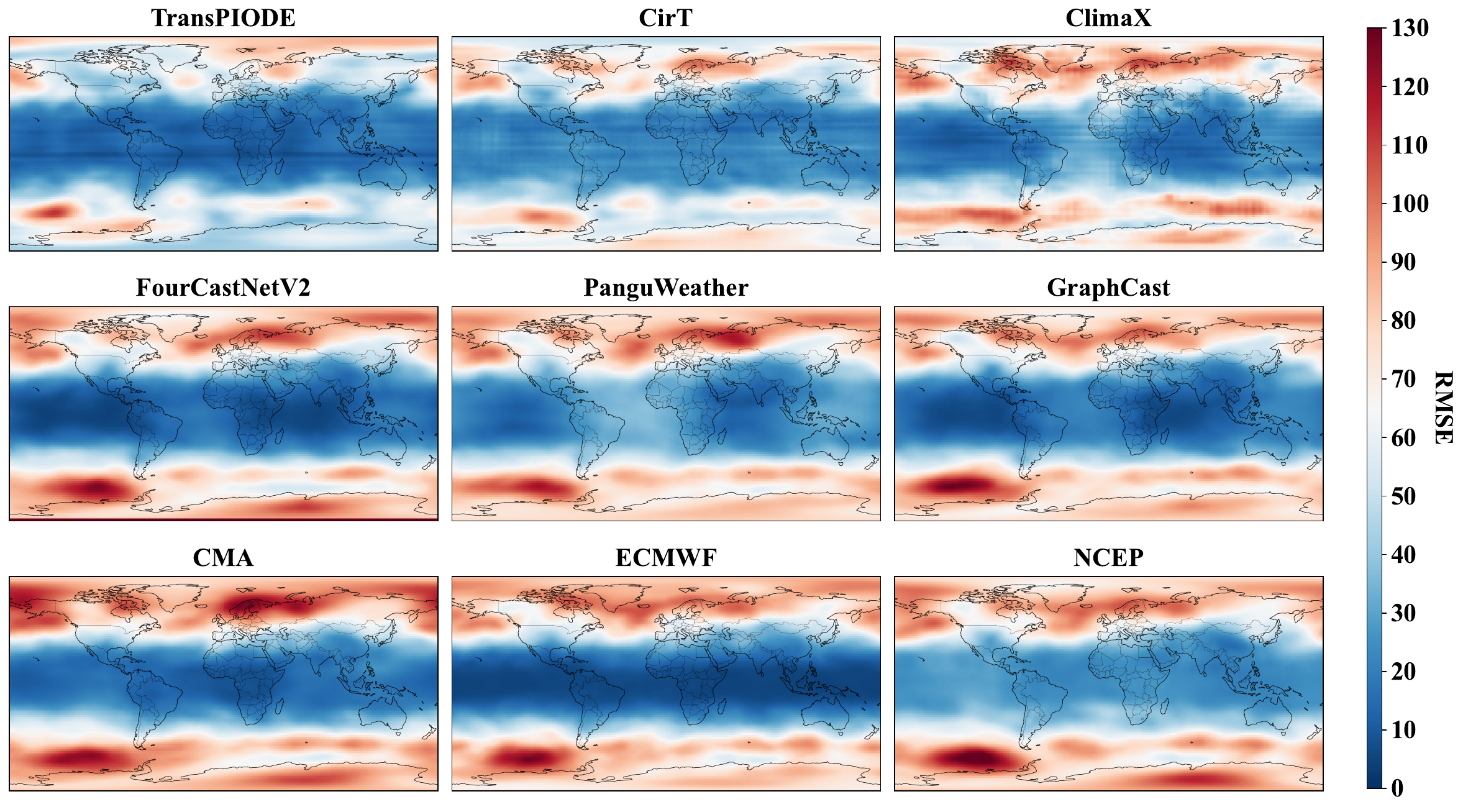}
    \vspace{-0.3cm}
    \caption{Global visualization of RMSE distribution of $z500$ for Weeks 3-4 forecasts across different models. The color scale represents RMSE values, with darker blue indicating better performance (lower error) and darker red indicating worse performance (higher error).}
    \label{appendix_fig:map_z500_0}
\end{figure*}

\begin{figure*}[t]
    \centering
    \includegraphics[width=0.93\linewidth]{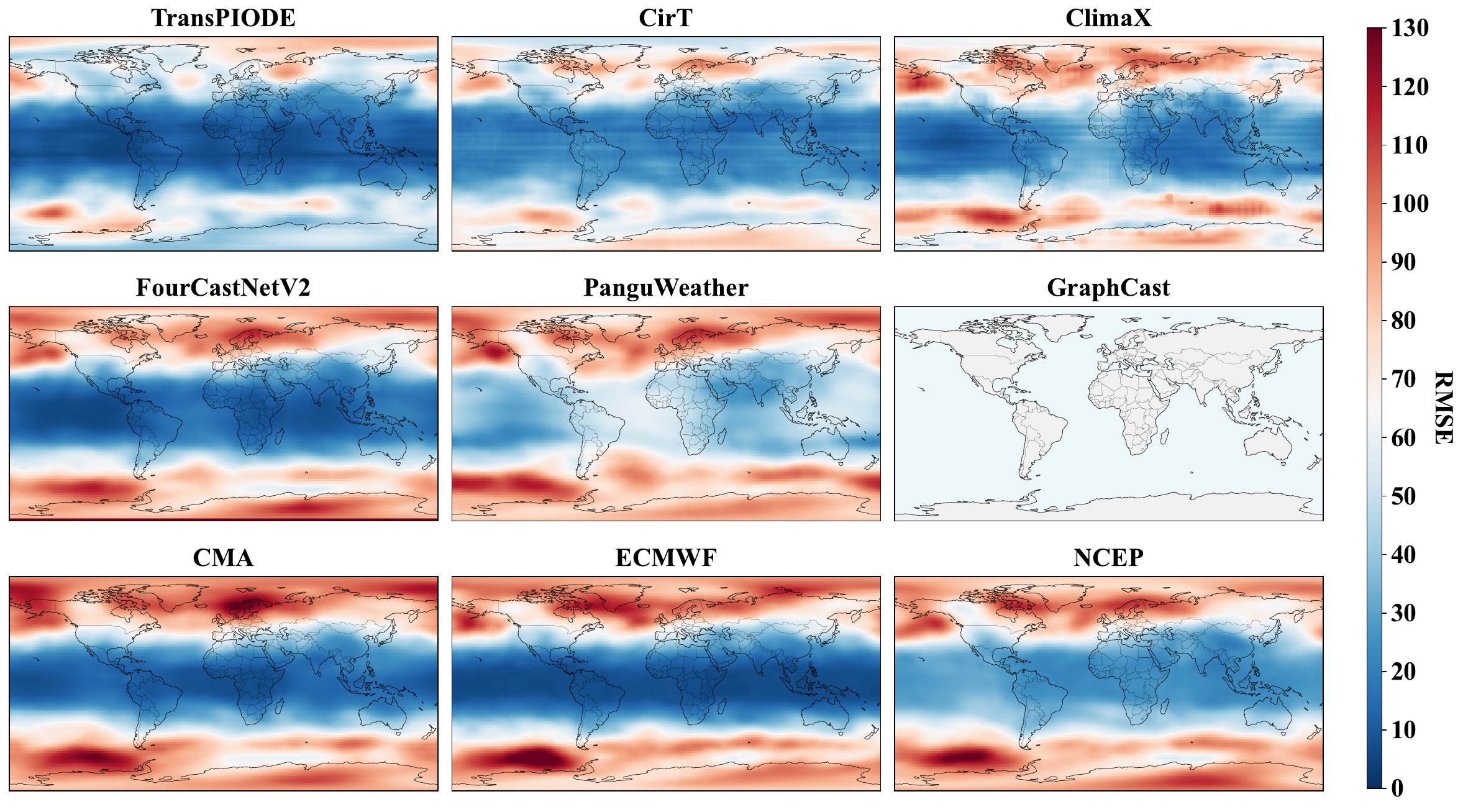}
    \vspace{-0.3cm}
    \caption{Global visualization of RMSE distribution of $z500$ for Weeks 5-6 forecasts across different models. The color scale represents RMSE values, with darker blue indicating better performance (lower error) and darker red indicating worse performance (higher error). Note that GraphCast results are unavailable at Weeks 5-6 due to out-of-memory issues when attempting extended-range forecasts.}
    \label{appendix_fig:map_z500_1}
\end{figure*}


\begin{figure*}[t]
    \centering
    \includegraphics[width=0.85\linewidth]{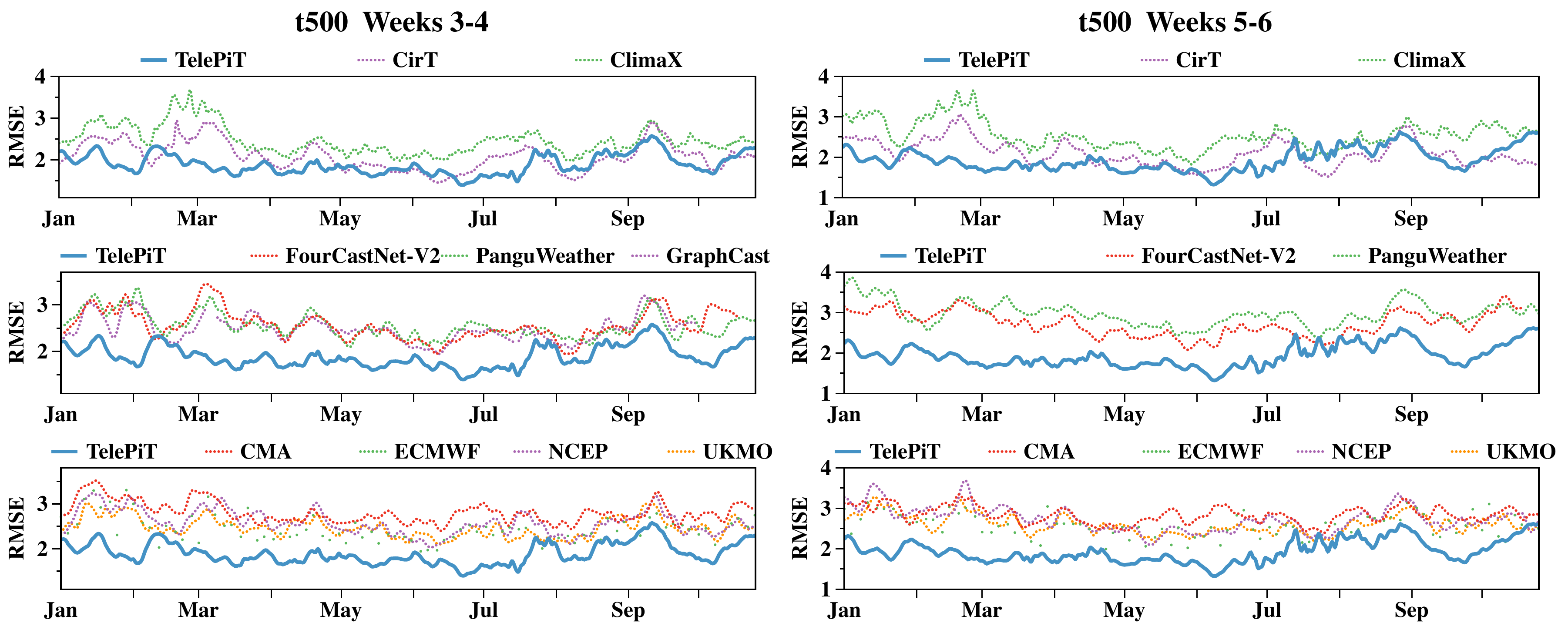}
    \vspace{-0.2cm}
    \caption{Daily forecasting performance (RMSE) of $t500$ in 2018. For improved visualization clarity, a Gaussian smoothing (with $\sigma=1.5$) was applied to the daily values for models exhibiting high variability (Pangu, GraphCast, FourCastNetV2, CMA, ECMWF, NCEP, and UKMO). Note that ECMWF predictions appear as discrete points because their operational system generates forecasts only every 2-3 days, unlike the other models which provide daily predictions.}
    \label{appendix_fig:t500_rmse_2018}
\end{figure*}

\begin{figure*}[t]
    \centering
    \includegraphics[width=0.85\linewidth]{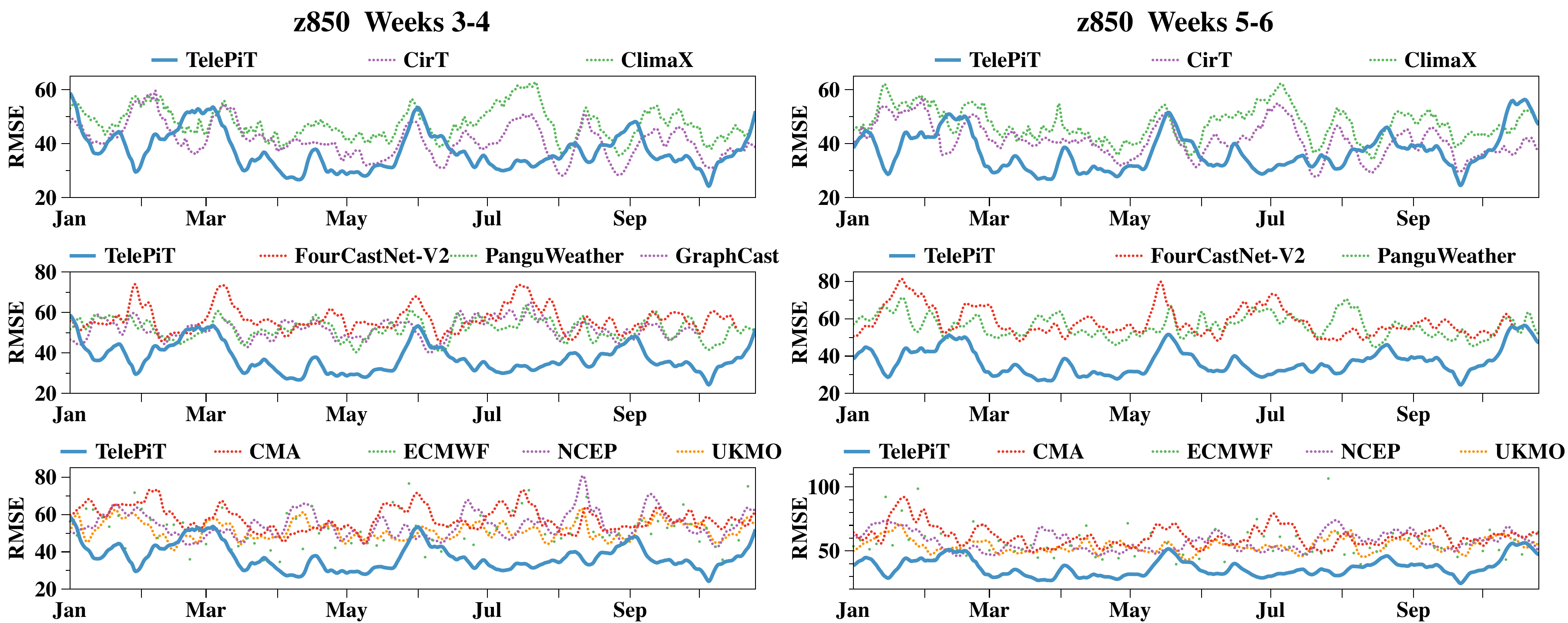}
    \vspace{-0.2cm}
    \caption{Daily forecasting performance (RMSE) of $z850$ in 2018.}
    \label{appendix_fig:z850_rmse_2018}
\end{figure*}

\begin{figure*}[t]
    \centering
    \includegraphics[width=0.85\linewidth]{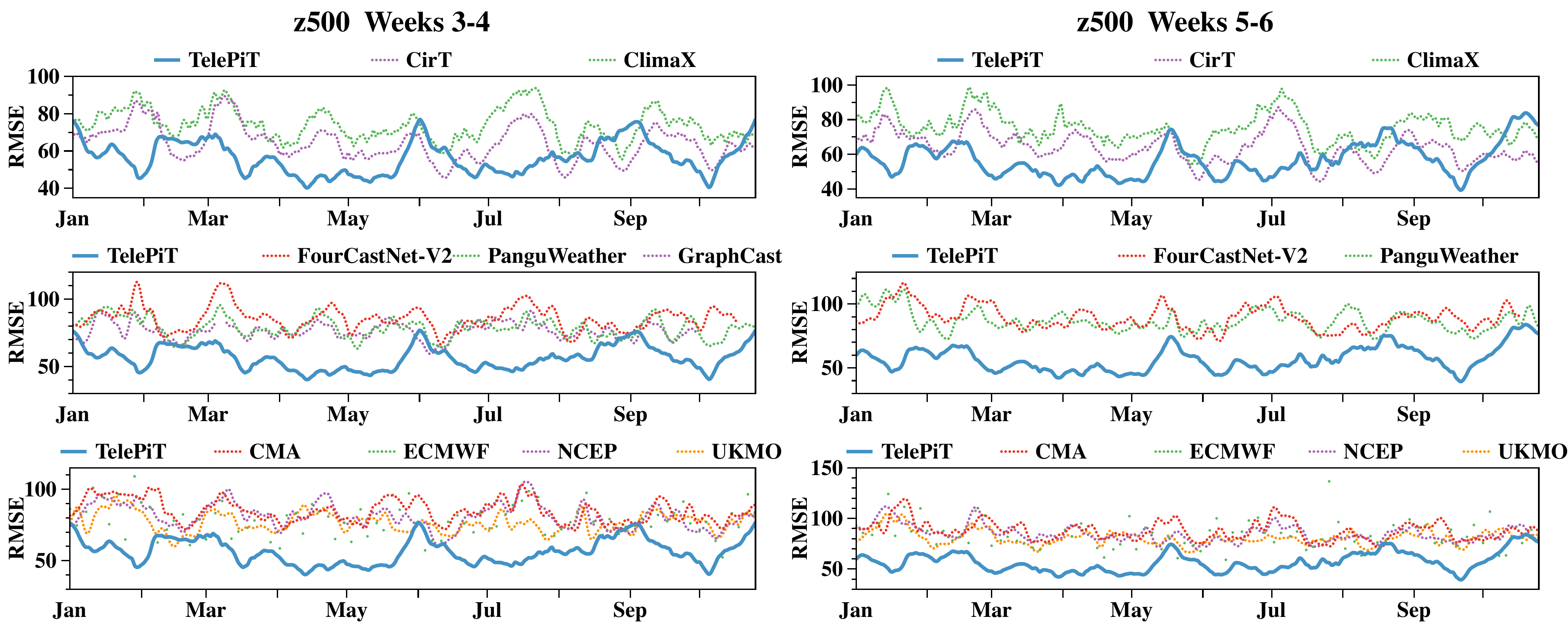}
    \vspace{-0.2cm}
    \caption{Daily forecasting performance (RMSE) of $z500$ in 2018.}
    \label{appendix_fig:z500_rmse_2018}
\end{figure*}

\begin{table*}[t] 
\let\oldeverydisplay\everydisplay
\let\oldeverymath\everymath
\everydisplay{}
\everymath{}
\definecolor{lightblue}{RGB}{240, 249, 255}  
\definecolor{lightred}{RGB}{255, 255, 255}   
\small
\setlength{\tabcolsep}{3pt} 
\caption{Robustness study on 2019 data (out-of-sample) comparing \textbf{TelePiT} against baselines across all metrics, demonstrating consistent performance advantages and strong generalization capabilities. Operational systems (NCEP, UKMO, CMA, ECMWF) results are obtained from ChaosBench and do not include $t2m$, $u10$, $v10$ predictions. GraphCast results are unavailable at Weeks 5-6 due to out-of-memory issues.}

\label{appendix_tab:robustness}
  \begin{tabular}{cc|ccccccc|ccccccc}
    \toprule
    \multirow{2}{*}{\textbf{Metric}} & \multirow{2}{*}{\textbf{Model}} & \multicolumn{7}{c|}{\textbf{Weeks 3-4}} & \multicolumn{7}{c}{\textbf{Weeks 5-6}}\\ 
    & & z500 & z850 & t500 & t850 & t2m & u10 & v10 & z500 & z850 & t500 & t850 & t2m & u10 & v10 \\

    \midrule
    \multirow{10}{*}{\rotatebox{90}{\textbf{RMSE ($\downarrow$)}}} 
    &\multicolumn{1}{>{\columncolor{lightred}}c|}{\text{ NCEP }}  
    &\multicolumn{1}{>{\columncolor{lightred}}c}{66.7175} & \multicolumn{1}{>{\columncolor{lightred}}c}{44.4719} & \multicolumn{1}{>{\columncolor{lightred}}c}{2.1620}& \multicolumn{1}{>{\columncolor{lightred}}c}{2.7271}      
    &\multicolumn{1}{>{\columncolor{lightred}}c}{--} & \multicolumn{1}{>{\columncolor{lightred}}c}{--} & \multicolumn{1}{>{\columncolor{lightred}}c|}{--} 
    &\multicolumn{1}{>{\columncolor{lightred}}c}{69.6268} & \multicolumn{1}{>{\columncolor{lightred}}c}{46.0469} & \multicolumn{1}{>{\columncolor{lightred}}c}{2.3193}& \multicolumn{1}{>{\columncolor{lightred}}c}{2.8611}      
    &\multicolumn{1}{>{\columncolor{lightred}}c}{--} & \multicolumn{1}{>{\columncolor{lightred}}c}{--} & \multicolumn{1}{>{\columncolor{lightred}}c}{--} \\

    &\multicolumn{1}{>{\columncolor{lightblue}}c|}{\text{ UKMO }}  
    &\multicolumn{1}{>{\columncolor{lightblue}}c}{65.6153} & \multicolumn{1}{>{\columncolor{lightblue}}c}{44.8668} & \multicolumn{1}{>{\columncolor{lightblue}}c}{2.1525}& \multicolumn{1}{>{\columncolor{lightblue}}c}{2.6018}      
    &\multicolumn{1}{>{\columncolor{lightblue}}c}{--} & \multicolumn{1}{>{\columncolor{lightblue}}c}{--} & \multicolumn{1}{>{\columncolor{lightblue}}c|}{--} 
    &\multicolumn{1}{>{\columncolor{lightblue}}c}{69.7506} & \multicolumn{1}{>{\columncolor{lightblue}}c}{47.5905} & \multicolumn{1}{>{\columncolor{lightblue}}c}{2.3109}& \multicolumn{1}{>{\columncolor{lightblue}}c}{2.7552}      
    &\multicolumn{1}{>{\columncolor{lightblue}}c}{--} & \multicolumn{1}{>{\columncolor{lightblue}}c}{--} & \multicolumn{1}{>{\columncolor{lightblue}}c}{--} \\
    
    &\multicolumn{1}{>{\columncolor{lightred}}c|}{\text{ CMA }}  
    &\multicolumn{1}{>{\columncolor{lightred}}c}{68.3561} & \multicolumn{1}{>{\columncolor{lightred}}c}{47.2971} & \multicolumn{1}{>{\columncolor{lightred}}c}{2.4489}& \multicolumn{1}{>{\columncolor{lightred}}c}{2.8035}      
    &\multicolumn{1}{>{\columncolor{lightred}}c}{--} & \multicolumn{1}{>{\columncolor{lightred}}c}{--} & \multicolumn{1}{>{\columncolor{lightred}}c|}{--} 
    &\multicolumn{1}{>{\columncolor{lightred}}c}{71.7510} & \multicolumn{1}{>{\columncolor{lightred}}c}{49.6562} & \multicolumn{1}{>{\columncolor{lightred}}c}{2.6457}& \multicolumn{1}{>{\columncolor{lightred}}c}{2.8593}      
    &\multicolumn{1}{>{\columncolor{lightred}}c}{--} & \multicolumn{1}{>{\columncolor{lightred}}c}{--} & \multicolumn{1}{>{\columncolor{lightred}}c}{--} \\
    
    &\multicolumn{1}{>{\columncolor{lightblue}}c|}{\text{ ECMWF }}  
    &\multicolumn{1}{>{\columncolor{lightblue}}c}{62.0551} & \multicolumn{1}{>{\columncolor{lightblue}}c}{42.0964} & \multicolumn{1}{>{\columncolor{lightblue}}c}{2.0025}& \multicolumn{1}{>{\columncolor{lightblue}}c}{2.3296}      
    &\multicolumn{1}{>{\columncolor{lightblue}}c}{--} & \multicolumn{1}{>{\columncolor{lightblue}}c}{--} & \multicolumn{1}{>{\columncolor{lightblue}}c|}{--} 
    &\multicolumn{1}{>{\columncolor{lightblue}}c}{67.0798} & \multicolumn{1}{>{\columncolor{lightblue}}c}{44.9257} & \multicolumn{1}{>{\columncolor{lightblue}}c}{2.1850}& \multicolumn{1}{>{\columncolor{lightblue}}c}{2.5142}      
    &\multicolumn{1}{>{\columncolor{lightblue}}c}{--} & \multicolumn{1}{>{\columncolor{lightblue}}c}{--} & \multicolumn{1}{>{\columncolor{lightblue}}c}{--} \\

    &\multicolumn{1}{>{\columncolor{lightred}}c|}{\text{ GraphCast }}  
    &\multicolumn{1}{>{\columncolor{lightred}}c}{63.4077} & \multicolumn{1}{>{\columncolor{lightred}}c}{42.8811} & \multicolumn{1}{>{\columncolor{lightred}}c}{2.1750}& \multicolumn{1}{>{\columncolor{lightred}}c}{2.3806}      
    &\multicolumn{1}{>{\columncolor{lightred}}c}{--} & \multicolumn{1}{>{\columncolor{lightred}}c}{--} & \multicolumn{1}{>{\columncolor{lightred}}c|}{--} 
    &\multicolumn{1}{>{\columncolor{lightred}}c}{--} & \multicolumn{1}{>{\columncolor{lightred}}c}{--} & \multicolumn{1}{>{\columncolor{lightred}}c}{--}& \multicolumn{1}{>{\columncolor{lightred}}c}{--}      
    &\multicolumn{1}{>{\columncolor{lightred}}c}{--} & \multicolumn{1}{>{\columncolor{lightred}}c}{--} & \multicolumn{1}{>{\columncolor{lightred}}c}{--} \\

    &\multicolumn{1}{>{\columncolor{lightblue}}c|}{\text{ FCNV2 }}  
    &\multicolumn{1}{>{\columncolor{lightblue}}c}{62.4918} & \multicolumn{1}{>{\columncolor{lightblue}}c}{41.6883} & \multicolumn{1}{>{\columncolor{lightblue}}c}{2.0631}& \multicolumn{1}{>{\columncolor{lightblue}}c}{2.3786}      
    &\multicolumn{1}{>{\columncolor{lightblue}}c}{--} & \multicolumn{1}{>{\columncolor{lightblue}}c}{4.1937} & \multicolumn{1}{>{\columncolor{lightblue}}c|}{2.3937} 
    &\multicolumn{1}{>{\columncolor{lightblue}}c}{68.5185} & \multicolumn{1}{>{\columncolor{lightblue}}c}{45.0432} & \multicolumn{1}{>{\columncolor{lightblue}}c}{2.2761}& \multicolumn{1}{>{\columncolor{lightblue}}c}{2.6632}      
    &\multicolumn{1}{>{\columncolor{lightblue}}c}{--} & \multicolumn{1}{>{\columncolor{lightblue}}c}{4.2009} & \multicolumn{1}{>{\columncolor{lightblue}}c}{2.4139} \\
    
    &\multicolumn{1}{>{\columncolor{lightred}}c|}{\text{ Pangu }}  
    &\multicolumn{1}{>{\columncolor{lightred}}c}{65.4311} & \multicolumn{1}{>{\columncolor{lightred}}c}{43.2099} & \multicolumn{1}{>{\columncolor{lightred}}c}{2.2319}& \multicolumn{1}{>{\columncolor{lightred}}c}{2.5877}      
    &\multicolumn{1}{>{\columncolor{lightred}}c}{--} & \multicolumn{1}{>{\columncolor{lightred}}c}{3.9950} & \multicolumn{1}{>{\columncolor{lightred}}c|}{2.3089} 
    &\multicolumn{1}{>{\columncolor{lightred}}c}{76.8535} & \multicolumn{1}{>{\columncolor{lightred}}c}{48.3898} & \multicolumn{1}{>{\columncolor{lightred}}c}{2.7828}& \multicolumn{1}{>{\columncolor{lightred}}c}{3.0028}      
    &\multicolumn{1}{>{\columncolor{lightred}}c}{--} & \multicolumn{1}{>{\columncolor{lightred}}c}{4.0534} & \multicolumn{1}{>{\columncolor{lightred}}c}{2.3238} \\

    &\multicolumn{1}{>{\columncolor{lightblue}}c|}{\text{ ClimaX }}  
    &\multicolumn{1}{>{\columncolor{lightblue}}c}{63.1316} & \multicolumn{1}{>{\columncolor{lightblue}}c}{39.0181} & \multicolumn{1}{>{\columncolor{lightblue}}c}{2.2274}& \multicolumn{1}{>{\columncolor{lightblue}}c}{2.7341}      
    &\multicolumn{1}{>{\columncolor{lightblue}}c}{58.4047} & \multicolumn{1}{>{\columncolor{lightblue}}c}{0.8207} & \multicolumn{1}{>{\columncolor{lightblue}}c|}{0.7755} 
    &\multicolumn{1}{>{\columncolor{lightblue}}c}{64.3506} & \multicolumn{1}{>{\columncolor{lightblue}}c}{39.5042} & \multicolumn{1}{>{\columncolor{lightblue}}c}{2.2819}& \multicolumn{1}{>{\columncolor{lightblue}}c}{2.7875}      
    &\multicolumn{1}{>{\columncolor{lightblue}}c}{58.4240} & \multicolumn{1}{>{\columncolor{lightblue}}c}{0.8210} & \multicolumn{1}{>{\columncolor{lightblue}}c}{0.7772} \\
    
    &\multicolumn{1}{>{\columncolor{lightred}}c|}{\text{ CirT}}  
    &\multicolumn{1}{>{\columncolor{lightred}}c}{53.5886} & \multicolumn{1}{>{\columncolor{lightred}}c}{33.5952} & \multicolumn{1}{>{\columncolor{lightred}}c}{1.8372}& \multicolumn{1}{>{\columncolor{lightred}}c}{2.0697}      
    &\multicolumn{1}{>{\columncolor{lightred}}c}{28.4229} & \multicolumn{1}{>{\columncolor{lightred}}c}{0.5570} & \multicolumn{1}{>{\columncolor{lightred}}c|}{0.5284} 
    &\multicolumn{1}{>{\columncolor{lightred}}c}{53.0649} & \multicolumn{1}{>{\columncolor{lightred}}c}{33.4213} & \multicolumn{1}{>{\columncolor{lightred}}c}{1.8017}& \multicolumn{1}{>{\columncolor{lightred}}c}{2.0853}      
    &\multicolumn{1}{>{\columncolor{lightred}}c}{28.4559} & \multicolumn{1}{>{\columncolor{lightred}}c}{0.5535} & \multicolumn{1}{>{\columncolor{lightred}}c}{0.5256} \\

    &\multicolumn{1}{>{\columncolor{lightblue}}c|}{\textbf{ TelePiT }}  
    &\multicolumn{1}{>{\columncolor{lightblue}}c}{\textbf{49.1308}} & \multicolumn{1}{>{\columncolor{lightblue}}c}{\textbf{31.9932}} & \multicolumn{1}{>{\columncolor{lightblue}}c}{\textbf{1.6652}}& \multicolumn{1}{>{\columncolor{lightblue}}c}{\textbf{1.9258}}      
    &\multicolumn{1}{>{\columncolor{lightblue}}c}{\textbf{12.8109}} & \multicolumn{1}{>{\columncolor{lightblue}}c}{\textbf{0.5073}} & \multicolumn{1}{>{\columncolor{lightblue}}c|}{\textbf{0.4843}} 
    &\multicolumn{1}{>{\columncolor{lightblue}}c}{\textbf{48.6264}} & \multicolumn{1}{>{\columncolor{lightblue}}c}{\textbf{32.0561}} & \multicolumn{1}{>{\columncolor{lightblue}}c}{\textbf{1.6702}}& \multicolumn{1}{>{\columncolor{lightblue}}c}{\textbf{1.9310}}      
    &\multicolumn{1}{>{\columncolor{lightblue}}c}{\textbf{12.8184}} & \multicolumn{1}{>{\columncolor{lightblue}}c}{\textbf{0.5046}} & \multicolumn{1}{>{\columncolor{lightblue}}c}{\textbf{0.4821}} \\

    \midrule
    \multirow{10}{*}{\rotatebox{90}{\textbf{ACC ($\uparrow$)}}}
    &\multicolumn{1}{>{\columncolor{lightred}}c|}{\text{ NCEP }}  
    &\multicolumn{1}{>{\columncolor{lightred}}c}{0.9711} & \multicolumn{1}{>{\columncolor{lightred}}c}{0.9184} & \multicolumn{1}{>{\columncolor{lightred}}c}{0.9811}& \multicolumn{1}{>{\columncolor{lightred}}c}{0.9790}      
    &\multicolumn{1}{>{\columncolor{lightred}}c}{--} & \multicolumn{1}{>{\columncolor{lightred}}c}{--} & \multicolumn{1}{>{\columncolor{lightred}}c|}{--} 
    &\multicolumn{1}{>{\columncolor{lightred}}c}{0.9689} & \multicolumn{1}{>{\columncolor{lightred}}c}{0.9123} & \multicolumn{1}{>{\columncolor{lightred}}c}{0.9783}& \multicolumn{1}{>{\columncolor{lightred}}c}{0.9765}      
    &\multicolumn{1}{>{\columncolor{lightred}}c}{--} & \multicolumn{1}{>{\columncolor{lightred}}c}{--} & \multicolumn{1}{>{\columncolor{lightred}}c}{--} \\

    &\multicolumn{1}{>{\columncolor{lightblue}}c|}{\text{ UKMO }}  
    &\multicolumn{1}{>{\columncolor{lightblue}}c}{0.9705} & \multicolumn{1}{>{\columncolor{lightblue}}c}{0.9181} & \multicolumn{1}{>{\columncolor{lightblue}}c}{0.9808}& \multicolumn{1}{>{\columncolor{lightblue}}c}{0.9761}      
    &\multicolumn{1}{>{\columncolor{lightblue}}c}{--} & \multicolumn{1}{>{\columncolor{lightblue}}c}{--} & \multicolumn{1}{>{\columncolor{lightblue}}c|}{--} 
    &\multicolumn{1}{>{\columncolor{lightblue}}c}{0.9674} & \multicolumn{1}{>{\columncolor{lightblue}}c}{0.9083} & \multicolumn{1}{>{\columncolor{lightblue}}c}{0.9784}& \multicolumn{1}{>{\columncolor{lightblue}}c}{0.9735}      
    &\multicolumn{1}{>{\columncolor{lightblue}}c}{--} & \multicolumn{1}{>{\columncolor{lightblue}}c}{--} & \multicolumn{1}{>{\columncolor{lightblue}}c}{--} \\  
    
    &\multicolumn{1}{>{\columncolor{lightred}}c|}{\text{ CMA }}  
    &\multicolumn{1}{>{\columncolor{lightred}}c}{0.9700} & \multicolumn{1}{>{\columncolor{lightred}}c}{0.9149} & \multicolumn{1}{>{\columncolor{lightred}}c}{0.9781}& \multicolumn{1}{>{\columncolor{lightred}}c}{0.9779}      
    &\multicolumn{1}{>{\columncolor{lightred}}c}{--} & \multicolumn{1}{>{\columncolor{lightred}}c}{--} & \multicolumn{1}{>{\columncolor{lightred}}c|}{--} 
    &\multicolumn{1}{>{\columncolor{lightred}}c}{0.9670} & \multicolumn{1}{>{\columncolor{lightred}}c}{0.9060} & \multicolumn{1}{>{\columncolor{lightred}}c}{0.9754}& \multicolumn{1}{>{\columncolor{lightred}}c}{0.9772}      
    &\multicolumn{1}{>{\columncolor{lightred}}c}{--} & \multicolumn{1}{>{\columncolor{lightred}}c}{--} & \multicolumn{1}{>{\columncolor{lightred}}c}{--} \\
    
    &\multicolumn{1}{>{\columncolor{lightblue}}c|}{\text{ ECMWF }}  
    &\multicolumn{1}{>{\columncolor{lightblue}}c}{0.9735} & \multicolumn{1}{>{\columncolor{lightblue}}c}{0.9246} & \multicolumn{1}{>{\columncolor{lightblue}}c}{0.9834}& \multicolumn{1}{>{\columncolor{lightblue}}c}{0.9841}      
    &\multicolumn{1}{>{\columncolor{lightblue}}c}{--} & \multicolumn{1}{>{\columncolor{lightblue}}c}{--} & \multicolumn{1}{>{\columncolor{lightblue}}c|}{--} 
    &\multicolumn{1}{>{\columncolor{lightblue}}c}{0.9688} & \multicolumn{1}{>{\columncolor{lightblue}}c}{0.9139} & \multicolumn{1}{>{\columncolor{lightblue}}c}{0.9800}& \multicolumn{1}{>{\columncolor{lightblue}}c}{0.9814}      
    &\multicolumn{1}{>{\columncolor{lightblue}}c}{--} & \multicolumn{1}{>{\columncolor{lightblue}}c}{--} & \multicolumn{1}{>{\columncolor{lightblue}}c}{--} \\

    &\multicolumn{1}{>{\columncolor{lightred}}c|}{\text{ GraphCast }}  
    &\multicolumn{1}{>{\columncolor{lightred}}c}{0.9725} & \multicolumn{1}{>{\columncolor{lightred}}c}{0.9252} & \multicolumn{1}{>{\columncolor{lightred}}c}{0.9807}& \multicolumn{1}{>{\columncolor{lightred}}c}{0.9833}      
    &\multicolumn{1}{>{\columncolor{lightred}}c}{--} & \multicolumn{1}{>{\columncolor{lightred}}c}{--} & \multicolumn{1}{>{\columncolor{lightred}}c|}{--} 
    &\multicolumn{1}{>{\columncolor{lightred}}c}{--} & \multicolumn{1}{>{\columncolor{lightred}}c}{--} & \multicolumn{1}{>{\columncolor{lightred}}c}{--}& \multicolumn{1}{>{\columncolor{lightred}}c}{--}      
    &\multicolumn{1}{>{\columncolor{lightred}}c}{--} & \multicolumn{1}{>{\columncolor{lightred}}c}{--} & \multicolumn{1}{>{\columncolor{lightred}}c}{--} \\

    &\multicolumn{1}{>{\columncolor{lightblue}}c|}{\text{ FCNV2 }}  
    &\multicolumn{1}{>{\columncolor{lightblue}}c}{0.9681} & \multicolumn{1}{>{\columncolor{lightblue}}c}{0.9202} & \multicolumn{1}{>{\columncolor{lightblue}}c}{0.9804}& \multicolumn{1}{>{\columncolor{lightblue}}c}{0.9793}      
    &\multicolumn{1}{>{\columncolor{lightblue}}c}{--} & \multicolumn{1}{>{\columncolor{lightblue}}c}{0.3171} & \multicolumn{1}{>{\columncolor{lightblue}}c|}{0.4819} 
    &\multicolumn{1}{>{\columncolor{lightblue}}c}{0.9631} & \multicolumn{1}{>{\columncolor{lightblue}}c}{0.9088} & \multicolumn{1}{>{\columncolor{lightblue}}c}{0.9772}& \multicolumn{1}{>{\columncolor{lightblue}}c}{0.9751}      
    &\multicolumn{1}{>{\columncolor{lightblue}}c}{--} & \multicolumn{1}{>{\columncolor{lightblue}}c}{0.3108} & \multicolumn{1}{>{\columncolor{lightblue}}c}{0.4709} \\
    
    &\multicolumn{1}{>{\columncolor{lightred}}c|}{\text{ Pangu }}  
    &\multicolumn{1}{>{\columncolor{lightred}}c}{0.9713} & \multicolumn{1}{>{\columncolor{lightred}}c}{0.9224} & \multicolumn{1}{>{\columncolor{lightred}}c}{0.9801}& \multicolumn{1}{>{\columncolor{lightred}}c}{0.9806}      
    &\multicolumn{1}{>{\columncolor{lightred}}c}{--} & \multicolumn{1}{>{\columncolor{lightred}}c}{0.3305} & \multicolumn{1}{>{\columncolor{lightred}}c|}{0.4828} 
    &\multicolumn{1}{>{\columncolor{lightred}}c}{0.9636} & \multicolumn{1}{>{\columncolor{lightred}}c}{0.9045} & \multicolumn{1}{>{\columncolor{lightred}}c}{0.9721}& \multicolumn{1}{>{\columncolor{lightred}}c}{0.9753}      
    &\multicolumn{1}{>{\columncolor{lightred}}c}{--} & \multicolumn{1}{>{\columncolor{lightred}}c}{0.3166} & \multicolumn{1}{>{\columncolor{lightred}}c}{0.4684} \\

    &\multicolumn{1}{>{\columncolor{lightblue}}c|}{\text{ ClimaX }}  
    &\multicolumn{1}{>{\columncolor{lightblue}}c}{0.9764} & \multicolumn{1}{>{\columncolor{lightblue}}c}{0.9410} & \multicolumn{1}{>{\columncolor{lightblue}}c}{0.9820}& \multicolumn{1}{>{\columncolor{lightblue}}c}{0.9808}      
    &\multicolumn{1}{>{\columncolor{lightblue}}c}{0.8965} & \multicolumn{1}{>{\columncolor{lightblue}}c}{0.7821} & \multicolumn{1}{>{\columncolor{lightblue}}c|}{0.7781} 
    &\multicolumn{1}{>{\columncolor{lightblue}}c}{0.9747} & \multicolumn{1}{>{\columncolor{lightblue}}c}{0.9380} & \multicolumn{1}{>{\columncolor{lightblue}}c}{0.9808}& \multicolumn{1}{>{\columncolor{lightblue}}c}{0.9797}      
    &\multicolumn{1}{>{\columncolor{lightblue}}c}{0.8965} & \multicolumn{1}{>{\columncolor{lightblue}}c}{0.7808} & \multicolumn{1}{>{\columncolor{lightblue}}c}{0.7776} \\
    
    &\multicolumn{1}{>{\columncolor{lightred}}c|}{\text{ CirT}}  
    &\multicolumn{1}{>{\columncolor{lightred}}c}{0.9812} & \multicolumn{1}{>{\columncolor{lightred}}c}{0.9528} & \multicolumn{1}{>{\columncolor{lightred}}c}{0.9867}& \multicolumn{1}{>{\columncolor{lightred}}c}{0.9877}      
    &\multicolumn{1}{>{\columncolor{lightred}}c}{0.9776} & \multicolumn{1}{>{\columncolor{lightred}}c}{0.9126} & \multicolumn{1}{>{\columncolor{lightred}}c|}{0.9175} 
    &\multicolumn{1}{>{\columncolor{lightred}}c}{0.9812} & \multicolumn{1}{>{\columncolor{lightred}}c}{0.9529} & \multicolumn{1}{>{\columncolor{lightred}}c}{0.9869}& \multicolumn{1}{>{\columncolor{lightred}}c}{0.9874}      
    &\multicolumn{1}{>{\columncolor{lightred}}c}{0.9775} & \multicolumn{1}{>{\columncolor{lightred}}c}{0.9133} & \multicolumn{1}{>{\columncolor{lightred}}c}{0.9185} \\

    &\multicolumn{1}{>{\columncolor{lightblue}}c|}{\textbf{ TelePiT }}  
    &\multicolumn{1}{>{\columncolor{lightblue}}c}{\textbf{0.9840}} & \multicolumn{1}{>{\columncolor{lightblue}}c}{\textbf{0.9569}} & \multicolumn{1}{>{\columncolor{lightblue}}c}{\textbf{0.9886}}& \multicolumn{1}{>{\columncolor{lightblue}}c}{\textbf{0.9891}}      
    &\multicolumn{1}{>{\columncolor{lightblue}}c}{\textbf{0.9954}} & \multicolumn{1}{>{\columncolor{lightblue}}c}{\textbf{0.9239}} & \multicolumn{1}{>{\columncolor{lightblue}}c|}{\textbf{0.9276}} 
    &\multicolumn{1}{>{\columncolor{lightblue}}c}{\textbf{0.9839}} & \multicolumn{1}{>{\columncolor{lightblue}}c}{\textbf{0.9557}} & \multicolumn{1}{>{\columncolor{lightblue}}c}{\textbf{0.9886}}& \multicolumn{1}{>{\columncolor{lightblue}}c}{\textbf{0.9891}}      
    &\multicolumn{1}{>{\columncolor{lightblue}}c}{\textbf{0.9954}} & \multicolumn{1}{>{\columncolor{lightblue}}c}{\textbf{0.9246}} & \multicolumn{1}{>{\columncolor{lightblue}}c}{\textbf{0.9273}} \\

    \midrule
    \multirow{10}{*}{\rotatebox{90}{\textbf{MS-SSIM ($\uparrow$)}}}
    &\multicolumn{1}{>{\columncolor{lightred}}c|}{\text{ NCEP }}  
    &\multicolumn{1}{>{\columncolor{lightred}}c}{0.8655} & \multicolumn{1}{>{\columncolor{lightred}}c}{0.7921} & \multicolumn{1}{>{\columncolor{lightred}}c}{0.8850}& \multicolumn{1}{>{\columncolor{lightred}}c}{0.9120}      
    &\multicolumn{1}{>{\columncolor{lightred}}c}{--} & \multicolumn{1}{>{\columncolor{lightred}}c}{--} & \multicolumn{1}{>{\columncolor{lightred}}c|}{--} 
    &\multicolumn{1}{>{\columncolor{lightred}}c}{0.8554} & \multicolumn{1}{>{\columncolor{lightred}}c}{0.7793} & \multicolumn{1}{>{\columncolor{lightred}}c}{0.8724}& \multicolumn{1}{>{\columncolor{lightred}}c}{0.9037}      
    &\multicolumn{1}{>{\columncolor{lightred}}c}{--} & \multicolumn{1}{>{\columncolor{lightred}}c}{--} & \multicolumn{1}{>{\columncolor{lightred}}c}{--} \\

    &\multicolumn{1}{>{\columncolor{lightblue}}c|}{\text{ UKMO }}  
    &\multicolumn{1}{>{\columncolor{lightblue}}c}{0.8647} & \multicolumn{1}{>{\columncolor{lightblue}}c}{0.7849} & \multicolumn{1}{>{\columncolor{lightblue}}c}{0.8873}& \multicolumn{1}{>{\columncolor{lightblue}}c}{0.7993}      
    &\multicolumn{1}{>{\columncolor{lightblue}}c}{--} & \multicolumn{1}{>{\columncolor{lightblue}}c}{--} & \multicolumn{1}{>{\columncolor{lightblue}}c|}{--} 
    &\multicolumn{1}{>{\columncolor{lightblue}}c}{0.8517} & \multicolumn{1}{>{\columncolor{lightblue}}c}{0.7652} & \multicolumn{1}{>{\columncolor{lightblue}}c}{0.8739}& \multicolumn{1}{>{\columncolor{lightblue}}c}{0.7914}      
    &\multicolumn{1}{>{\columncolor{lightblue}}c}{--} & \multicolumn{1}{>{\columncolor{lightblue}}c}{--} & \multicolumn{1}{>{\columncolor{lightblue}}c}{--} \\
    
    &\multicolumn{1}{>{\columncolor{lightred}}c|}{\text{ CMA }}  
    &\multicolumn{1}{>{\columncolor{lightred}}c}{0.8569} & \multicolumn{1}{>{\columncolor{lightred}}c}{0.7827} & \multicolumn{1}{>{\columncolor{lightred}}c}{0.8781}& \multicolumn{1}{>{\columncolor{lightred}}c}{0.8984}      
    &\multicolumn{1}{>{\columncolor{lightred}}c}{--} & \multicolumn{1}{>{\columncolor{lightred}}c}{--} & \multicolumn{1}{>{\columncolor{lightred}}c|}{--} 
    &\multicolumn{1}{>{\columncolor{lightred}}c}{0.8452} & \multicolumn{1}{>{\columncolor{lightred}}c}{0.7642} & \multicolumn{1}{>{\columncolor{lightred}}c}{0.8681}& \multicolumn{1}{>{\columncolor{lightred}}c}{0.8922}      
    &\multicolumn{1}{>{\columncolor{lightred}}c}{--} & \multicolumn{1}{>{\columncolor{lightred}}c}{--} & \multicolumn{1}{>{\columncolor{lightred}}c}{--} \\
    
    &\multicolumn{1}{>{\columncolor{lightblue}}c|}{\text{ ECMWF }}  
    &\multicolumn{1}{>{\columncolor{lightblue}}c}{0.8744} & \multicolumn{1}{>{\columncolor{lightblue}}c}{0.8145} & \multicolumn{1}{>{\columncolor{lightblue}}c}{0.8961}& \multicolumn{1}{>{\columncolor{lightblue}}c}{0.9196}      
    &\multicolumn{1}{>{\columncolor{lightblue}}c}{--} & \multicolumn{1}{>{\columncolor{lightblue}}c}{--} & \multicolumn{1}{>{\columncolor{lightblue}}c|}{--} 
    &\multicolumn{1}{>{\columncolor{lightblue}}c}{0.8548} & \multicolumn{1}{>{\columncolor{lightblue}}c}{0.7907} & \multicolumn{1}{>{\columncolor{lightblue}}c}{0.8789}& \multicolumn{1}{>{\columncolor{lightblue}}c}{0.9079}      
    &\multicolumn{1}{>{\columncolor{lightblue}}c}{--} & \multicolumn{1}{>{\columncolor{lightblue}}c}{--} & \multicolumn{1}{>{\columncolor{lightblue}}c}{--} \\

    &\multicolumn{1}{>{\columncolor{lightred}}c|}{\text{ GraphCast }}  
    &\multicolumn{1}{>{\columncolor{lightred}}c}{0.8686} & \multicolumn{1}{>{\columncolor{lightred}}c}{0.7927} & \multicolumn{1}{>{\columncolor{lightred}}c}{0.8915}& \multicolumn{1}{>{\columncolor{lightred}}c}{0.9176}      
    &\multicolumn{1}{>{\columncolor{lightred}}c}{--} & \multicolumn{1}{>{\columncolor{lightred}}c}{--} & \multicolumn{1}{>{\columncolor{lightred}}c|}{--} 
    &\multicolumn{1}{>{\columncolor{lightred}}c}{--} & \multicolumn{1}{>{\columncolor{lightred}}c}{--} & \multicolumn{1}{>{\columncolor{lightred}}c}{--}& \multicolumn{1}{>{\columncolor{lightred}}c}{--}      
    &\multicolumn{1}{>{\columncolor{lightred}}c}{--} & \multicolumn{1}{>{\columncolor{lightred}}c}{--} & \multicolumn{1}{>{\columncolor{lightred}}c}{--} \\

    &\multicolumn{1}{>{\columncolor{lightblue}}c|}{\text{ FCNV2 }}  
    &\multicolumn{1}{>{\columncolor{lightblue}}c}{0.8617} & \multicolumn{1}{>{\columncolor{lightblue}}c}{0.8041} & \multicolumn{1}{>{\columncolor{lightblue}}c}{0.8970}& \multicolumn{1}{>{\columncolor{lightblue}}c}{0.9165}      
    &\multicolumn{1}{>{\columncolor{lightblue}}c}{--} & \multicolumn{1}{>{\columncolor{lightblue}}c}{0.2204} & \multicolumn{1}{>{\columncolor{lightblue}}c|}{0.3258} 
    &\multicolumn{1}{>{\columncolor{lightblue}}c}{0.8505} & \multicolumn{1}{>{\columncolor{lightblue}}c}{0.7833} & \multicolumn{1}{>{\columncolor{lightblue}}c}{0.8844}& \multicolumn{1}{>{\columncolor{lightblue}}c}{0.9064}      
    &\multicolumn{1}{>{\columncolor{lightblue}}c}{--} & \multicolumn{1}{>{\columncolor{lightblue}}c}{0.2162} & \multicolumn{1}{>{\columncolor{lightblue}}c}{0.3195} \\
    
    &\multicolumn{1}{>{\columncolor{lightred}}c|}{\text{ Pangu }}  
    &\multicolumn{1}{>{\columncolor{lightred}}c}{0.8662} & \multicolumn{1}{>{\columncolor{lightred}}c}{0.7829} & \multicolumn{1}{>{\columncolor{lightred}}c}{0.8841}& \multicolumn{1}{>{\columncolor{lightred}}c}{0.9108}      
    &\multicolumn{1}{>{\columncolor{lightred}}c}{--} & \multicolumn{1}{>{\columncolor{lightred}}c}{0.2241} & \multicolumn{1}{>{\columncolor{lightred}}c|}{0.3193} 
    &\multicolumn{1}{>{\columncolor{lightred}}c}{0.8431} & \multicolumn{1}{>{\columncolor{lightred}}c}{0.7466} & \multicolumn{1}{>{\columncolor{lightred}}c}{0.8575}& \multicolumn{1}{>{\columncolor{lightred}}c}{0.8942}      
    &\multicolumn{1}{>{\columncolor{lightred}}c}{--} & \multicolumn{1}{>{\columncolor{lightred}}c}{0.2147} & \multicolumn{1}{>{\columncolor{lightred}}c}{0.3127} \\

    &\multicolumn{1}{>{\columncolor{lightblue}}c|}{\text{ ClimaX }}  
    &\multicolumn{1}{>{\columncolor{lightblue}}c}{0.8660} & \multicolumn{1}{>{\columncolor{lightblue}}c}{0.7890} & \multicolumn{1}{>{\columncolor{lightblue}}c}{0.8782}& \multicolumn{1}{>{\columncolor{lightblue}}c}{0.8948}      
    &\multicolumn{1}{>{\columncolor{lightblue}}c}{0.7193} & \multicolumn{1}{>{\columncolor{lightblue}}c}{0.8714} & \multicolumn{1}{>{\columncolor{lightblue}}c|}{0.8650} 
    &\multicolumn{1}{>{\columncolor{lightblue}}c}{0.8597} & \multicolumn{1}{>{\columncolor{lightblue}}c}{0.7862} & \multicolumn{1}{>{\columncolor{lightblue}}c}{0.8725}& \multicolumn{1}{>{\columncolor{lightblue}}c}{0.8886}      
    &\multicolumn{1}{>{\columncolor{lightblue}}c}{0.7191} & \multicolumn{1}{>{\columncolor{lightblue}}c}{0.8696} & \multicolumn{1}{>{\columncolor{lightblue}}c}{0.8639} \\
    
    &\multicolumn{1}{>{\columncolor{lightred}}c|}{\text{ CirT}}  
    &\multicolumn{1}{>{\columncolor{lightred}}c}{0.9077} & \multicolumn{1}{>{\columncolor{lightred}}c}{0.8633} & \multicolumn{1}{>{\columncolor{lightred}}c}{0.9269}& \multicolumn{1}{>{\columncolor{lightred}}c}{0.9447}      
    &\multicolumn{1}{>{\columncolor{lightred}}c}{0.8159} & \multicolumn{1}{>{\columncolor{lightred}}c}{0.9382} & \multicolumn{1}{>{\columncolor{lightred}}c|}{0.9314} 
    &\multicolumn{1}{>{\columncolor{lightred}}c}{0.9083} & \multicolumn{1}{>{\columncolor{lightred}}c}{0.8637} & \multicolumn{1}{>{\columncolor{lightred}}c}{\textbf{0.9273}}& \multicolumn{1}{>{\columncolor{lightred}}c}{0.9438}      
    &\multicolumn{1}{>{\columncolor{lightred}}c}{0.8137} & \multicolumn{1}{>{\columncolor{lightred}}c}{0.9390} & \multicolumn{1}{>{\columncolor{lightred}}c}{0.9329} \\

    &\multicolumn{1}{>{\columncolor{lightblue}}c|}{\textbf{ TelePiT }}  
    &\multicolumn{1}{>{\columncolor{lightblue}}c}{\textbf{0.9093}} & \multicolumn{1}{>{\columncolor{lightblue}}c}{\textbf{0.8665}} & \multicolumn{1}{>{\columncolor{lightblue}}c}{\textbf{0.9276}}& \multicolumn{1}{>{\columncolor{lightblue}}c}{\textbf{0.9459}}      
    &\multicolumn{1}{>{\columncolor{lightblue}}c}{\textbf{0.8494}} & \multicolumn{1}{>{\columncolor{lightblue}}c}{\textbf{0.9459}} & \multicolumn{1}{>{\columncolor{lightblue}}c|}{\textbf{0.9390}} 
    &\multicolumn{1}{>{\columncolor{lightblue}}c}{\textbf{0.9101}} & \multicolumn{1}{>{\columncolor{lightblue}}c}{\textbf{0.8660}} & \multicolumn{1}{>{\columncolor{lightblue}}c}{\text{0.9271}}& \multicolumn{1}{>{\columncolor{lightblue}}c}{\textbf{0.9454}}      
    &\multicolumn{1}{>{\columncolor{lightblue}}c}{\textbf{0.8494}} & \multicolumn{1}{>{\columncolor{lightblue}}c}{\textbf{0.9454}} & \multicolumn{1}{>{\columncolor{lightblue}}c}{\textbf{0.9396}} \\

    \midrule
    \multirow{10}{*}{\rotatebox{90}{\textbf{SpecDiv ($\downarrow$)}}}
    &\multicolumn{1}{>{\columncolor{lightred}}c|}{\text{ NCEP }}  
    &\multicolumn{1}{>{\columncolor{lightred}}c}{0.5451} & \multicolumn{1}{>{\columncolor{lightred}}c}{0.4890} & \multicolumn{1}{>{\columncolor{lightred}}c}{4.3145}& \multicolumn{1}{>{\columncolor{lightred}}c}{0.6117}      
    &\multicolumn{1}{>{\columncolor{lightred}}c}{--} & \multicolumn{1}{>{\columncolor{lightred}}c}{--} & \multicolumn{1}{>{\columncolor{lightred}}c|}{--} 
    &\multicolumn{1}{>{\columncolor{lightred}}c}{0.5486} & \multicolumn{1}{>{\columncolor{lightred}}c}{0.4704} & \multicolumn{1}{>{\columncolor{lightred}}c}{4.2580}& \multicolumn{1}{>{\columncolor{lightred}}c}{0.5819}      
    &\multicolumn{1}{>{\columncolor{lightred}}c}{--} & \multicolumn{1}{>{\columncolor{lightred}}c}{--} & \multicolumn{1}{>{\columncolor{lightred}}c}{--} \\

    &\multicolumn{1}{>{\columncolor{lightblue}}c|}{\text{ UKMO }}  
    &\multicolumn{1}{>{\columncolor{lightblue}}c}{0.1281} & \multicolumn{1}{>{\columncolor{lightblue}}c}{0.1151} & \multicolumn{1}{>{\columncolor{lightblue}}c}{0.2576}& \multicolumn{1}{>{\columncolor{lightblue}}c}{0.1022}      
    &\multicolumn{1}{>{\columncolor{lightblue}}c}{--} & \multicolumn{1}{>{\columncolor{lightblue}}c}{--} & \multicolumn{1}{>{\columncolor{lightblue}}c|}{--} 
    &\multicolumn{1}{>{\columncolor{lightblue}}c}{0.1194} & \multicolumn{1}{>{\columncolor{lightblue}}c}{0.1220} & \multicolumn{1}{>{\columncolor{lightblue}}c}{0.2465}& \multicolumn{1}{>{\columncolor{lightblue}}c}{0.0966}      
    &\multicolumn{1}{>{\columncolor{lightblue}}c}{--} & \multicolumn{1}{>{\columncolor{lightblue}}c}{--} & \multicolumn{1}{>{\columncolor{lightblue}}c}{--} \\
    
    &\multicolumn{1}{>{\columncolor{lightred}}c|}{\text{ CMA }}  
    &\multicolumn{1}{>{\columncolor{lightred}}c}{0.1655} & \multicolumn{1}{>{\columncolor{lightred}}c}{0.1787} & \multicolumn{1}{>{\columncolor{lightred}}c}{0.2090}& \multicolumn{1}{>{\columncolor{lightred}}c}{0.0194}      
    &\multicolumn{1}{>{\columncolor{lightred}}c}{--} & \multicolumn{1}{>{\columncolor{lightred}}c}{--} & \multicolumn{1}{>{\columncolor{lightred}}c|}{--} 
    &\multicolumn{1}{>{\columncolor{lightred}}c}{0.1678} & \multicolumn{1}{>{\columncolor{lightred}}c}{0.1591} & \multicolumn{1}{>{\columncolor{lightred}}c}{0.1346}& \multicolumn{1}{>{\columncolor{lightred}}c}{0.0258}      
    &\multicolumn{1}{>{\columncolor{lightred}}c}{--} & \multicolumn{1}{>{\columncolor{lightred}}c}{--} & \multicolumn{1}{>{\columncolor{lightred}}c}{--} \\
    
    &\multicolumn{1}{>{\columncolor{lightblue}}c|}{\text{ ECMWF }}  
    &\multicolumn{1}{>{\columncolor{lightblue}}c}{0.6792} & \multicolumn{1}{>{\columncolor{lightblue}}c}{0.0719} & \multicolumn{1}{>{\columncolor{lightblue}}c}{0.1650}& \multicolumn{1}{>{\columncolor{lightblue}}c}{0.3280}      
    &\multicolumn{1}{>{\columncolor{lightblue}}c}{--} & \multicolumn{1}{>{\columncolor{lightblue}}c}{--} & \multicolumn{1}{>{\columncolor{lightblue}}c|}{--} 
    &\multicolumn{1}{>{\columncolor{lightblue}}c}{0.2564} & \multicolumn{1}{>{\columncolor{lightblue}}c}{0.0619} & \multicolumn{1}{>{\columncolor{lightblue}}c}{0.1250}& \multicolumn{1}{>{\columncolor{lightblue}}c}{0.1378}      
    &\multicolumn{1}{>{\columncolor{lightblue}}c}{--} & \multicolumn{1}{>{\columncolor{lightblue}}c}{--} & \multicolumn{1}{>{\columncolor{lightblue}}c}{--} \\

    &\multicolumn{1}{>{\columncolor{lightred}}c|}{\text{ GraphCast }}  
    &\multicolumn{1}{>{\columncolor{lightred}}c}{0.0819} & \multicolumn{1}{>{\columncolor{lightred}}c}{0.3114} & \multicolumn{1}{>{\columncolor{lightred}}c}{0.0671}& \multicolumn{1}{>{\columncolor{lightred}}c}{0.0631}      
    &\multicolumn{1}{>{\columncolor{lightred}}c}{--} & \multicolumn{1}{>{\columncolor{lightred}}c}{--} & \multicolumn{1}{>{\columncolor{lightred}}c|}{--} 
    &\multicolumn{1}{>{\columncolor{lightred}}c}{--} & \multicolumn{1}{>{\columncolor{lightred}}c}{--} & \multicolumn{1}{>{\columncolor{lightred}}c}{--}& \multicolumn{1}{>{\columncolor{lightred}}c}{--}      
    &\multicolumn{1}{>{\columncolor{lightred}}c}{--} & \multicolumn{1}{>{\columncolor{lightred}}c}{--} & \multicolumn{1}{>{\columncolor{lightred}}c}{--} \\

    &\multicolumn{1}{>{\columncolor{lightblue}}c|}{\text{ FCNV2 }}  
    &\multicolumn{1}{>{\columncolor{lightblue}}c}{0.1288} & \multicolumn{1}{>{\columncolor{lightblue}}c}{0.2452} & \multicolumn{1}{>{\columncolor{lightblue}}c}{0.4943}& \multicolumn{1}{>{\columncolor{lightblue}}c}{0.3097}      
    &\multicolumn{1}{>{\columncolor{lightblue}}c}{--} & \multicolumn{1}{>{\columncolor{lightblue}}c}{0.1986} & \multicolumn{1}{>{\columncolor{lightblue}}c|}{0.4253} 
    &\multicolumn{1}{>{\columncolor{lightblue}}c}{0.1346} & \multicolumn{1}{>{\columncolor{lightblue}}c}{0.2757} & \multicolumn{1}{>{\columncolor{lightblue}}c}{0.5108}& \multicolumn{1}{>{\columncolor{lightblue}}c}{0.3141}      
    &\multicolumn{1}{>{\columncolor{lightblue}}c}{--} & \multicolumn{1}{>{\columncolor{lightblue}}c}{0.1735} & \multicolumn{1}{>{\columncolor{lightblue}}c}{0.4174} \\
    
    &\multicolumn{1}{>{\columncolor{lightred}}c|}{\text{ Pangu }}  
    &\multicolumn{1}{>{\columncolor{lightred}}c}{0.6510} & \multicolumn{1}{>{\columncolor{lightred}}c}{0.1458} & \multicolumn{1}{>{\columncolor{lightred}}c}{0.2283}& \multicolumn{1}{>{\columncolor{lightred}}c}{0.2155}      
    &\multicolumn{1}{>{\columncolor{lightred}}c}{--} & \multicolumn{1}{>{\columncolor{lightred}}c}{0.0557} & \multicolumn{1}{>{\columncolor{lightred}}c|}{0.2216} 
    &\multicolumn{1}{>{\columncolor{lightred}}c}{0.6812} & \multicolumn{1}{>{\columncolor{lightred}}c}{0.1448} & \multicolumn{1}{>{\columncolor{lightred}}c}{0.2486}& \multicolumn{1}{>{\columncolor{lightred}}c}{0.2381}      
    &\multicolumn{1}{>{\columncolor{lightred}}c}{--} & \multicolumn{1}{>{\columncolor{lightred}}c}{0.0574} & \multicolumn{1}{>{\columncolor{lightred}}c}{0.2138} \\

    &\multicolumn{1}{>{\columncolor{lightblue}}c|}{\text{ ClimaX }}  
    &\multicolumn{1}{>{\columncolor{lightblue}}c}{0.4051} & \multicolumn{1}{>{\columncolor{lightblue}}c}{0.0317} & \multicolumn{1}{>{\columncolor{lightblue}}c}{0.3999}& \multicolumn{1}{>{\columncolor{lightblue}}c}{0.1342}      
    &\multicolumn{1}{>{\columncolor{lightblue}}c}{0.4534} & \multicolumn{1}{>{\columncolor{lightblue}}c}{0.2494} & \multicolumn{1}{>{\columncolor{lightblue}}c|}{0.0935} 
    &\multicolumn{1}{>{\columncolor{lightblue}}c}{0.3881} & \multicolumn{1}{>{\columncolor{lightblue}}c}{0.0666} & \multicolumn{1}{>{\columncolor{lightblue}}c}{0.2905}& \multicolumn{1}{>{\columncolor{lightblue}}c}{0.1370}      
    &\multicolumn{1}{>{\columncolor{lightblue}}c}{0.4558} & \multicolumn{1}{>{\columncolor{lightblue}}c}{0.2683} & \multicolumn{1}{>{\columncolor{lightblue}}c}{0.1209} \\
    
    &\multicolumn{1}{>{\columncolor{lightred}}c|}{\text{ CirT}}  
    &\multicolumn{1}{>{\columncolor{lightred}}c}{0.0404} & \multicolumn{1}{>{\columncolor{lightred}}c}{0.1356} & \multicolumn{1}{>{\columncolor{lightred}}c}{0.0884}& \multicolumn{1}{>{\columncolor{lightred}}c}{0.0295}      
    &\multicolumn{1}{>{\columncolor{lightred}}c}{0.0952} & \multicolumn{1}{>{\columncolor{lightred}}c}{0.0404} & \multicolumn{1}{>{\columncolor{lightred}}c|}{0.0599} 
    &\multicolumn{1}{>{\columncolor{lightred}}c}{0.0592} & \multicolumn{1}{>{\columncolor{lightred}}c}{0.2005} & \multicolumn{1}{>{\columncolor{lightred}}c}{0.2462}& \multicolumn{1}{>{\columncolor{lightred}}c}{0.0189}      
    &\multicolumn{1}{>{\columncolor{lightred}}c}{0.0902} & \multicolumn{1}{>{\columncolor{lightred}}c}{0.0398} & \multicolumn{1}{>{\columncolor{lightred}}c}{0.0583} \\

    &\multicolumn{1}{>{\columncolor{lightblue}}c|}{\textbf{ TelePiT }}  
    &\multicolumn{1}{>{\columncolor{lightblue}}c}{\textbf{0.0315}} & \multicolumn{1}{>{\columncolor{lightblue}}c}{\textbf{0.0246}} & \multicolumn{1}{>{\columncolor{lightblue}}c}{\textbf{0.0195}}& \multicolumn{1}{>{\columncolor{lightblue}}c}{\textbf{0.0047}}      
    &\multicolumn{1}{>{\columncolor{lightblue}}c}{\textbf{0.0019}} & \multicolumn{1}{>{\columncolor{lightblue}}c}{\textbf{0.0015}} & \multicolumn{1}{>{\columncolor{lightblue}}c|}{\textbf{0.0072}} 
    &\multicolumn{1}{>{\columncolor{lightblue}}c}{\textbf{0.0295}} & \multicolumn{1}{>{\columncolor{lightblue}}c}{\textbf{0.0212}} & \multicolumn{1}{>{\columncolor{lightblue}}c}{\textbf{0.0191}}& \multicolumn{1}{>{\columncolor{lightblue}}c}{\textbf{0.0041}}      
    &\multicolumn{1}{>{\columncolor{lightblue}}c}{\textbf{0.0019}} & \multicolumn{1}{>{\columncolor{lightblue}}c}{\textbf{0.0021}} & \multicolumn{1}{>{\columncolor{lightblue}}c}{\textbf{0.0069}} \\

    \midrule
    \multirow{10}{*}{\rotatebox{90}{\textbf{SpecRec ($\downarrow$)}}}
    &\multicolumn{1}{>{\columncolor{lightred}}c|}{\text{ NCEP }}  
    &\multicolumn{1}{>{\columncolor{lightred}}c}{0.0670} & \multicolumn{1}{>{\columncolor{lightred}}c}{0.0617} & \multicolumn{1}{>{\columncolor{lightred}}c}{0.2431}& \multicolumn{1}{>{\columncolor{lightred}}c}{0.1405}      
    &\multicolumn{1}{>{\columncolor{lightred}}c}{--} & \multicolumn{1}{>{\columncolor{lightred}}c}{--} & \multicolumn{1}{>{\columncolor{lightred}}c|}{--} 
    &\multicolumn{1}{>{\columncolor{lightred}}c}{0.0672} & \multicolumn{1}{>{\columncolor{lightred}}c}{0.0604} & \multicolumn{1}{>{\columncolor{lightred}}c}{0.2426}& \multicolumn{1}{>{\columncolor{lightred}}c}{0.1369}      
    &\multicolumn{1}{>{\columncolor{lightred}}c}{--} & \multicolumn{1}{>{\columncolor{lightred}}c}{--} & \multicolumn{1}{>{\columncolor{lightred}}c}{--} \\

    &\multicolumn{1}{>{\columncolor{lightblue}}c|}{\text{ UKMO }}  
    &\multicolumn{1}{>{\columncolor{lightblue}}c}{0.0410} & \multicolumn{1}{>{\columncolor{lightblue}}c}{0.0382} & \multicolumn{1}{>{\columncolor{lightblue}}c}{0.0486}& \multicolumn{1}{>{\columncolor{lightblue}}c}{0.0308}      
    &\multicolumn{1}{>{\columncolor{lightblue}}c}{--} & \multicolumn{1}{>{\columncolor{lightblue}}c}{--} & \multicolumn{1}{>{\columncolor{lightblue}}c|}{--} 
    &\multicolumn{1}{>{\columncolor{lightblue}}c}{0.0399} & \multicolumn{1}{>{\columncolor{lightblue}}c}{0.0393} & \multicolumn{1}{>{\columncolor{lightblue}}c}{0.0471}& \multicolumn{1}{>{\columncolor{lightblue}}c}{0.0304}      
    &\multicolumn{1}{>{\columncolor{lightblue}}c}{--} & \multicolumn{1}{>{\columncolor{lightblue}}c}{--} & \multicolumn{1}{>{\columncolor{lightblue}}c}{--} \\
    
    &\multicolumn{1}{>{\columncolor{lightred}}c|}{\text{ CMA }}  
    &\multicolumn{1}{>{\columncolor{lightred}}c}{0.0393} & \multicolumn{1}{>{\columncolor{lightred}}c}{0.0517} & \multicolumn{1}{>{\columncolor{lightred}}c}{0.0487}& \multicolumn{1}{>{\columncolor{lightred}}c}{0.0155}      
    &\multicolumn{1}{>{\columncolor{lightred}}c}{--} & \multicolumn{1}{>{\columncolor{lightred}}c}{--} & \multicolumn{1}{>{\columncolor{lightred}}c|}{--} 
    &\multicolumn{1}{>{\columncolor{lightred}}c}{0.0395} & \multicolumn{1}{>{\columncolor{lightred}}c}{0.0501} & \multicolumn{1}{>{\columncolor{lightred}}c}{0.0388}& \multicolumn{1}{>{\columncolor{lightred}}c}{0.0183}      
    &\multicolumn{1}{>{\columncolor{lightred}}c}{--} & \multicolumn{1}{>{\columncolor{lightred}}c}{--} & \multicolumn{1}{>{\columncolor{lightred}}c}{--} \\
    
    &\multicolumn{1}{>{\columncolor{lightblue}}c|}{\text{ ECMWF }}  
    &\multicolumn{1}{>{\columncolor{lightblue}}c}{0.1266} & \multicolumn{1}{>{\columncolor{lightblue}}c}{0.0266} & \multicolumn{1}{>{\columncolor{lightblue}}c}{0.0458}& \multicolumn{1}{>{\columncolor{lightblue}}c}{0.0791}      
    &\multicolumn{1}{>{\columncolor{lightblue}}c}{--} & \multicolumn{1}{>{\columncolor{lightblue}}c}{--} & \multicolumn{1}{>{\columncolor{lightblue}}c|}{--} 
    &\multicolumn{1}{>{\columncolor{lightblue}}c}{0.0532} & \multicolumn{1}{>{\columncolor{lightblue}}c}{0.0248} & \multicolumn{1}{>{\columncolor{lightblue}}c}{0.0461}& \multicolumn{1}{>{\columncolor{lightblue}}c}{0.0393}      
    &\multicolumn{1}{>{\columncolor{lightblue}}c}{--} & \multicolumn{1}{>{\columncolor{lightblue}}c}{--} & \multicolumn{1}{>{\columncolor{lightblue}}c}{--} \\

    &\multicolumn{1}{>{\columncolor{lightred}}c|}{\text{ GraphCast }}  
    &\multicolumn{1}{>{\columncolor{lightred}}c}{0.0281} & \multicolumn{1}{>{\columncolor{lightred}}c}{0.0553} & \multicolumn{1}{>{\columncolor{lightred}}c}{0.0260}& \multicolumn{1}{>{\columncolor{lightred}}c}{0.0269}      
    &\multicolumn{1}{>{\columncolor{lightred}}c}{--} & \multicolumn{1}{>{\columncolor{lightred}}c}{--} & \multicolumn{1}{>{\columncolor{lightred}}c|}{--} 
    &\multicolumn{1}{>{\columncolor{lightred}}c}{--} & \multicolumn{1}{>{\columncolor{lightred}}c}{--} & \multicolumn{1}{>{\columncolor{lightred}}c}{--}& \multicolumn{1}{>{\columncolor{lightred}}c}{--}      
    &\multicolumn{1}{>{\columncolor{lightred}}c}{--} & \multicolumn{1}{>{\columncolor{lightred}}c}{--} & \multicolumn{1}{>{\columncolor{lightred}}c}{--} \\

    &\multicolumn{1}{>{\columncolor{lightblue}}c|}{\text{ FCNV2 }}  
    &\multicolumn{1}{>{\columncolor{lightblue}}c}{0.0410} & \multicolumn{1}{>{\columncolor{lightblue}}c}{0.0555} & \multicolumn{1}{>{\columncolor{lightblue}}c}{0.0760}& \multicolumn{1}{>{\columncolor{lightblue}}c}{0.0645}      
    &\multicolumn{1}{>{\columncolor{lightblue}}c}{--} & \multicolumn{1}{>{\columncolor{lightblue}}c}{0.0468} & \multicolumn{1}{>{\columncolor{lightblue}}c|}{0.0719} 
    &\multicolumn{1}{>{\columncolor{lightblue}}c}{0.0425} & \multicolumn{1}{>{\columncolor{lightblue}}c}{0.0576} & \multicolumn{1}{>{\columncolor{lightblue}}c}{0.0753}& \multicolumn{1}{>{\columncolor{lightblue}}c}{0.0643}      
    &\multicolumn{1}{>{\columncolor{lightblue}}c}{--} & \multicolumn{1}{>{\columncolor{lightblue}}c}{0.0445} & \multicolumn{1}{>{\columncolor{lightblue}}c}{0.0708} \\
    
    &\multicolumn{1}{>{\columncolor{lightred}}c|}{\text{ Pangu }}  
    &\multicolumn{1}{>{\columncolor{lightred}}c}{0.1129} & \multicolumn{1}{>{\columncolor{lightred}}c}{0.0399} & \multicolumn{1}{>{\columncolor{lightred}}c}{0.0534}& \multicolumn{1}{>{\columncolor{lightred}}c}{0.0521}      
    &\multicolumn{1}{>{\columncolor{lightred}}c}{--} & \multicolumn{1}{>{\columncolor{lightred}}c}{0.0255} & \multicolumn{1}{>{\columncolor{lightred}}c|}{0.0565} 
    &\multicolumn{1}{>{\columncolor{lightred}}c}{0.1196} & \multicolumn{1}{>{\columncolor{lightred}}c}{0.0395} & \multicolumn{1}{>{\columncolor{lightred}}c}{0.0567}& \multicolumn{1}{>{\columncolor{lightred}}c}{0.0558}      
    &\multicolumn{1}{>{\columncolor{lightred}}c}{--} & \multicolumn{1}{>{\columncolor{lightred}}c}{0.0262} & \multicolumn{1}{>{\columncolor{lightred}}c}{0.0559} \\

    &\multicolumn{1}{>{\columncolor{lightblue}}c|}{\text{ ClimaX }}  
    &\multicolumn{1}{>{\columncolor{lightblue}}c}{0.0862} & \multicolumn{1}{>{\columncolor{lightblue}}c}{0.0194} & \multicolumn{1}{>{\columncolor{lightblue}}c}{0.1033}& \multicolumn{1}{>{\columncolor{lightblue}}c}{0.0528}      
    &\multicolumn{1}{>{\columncolor{lightblue}}c}{0.0794} & \multicolumn{1}{>{\columncolor{lightblue}}c}{0.0603} & \multicolumn{1}{>{\columncolor{lightblue}}c|}{0.0336} 
    &\multicolumn{1}{>{\columncolor{lightblue}}c}{0.0860} & \multicolumn{1}{>{\columncolor{lightblue}}c}{0.0346} & \multicolumn{1}{>{\columncolor{lightblue}}c}{0.0790}& \multicolumn{1}{>{\columncolor{lightblue}}c}{0.0523}      
    &\multicolumn{1}{>{\columncolor{lightblue}}c}{0.0796} & \multicolumn{1}{>{\columncolor{lightblue}}c}{0.0628} & \multicolumn{1}{>{\columncolor{lightblue}}c}{0.0406} \\
    
    &\multicolumn{1}{>{\columncolor{lightred}}c|}{\text{ CirT}}  
    &\multicolumn{1}{>{\columncolor{lightred}}c}{0.0189} & \multicolumn{1}{>{\columncolor{lightred}}c}{0.0320} & \multicolumn{1}{>{\columncolor{lightred}}c}{0.0290}& \multicolumn{1}{>{\columncolor{lightred}}c}{0.0175}      
    &\multicolumn{1}{>{\columncolor{lightred}}c}{0.0339} & \multicolumn{1}{>{\columncolor{lightred}}c}{0.0205} & \multicolumn{1}{>{\columncolor{lightred}}c|}{0.0233} 
    &\multicolumn{1}{>{\columncolor{lightred}}c}{0.0241} & \multicolumn{1}{>{\columncolor{lightred}}c}{0.0361} & \multicolumn{1}{>{\columncolor{lightred}}c}{0.0445}& \multicolumn{1}{>{\columncolor{lightred}}c}{0.0141}      
    &\multicolumn{1}{>{\columncolor{lightred}}c}{0.0330} & \multicolumn{1}{>{\columncolor{lightred}}c}{0.0202} & \multicolumn{1}{>{\columncolor{lightred}}c}{0.0231} \\

    &\multicolumn{1}{>{\columncolor{lightblue}}c|}{\textbf{ TelePiT }}  
    &\multicolumn{1}{>{\columncolor{lightblue}}c}{\textbf{0.0157}} & \multicolumn{1}{>{\columncolor{lightblue}}c}{\textbf{0.0159}} & \multicolumn{1}{>{\columncolor{lightblue}}c}{\textbf{0.0162}}& \multicolumn{1}{>{\columncolor{lightblue}}c}{\textbf{0.0073}}      
    &\multicolumn{1}{>{\columncolor{lightblue}}c}{\textbf{0.0044}} & \multicolumn{1}{>{\columncolor{lightblue}}c}{\textbf{0.0039}} & \multicolumn{1}{>{\columncolor{lightblue}}c|}{\textbf{0.0077}} 
    &\multicolumn{1}{>{\columncolor{lightblue}}c}{\textbf{0.0153}} & \multicolumn{1}{>{\columncolor{lightblue}}c}{\textbf{0.0147}} & \multicolumn{1}{>{\columncolor{lightblue}}c}{\textbf{0.0162}}& \multicolumn{1}{>{\columncolor{lightblue}}c}{\textbf{0.0069}}      
    &\multicolumn{1}{>{\columncolor{lightblue}}c}{\textbf{0.0044}} & \multicolumn{1}{>{\columncolor{lightblue}}c}{\textbf{0.0044}} & \multicolumn{1}{>{\columncolor{lightblue}}c}{\textbf{0.0078}} \\
    
    \bottomrule
  \end{tabular}
\let\everydisplay\oldeverydisplay
\let\everymath\oldeverymath
\end{table*}

\begin{figure*}[t]
    \centering
    \includegraphics[width=1\linewidth]{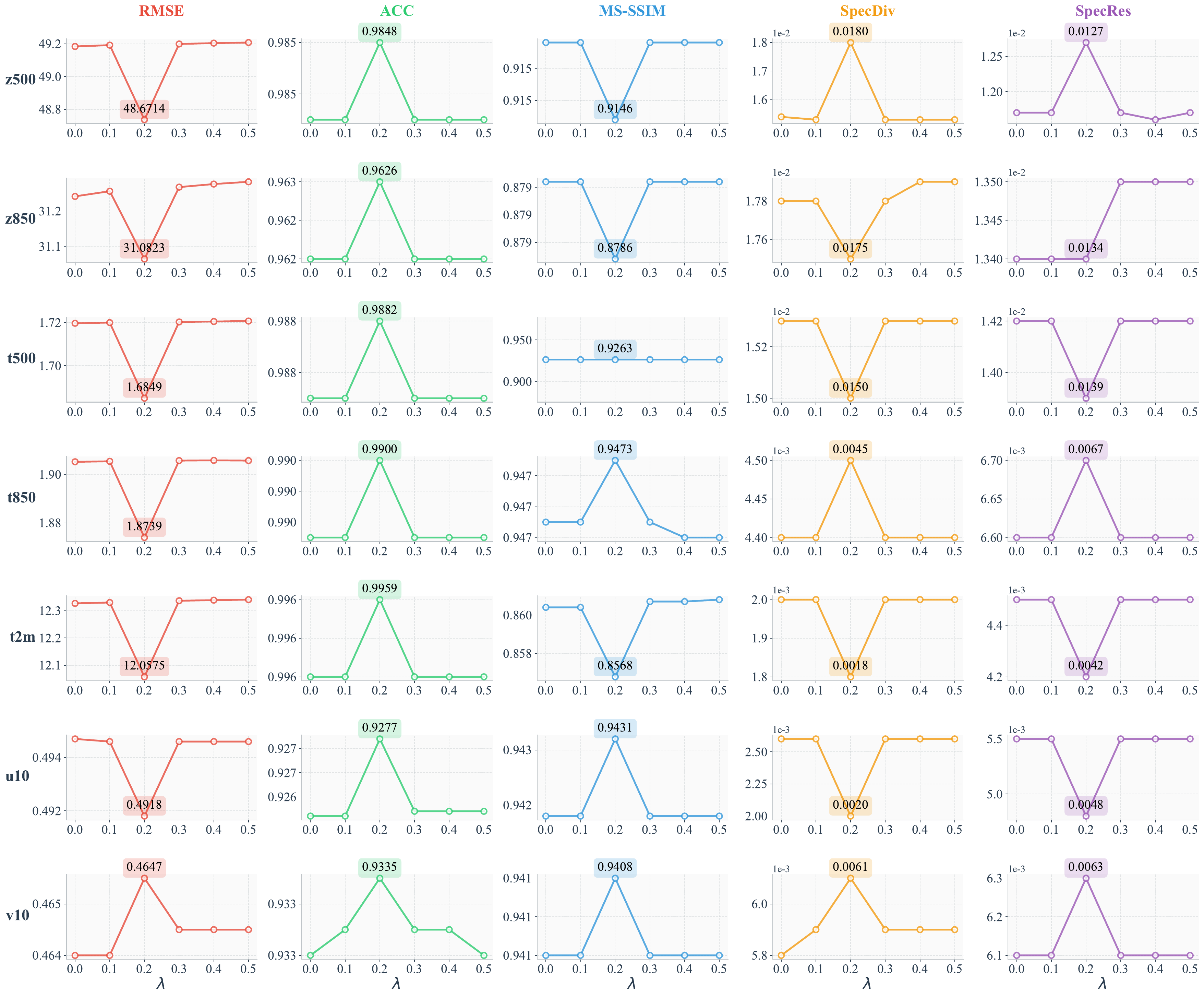}
    \caption{Parameter sensitivity analysis of teleconnection coefficient $\lambda$ across different key variables and all evaluation metrics on Weeks 3-4.
    The parameter $\lambda$ controls the influence of teleconnection patterns in the attention mechanism (Equation 8). Most variables achieve optimal performance at $\lambda$ = 0.2, indicating that moderate incorporation of atmospheric teleconnection patterns enhances model performance. RMSE values generally reach minimum at $\lambda$ = 0.2, while ACC values peak at the same point, suggesting an optimal balance between local atmospheric dynamics and large-scale teleconnection patterns for S2S forecasting accuracy.}
    \label{appendix_fig:lamda}
\end{figure*}

\vspace{0.2cm}



\end{document}